\newcommand{\A}{\mathcal{A}} 
 \newcommand{\D}{\mathcal{D}}
\newcommand{\E}{\mathcal{E}} \newcommand{\F}{\mathcal{F}}
\newcommand{\I}{\ensuremath{\mathcal{I}}\xspace}
\newcommand{\J}{\mathcal{J}}
\newcommand{\K}{\ensuremath{\mathcal{K}}\xspace}
\renewcommand{\L}{\mathcal{L}}
\newcommand{\M}{\ensuremath{\mathcal{M}}\xspace}
\newcommand{\N}{\ensuremath{\mathcal{N}}\xspace}
\renewcommand{\S}{\mathcal{S}}
\newcommand{\T}{\ensuremath{\mathcal{T}}\xspace}
 \newcommand{\V}{\mathcal{V}}
\newcommand{\defterm}[1]{\mbox{\underline{\it\smash{#1}\vphantom{\lower.1ex\hbox
{x}}}}}
\newcommand{\set}[1]{\{#1\}}                      
\newcommand{\card}[1]{|#1|}                         
\newcommand{\bigset}[1]{\Bigl\{#1\Bigr\}}
\newtheorem{theorem}{Theorem}[section]
\newtheorem{lemma}[theorem]{Lemma}
\newtheorem{proposition}[theorem]{Proposition}
\newtheorem{corollary}[theorem]{Corollary}
\theoremstyle{definition}
\declaretheoremstyle[
  spaceabove=6pt, spacebelow=6pt,
  postheadspace=1em,
  headfont=\normalfont\bfseries,
  notefont=\bfseries, notebraces={(}{)},
  bodyfont=\normalfont,
  qed=\qedboxfull,
  sibling=theorem
  ]{examplestyle}
\declaretheorem[style=examplestyle]{example}
\newcommand{\qedfull}{\hfill{\qedboxfull}
  \ifdim\lastskip<\medskipamount \removelastskip\penalty55\medskip\fi}
\newcommand{\qedboxfull}{\vrule height 5pt width 5pt depth 0pt}
\newcommand{\quotes}[1]{``#1''\xspace} 
\newcommand{\eat}[1]{}  
\renewcommand{\phi}{\varphi}
\newcommand{\andauthor}{and\xspace}
\newcommand{\dom}{\Delta\xspace} 
\newcommand{\Mod}{\textit{Mod}}    
\newcommand{\dist}{\textit{dist}}  
\newcommand{\nptime}{\textsf{\small NP}\xspace}
\newcommand{\conptime}{\textsf{\small coNP}\xspace}
\newcommand{\tmodels}{\models_\calt\xspace}
\newcommand{\tequiv}{\equiv_\calt\xspace}
\newcommand{\EP}[1]{\textbf{E{#1}}\xspace}
\newcommand{\CP}[1]{\textbf{C{#1}}\xspace}
\newcommand{\argmin}{\mathop{\mathrm{\arg \min}}}
\newcommand{\loc}{\ensuremath{\mathbb{L}}\xspace}  
\newcommand{\glob}{\ensuremath{\mathbb{G}}\xspace} 
\newcommand{\ats}{a}                   
\newcommand{\sbs}{s}                   
\newcommand{\setin}{\incl}             
\newcommand{\cardin}{\#}           
\newcommand{\BS}{bold semantics\xspace}
\newcommand{\upop}{\circ}       
\newcommand{\controp}{\bullet}		
\newcommand{\mdalop}{\upop_{M'}}  
\newcommand{\sdalop}{\upop_S}    
\newcommand{\upd}[3]{ #1 \upop_{#3} #2}    
\newcommand{\lupd}[2]{\upd{#1}{#2}{\loc}}    
\newcommand{\WIDTIO}{\textit{WIDTIO}}
\newcommand{\CrossP}{\textit{CP}}
\newcommand{\contr}[3]{\ensuremath{#1 \controp_{#3} #2}\xspace}
\newcommand{\lcontr}[2]{\contr{#1}{#2}{\loc}}
\newcommand{\gcontr}[2]{\contr{#1}{#2}{\glob}}
\newcommand{\algfont}[1]{\ensuremath{\mathtt{#1}}\xspace}
\newcommand{\cl}{\textit{cl}\xspace}  
\newcommand{\tcl}{\textit{cl}_\calt\xspace}  
\newcommand{\BE}{\algfont{BoldExpansion}}    
\newcommand{\BC}{\algfont{BoldContraction}}  
\newcommand{\FE}{\algfont{FastExpansion}}    
\newcommand{\FC}{\algfont{FastContraction}}    
\newcommand{\FUNCT}[1]{(\mathsf{funct}\; #1)}
\newcommand{\ISA}{\sqsubseteq}
\newcommand{\dllite}{\textit{DL-Lite}\xspace}
\newcommand{\dlcore}{\textit{DL-Lite}$_{\mathit{core}}$\xspace}
\newcommand{\dlf}{\textit{DL-Lite}$_{\cal F}$\xspace}
\newcommand{\dlfr}{\textit{DL-Lite}$_{\cal FR}$\xspace}
\newcommand{\incl}{\subseteq}
\newcommand{\calt}{\mathcal{T}}
\newcommand{\SIG}{\Sigma}          
\newcommand{\KEX}{\K_{\textit{ex}}}
\newcommand{\concept}[1]{\mathtt{#1}}
\newcommand{\constant}[1]{\texttt{#1}}
\newcommand{\mary}{\constant{mary}}
\newcommand{\john}{\constant{john}}
\newcommand{\sam}{\constant{sam}}
\newcommand{\adam}{\constant{adam}}
\newcommand{\bob}{\constant{bob}}
\newcommand{\carl}{\constant{carl}}
\newcommand{\wife}{\concept{Wife}}
\newcommand{\empwife}{\concept{EmpWife}}
\newcommand{\priest}{\concept{Priest}}
\newcommand{\single}{\concept{Bachelor}}
\newcommand{\cleric}{\concept{Cleric}}
\newcommand{\minister}{\concept{Minister}}
\newcommand{\renter}{\concept{Renter}}
\newcommand{\hashusband}{\concept{HasHusband}}
\newcommand{\confa}{\mathit{CA}}
\newcommand{\eset}{\emptyset}              
\newcommand{\col}{\colon}                  
\newcommand{\domain}{\Delta}                  
\begin{document}

\begin{frontmatter}

\title{On Expansion and Contraction of DL-Lite Knowledge Bases\\ (Technical Report)}

\author[ocado]{Dmitriy Zheleznyakov}
\ead{d.zheleznyakov@ocado.com}

\author[oxf]{Evgeny Kharlamov}
\ead{evgeny.kharlamov@cs.ox.ac.uk}

\author[fub]{Werner Nutt}
\ead{Werner.Nutt@unibz.it}

\author[fub]{Diego Calvanese}
\ead{Diego.Calvanese@unibz.it}

\address[ocado]{Ocado Technology, Hatfield AL10 9UL, United Kingdom.}

\address[oxf]{University of Oxford, Department of Computer Science, Wolfson Building, Parks Road, OX1 3QD, Oxford, UK.}

\address[fub]{Free University of Bozen-Bolzano, Faculty of Computer Science,
              Piazza Domenicani 3,
              39100 Bolzano, Italy.}

\begin{abstract}
  Knowledge bases (KBs) are not static entities: new information constantly
  appears and some of the previous knowledge becomes obsolete. In order to
  reflect this evolution of knowledge, KBs should be expanded with the new
  knowledge and contracted from the obsolete one.  This problem is well-studied
  for propositional but much less for first-order KBs. In this work we
  investigate knowledge expansion and contraction for KBs expressed in \dllite,
  a family of description logics (DLs) that underlie the tractable fragment
  OWL\,2\,QL of the Web Ontology Language OWL\,2. We start with a novel
  knowledge evolution framework and natural postulates that evolution should
  respect, and compare our postulates to the well-established AGM
  postulates. We then review well-known model and formula-based approaches for
  expansion and contraction for propositional theories and show how they can be
  adapted to the case of \dllite. In particular, we show intrinsic limitations
  of model-based approaches: besides the fact that some of them do not respect
  the postulates we have established, they ignore the structural properties of
  KBs. This leads to undesired properties of evolution results: evolution of
  \dllite KBs cannot be captured in \dllite. Moreover, we show that well-known
  formula-based approaches are also not appropriate for \dllite expansion and
  contraction: they either have a high complexity of computation, or they
  produce logical theories that cannot be expressed in \dllite. Thus, we
  propose a novel formula-based approach that respects our principles and for
  which evolution is expressible in \dllite. For this approach we also propose
  polynomial time deterministic algorithms to compute evolution of \dllite KBs
  when evolution affects only factual data.
\end{abstract}

\begin{keyword}
	Knowledge Evolution \sep Knowledge Expansion \sep Knowledge  Contraction \sep DL-Lite \sep Semantics
	\sep Complexity \sep Algorithms.
\end{keyword}

\end{frontmatter}


\section{Introduction}
\label{sec:intro}

Description Logics (DLs) provide excellent mechanisms for representing
structured knowledge as \emph{knowledge bases} (KBs), and as such they constitute the foundations for the
various variants of OWL\,2, the standard ontology language of the Semantic
Web.\footnote{\url{http://www.w3.org/TR/owl2-overview/}}  KBs have
traditionally been used for modeling at the intensional level the static and
structural aspects of application domains~\cite{BoBr03}.
Recently, however, the scope of KBs has broadened, and they are now used
also for providing support in the maintenance and evolution phase of
information systems~\cite{DBLP:conf/www/MartinBMMPSMSS07}.
Moreover, KBs are considered to be the premium
mechanism through which services operating in a Web context can be accessed,
both by human users and by other services~\cite{DBLP:conf/vldb/BerardiCGHM05,DBLP:journals/expert/McIlraithSZ01,DBLP:journals/ws/KuterSPNH05}.  Supporting all these activities,
makes it necessary to equip DL systems with additional kinds of inference servicies
that go beyond the traditional ones of satisfiability, subsumption, and query
answering provided by current DL inference engines.  A critical one, and
the subject of this paper, is that of \emph{KB evolution}~\cite{FMKPA08}.

In KB evolution the task is to incorporate new knowledge $\N$ into an existing KB $\K$,
or to delete some obsolete knowledge $\N$ from $\K$,
in order to take into account changes that occur in the underlying domain of interest~\cite{KaMe91}.
The former evolution task is typically referred to as knowledge \emph{expansion} and the latter as \emph{contraction}.
In general, the new (resp., obsolete) knowledge is represented by a set
of formulas denoting those properties that should be true (resp., false) after the ontology has evolved.
In the case where the new knowledge interacts in an undesirable
way with the knowledge in the ontology, e.g., by causing the ontology or relevant parts of it to become unsatisfiable, the new knowledge cannot simply be added to the ontology.
Instead, suitable changes need to be made in the
ontology so as to avoid the undesirable interaction, e.g., by deleting parts of
the ontology that conflict with the new knowledge.
Different choices are
possible, corresponding to different semantics for knowledge evolution
\cite{AbGr85,Wins90,KaMe91,EiGo92,Flou06,QiDu09}.

The main two types of semantics that were proposed for
the case of propositional knowledge
are \emph{model-based}~\cite{Wins90} and \emph{formula-based}~\cite{EiGo92}.
In model-based semantics the idea is to resolve the undesirable interaction at the level of models of $\K$ and $\N$.
For example, in model-based expansion the result of evolution are those models of $\N$ that are minimally distant from the ones of~$\K$,
where
a suitable notion of distance needs to be chosen, possibly depending on the
application.
In formula-based semantics the idea is to do evolution at the level of
the deductive closure of the formulae from~$\K$ and~$\N$.
Since many (possibly counter-intuitive) semantics can be defined within the model or formula-based paradigm,
a number of evolution \emph{postulates}~\cite{KaMe91,EiGo92} have been
proposed and they define natural properties a semantics should
respect.  It is thus common to verify for each evolution semantics
whether it satisfies the postulates.

%

For the case of propositional knowledge,
there is a thorough understanding of
semantics as well as of computational properties
of both expansion and contraction.
The situation is however much less clear when it comes to
DL KBs, which are decidable first-order logic theories.
Differently from the propositional case, in general they
admit infinite sets of models
and infinite deductive closures.
Moreover, going from propositional letters to first-order predicates and
  interpretations, on the one hand calls for novel postulates underlying the
  semantics of evolution, and on the other hand broadens the spectrum of
  possibilities for defining such semantics.
A number of attempts have been made to adapt approaches for the
evolution of propositional knowledge to the case of DLs, cf.~\cite{Flou06,DLPR09,QiDu09,DBLP:journals/ai/LiuLMW11}
(see also the detailed discussion of related work in Section~\ref{sec:related-work}).
However, there is no thorough understanding of evolution from the foundational
point of view even for DLs with
the most favorable computational properties, such as the logics of the
\dllite~\cite{CDLLR07} and $\E\L$~\cite{BaaderBL:2005ijcai} families, which are
at the basis of two tractable fragments of OWL\,2.

In this work we address this problem and propose an exhaustive study of evolution for \dllite.
In particular, we address the problem considering three dimensions:
\begin{enumerate}
        \item knowledge evolution tasks: we study how knowledge can be \emph{expanded} or \emph{contracted};
        \item type of evolution semantics: we study \emph{model-based} and
          \emph{formula-based} semantics;
        \item evolution granularity:
        we study when evolution affects the \emph{TBox} (for terminological
        knowledge), or the \emph{ABox} (for assertional knowledge),
        or \emph{both} of them.
\end{enumerate}
We provide the following contributions:
\begin{itemize}
        \item We propose a knowledge expansion and contraction framework that accounts for TBox, ABox, and general KB evolution (Section~\ref{sec:formalism}).
        \item We propose natural evolution postulates and show how they are related to the well-known AGM postulates (Sections~\ref{sec:postulates} and~\ref{sec:connection-to-agm}).
        \item We show how one can rigorously extend propositional
          model-based evolution semantics to the first-order case,
          defining a five-dimensional space\footnote{The dimensions are (see Section~\ref{sec:mbas} for details):
         \begin{inparaenum}[\itshape (1)]
         \item ABox vs.\ TBox vs.\ general evolution;
         \item expansion vs.\ contraction;
         \item global vs.\ local;
         \item symbol vs.\ atom;
         \item set inclusion vs.\ cardinality.
         \end{inparaenum}} of possible options,
          comprising $3\cdot 2^4$ model-based evolution semantics for DLs
          that essentially
          include all previously proposed model-based approaches for DLs
          (Section~\ref{sec:mbas}).%
          \footnote{Note that our proposal for model-based semantics works for
           \emph{any} description logic and is not specific for \dllite.}
          For most of these semantics and the case of \dllite KBs we prove
          negative expressibility results: in general evolution of \dllite KBs
          cannot be expressed as a \dllite KB.

        \item
        We investigate formula-based evolution for \dllite.  In
        particular, for known formula-based
        evolution approaches~\cite{EiGo92} we show intractability
                of computing evolution results for \dllite KBs.
        Moreover, we propose a practical though non-deterministic
        approach for general KB evolution, which turns out to
        become deterministic for ABox evolution; for both cases
        we develop practical algorithms (Section~\ref{sec:mbas}).
\end{itemize}

\paragraph{Delta from Previous Publications}

This article is based on our conference publication~\cite{CalvaneseKNZ:2010iswc}
while it significantly extends it in several important directions.
First,~\cite{CalvaneseKNZ:2010iswc} only considered knowledge
expansion, and thus all results on contraction in the current
article are new.
Second, in~\cite{CalvaneseKNZ:2010iswc} we had negative evolution
results for only a few out of the $3\cdot 2^3$ knowledge expansion
semantics, while here we show negative results for all but three of them.
Third, we strengthen the \conptime-hardness results for
the WIDTIO formula-based semantics presented
in~\cite{CalvaneseKNZ:2010iswc}.
Fourth, in contrast to the current
article,~\cite{CalvaneseKNZ:2010iswc} did not discuss how AGM
postulates as well as various formula-based semantics are related to
our postulates.
Finally, due to limited space, we did not include
in~\cite{CalvaneseKNZ:2010iswc} most of the proofs, while in this
article we included many more proofs, details, and explanations.

\paragraph{Structure of the Paper}

In Section~\ref{sec:Preliminaries}, we review the definition of \dllite.
In Section~\ref{sec:framework}, we present our evolution framework, including a comparison of our and the AGM postulates.
Then, in Section~\ref{sec:mbas}, we generalize model-based semantics for the evolution of propositional KBs to the first-order level
and investigate whether these semantics can be captured with the expressive means of \dllite.
In Section~\ref{sec:FormulaBasedApproaches}, we apply formula-based approaches to
  \dllite evolution and study their computational properties.
In Section~\ref{sec:AndABoxEvolution}, we show that for ABox evolution all formula-based semantics coincide for \dllite and
that this task is computationally feasible.
In Section~\ref{sec:related-work}, we discuss related work, and
in Section~\ref{sec:conclusion} we conclude the article and discuss future
work.

\section{Preliminaries}
\label{sec:Preliminaries}

We now introduce some notions of Description Logics (DLs)
        that are needed for understanding the concepts
        used in this paper;
        more details can be found in~\cite{BCMNP03}.

A DL \emph{knowledge base} (KB) $\K= \T\cup\A$ is the union
  of two sets of assertions (or axioms),
  those representing the \emph{intensional-level} of the KB,
  that is, the general knowledge, and constituting the \emph{TBox} $\T$,
  and those providing information on the \emph{instance-level} of the KB, and
  constituting the \emph{ABox} $\A$.
In our work we consider the \dllite family \cite{CDLLR07,PLCD*08} of DLs,
which is at the basis of the tractable fragment
OWL\,2\,QL~\cite{w3c:owl2profiles} of OWL\,2~\cite{GrauHMPPS08,w3c:owl2}.

All logics of the \dllite family allow for constructing \emph{basic concepts}
$B$, (complex) \emph{concepts} $C$, and (complex) \emph{roles} $R$ according to
the following grammar:
\[
  B ::= A \mid  \exists R, \qquad
  C ::= B \mid  \lnot B, \qquad
  R ::= P \mid  P^-.
\]
where $A$ denotes an \emph{atomic concept} and $P$ an \emph{atomic role},
which are just names.

A \dlcore \emph{TBox} consists of \emph{concept inclusion assertions} of the
form
\[
  B \ISA C.
\]
\dlf extends \dlcore by allowing in a TBox also for
\emph{functionality assertions} of the form
\[
  \FUNCT{R}.
\]
\dlfr allows in addition for \emph{role inclusion assertions} of the form
\[
  R_1 \ISA R_2,
\]
in such a way that if $R_1 \ISA P_2$ or $R_1 \ISA P_2^-$
appears in a TBox $\T$, then neither $\FUNCT{P_2}$
nor $\FUNCT{P_2^-}$ appears in $\T$.
This syntactic restriction is necessary to keep reasoning in the logic
tractable \cite{PLCD*08}.

ABoxes in \dlcore, \dlf, and \dlfr consist of membership assertions of the form
\[
  B(a) \qquad \text{or} \qquad P(a,b),
\]
where $a$ and $b$ denote constants.

In the following, when we write \dllite without a subscript, we mean \emph{any}
of the three logics introduced above.

The semantics of \dllite KBs is given in the standard way, using first order
interpretations, all over the same infinite countable domain $\Delta$.
An \emph{interpretation} $\I$ is a (partial) function~$\cdot^\I$ that assigns to each
concept $C$ a subset $C^\I$ of $\Delta$, and to each role $R$ a binary relation
$R^\I$ over $\Delta$ in such a way that
\begin{align*}
  (P^-)^\I &~=~ \{(b,a)\mid (a,b) \in P^\I\},\\
  (\exists R)^\I &~=~ \{a\mid \exists b.(a,b) \in R^\I \},\\
  (\lnot B)^\I &~=~ \Delta\setminus B^\I.
\end{align*}
%
%
We assume that $\Delta$ contains the constants and that $a^\I=a$, for each
constant $a$, i.e., we adopt \emph{standard names}.
Alternatively, we view an interpretation as a set of atoms and
say that $A(a)\in\I$ iff $a\in A^\I$, and that $P(a,b)\in\I$ iff
$(a,b)\in P^\I$.
An interpretation $\I$ is a \emph{model}
of a membership assertion $B(a)$ if $a\in B^\I$
and of $P(a,b)$ if $(a,b)\in P^\I$,
of an inclusion assertion $E_1\ISA E_2$ if $E_1^\I\incl E_2^\I$,
and of a functionality assertion $\FUNCT{R}$
if the relation $R^\I$ is a function, that is, for all $a,a_1,a_2\in\Delta$
we have that $\{(a,a_1),(a,a_2)\}\incl R^\I$ implies $a_1=a_2$.

As usual, we write $\I\models\alpha$ if $\I$ is a model of an assertion $\alpha$,
and $\I\models\K$ if $\I \models \alpha$ for each assertion $\alpha$ in $\K$.
We use $\Mod(\K)$ to denote the set of all models of $\K$.
A KB is \emph{satisfiable} if it has at least one model
and it is \emph{coherent}%
\footnote{Coherence is often called \emph{full satisfiability}.}
if for every atomic concept and atomic role $S$ occurring in $\K$
there is an $\I \in \Mod(\K)$ such that $S^\I \neq \eset$.
We use entailment, $\K\models\K'$, and equivalence, $\K\equiv\K'$, on KBs in
the standard sense.
Given a TBox $\T$, we say that an ABox $\A$ \emph{$\T$-entails} an ABox $\A'$,
denoted $\A\tmodels\A'$, if $\T\cup\A\models\A'$,
and that $\A$ is \emph{$\T$-equivalent} to $\A'$,
denoted $\A\tequiv\A'$, if $\A\tmodels\A'$ and $\A'\tmodels\A$.
The deductive \emph{closure of a TBox~$\T$}, denoted $\cl({\T})$, is the set of
all TBox assertions $\alpha$ such that $\T\models \alpha$.  Similarly, the
deductive \emph{closure of an ABox~$\A$} (w.r.t.\ a TBox~$\T$), denoted
$\cl_\T({\A})$, is the set of all ABox assertions $\alpha$ such that
$\T\cup\A\models \alpha$.
  It is easy to see that in \dllite, $\cl(\T)$ and $\cl_\T({\A})$ can be
computed in quadratic time in the size of $\T$ (and~$\A$).
In our work we assume that TBoxes and ABoxes are \emph{closed}, i.e., equal
to their deductive closure.

The \dllite family has nice computational properties,
  for example, KB satisfiability has polynomial-time complexity
  in the size of the TBox and logarithmic-space complexity in the size of the
  ABox~\cite{ACKZ09,PLCD*08}.


\section{Knowledge Expansion and Contraction Framework}
\label{sec:framework}

In this section, we first present our logical formalism of knowledge evolution,
then introduce our evolution postulates, and finally relate our postulates to
the well-known AGM postulates.

\subsection{Logical Formalism}
\label{sec:formalism}

Consider a setting
 in which we have a knowledge base $\K=(\T,\A)$
 developed by knowledge engineers.
The KB $\K$ needs to be modified and
 a knowledge base $\N$ contains
 information about the modification.
Intuitively, we are interested in two scenarios that can be
described as follows:
\begin{itemize}
        \item
                \K is missing information captured in \N,
                and this \emph{new} information \N
                should be incorporated in \K,
                that is, \K should be \emph{expanded} with \N.
        \item \K contains a modeling error, \N describes this error,
                and \K is to be \emph{contracted} by `extracting' \N from \K.
\end{itemize}


More practically, we want to develop \emph{evolution operators}
        for both expansion and contraction of knowledge bases
        that take $\K$ and $\N$ as input and return,
        preferably in \emph{polynomial time}, a \dllite KB $\K'$
        that captures the evolution,
        and which we call \emph{the evolution of $\K$ under $\N$}.
As described above, we consider two evolution scenarios:
\begin{itemize}
\item ontology \emph{expansion},
        when $\N = \N_e$ represents the information that \emph{should hold}
        in $\K' = \K'_e$, and
\item ontology \emph{contraction},
        when $\N = \N_c$ defines the information that \emph{should not hold}
        in~$\K' = \K'_c$.
\end{itemize}

Our general assumption about the framework is the following.
We assume that both pieces of the new information, $\N_e$ and $\N_c$,
        are ``prepared'' to evolution,
        which means that $\N_e$ is coherent and $\N_c$ does not include tautologies.
Indeed,
        if $\N_e$ is not coherent,
        this means that the information in $\N_e$ is not true
        and thus, before incorporating it into $\K$,
        it is necessary to resolve issues with $\N_e$ itself.
If $\N_c$ contains tautological axioms,
        then it is clearly impossible to retract this knowledge from $\K$.

Additionally, apart from
\begin{enumerate}
\item \emph{KB evolution}, as described above,
\end{enumerate}
we distinguish two additional, special types of evolution:
\begin{enumerate} \setcounter{enumi}{1}
\item \emph{TBox evolution}, where $\N$ consists of TBox assertions only, and
\item \emph{ABox evolution}, which satisfies the following conditions:
        \begin{compactitem}
        \item the TBox of $\K'$ should be equivalent to $\T$,
        \item $\N$ consists of ABox assertions only, and
        \item in the case of expansion, $\T \cup \N$ is coherent.
        \end{compactitem}
Intuitively, ABox evolution corresponds to the case where the TBox $\T$ of $\K$
        is developed by domain specialists,
        does not contain wrong information, and
        should be preserved,
        while $\N$ is a collection of facts.
\end{enumerate}

We now illustrate these definitions on the following example.


%
\begin{example}[Running Example]

Consider a KB where the
structural knowledge is that
wives (concept $\wife$) are exactly those individuals who have husbands (role $\hashusband$)
and that some wives are employed (concept $\empwife$).
Bachelors (concept $\single$) cannot be husbands.
Priests (concept $\priest$) are clerics (concept $\cleric$) and clerics are bachelors.
Both clerics and wives are receivers of rent subsidies (concept $\renter$).
We also know that
$\adam$ and $\bob$ are priests,
$\mary$ is a wife who is employed and her husband is $\john$.
Also, $\carl$ is a catholic minister (concept $\minister$).

This knowledge can be expressed in \dlfr by the KB $\KEX$,
consisting of the following TBox $\T$ and ABox $\A$:
{\small
\begin{align*}
  \T &= \{~
  \begin{array}[t]{@{}ll@{}}
    \wife \ISA \exists \hashusband,
    & \exists \hashusband \ISA \wife ,\\
    \empwife \ISA \wife,
    & \single \ISA \lnot \exists \hashusband^-,\\
    \priest \ISA \cleric,
    & \cleric \ISA \single,\\
    \cleric \ISA \renter,
    & \wife \ISA \renter ~\}
  \end{array}\\
  \A &= \{~
  \begin{array}[t]{@{}l@{}}
    \priest(\adam),\quad
    \priest(\bob),\quad
    \minister(\carl),\\
    \empwife(\mary),\quad
    \hashusband(\mary,\john) ~\}
  \end{array}
\end{align*}}

In the expansion scenario the new information $\N_e$
states that John is now a bachelor, that is, $\single(\john)$, and
that catholic ministers are superiors of some religious orders and hence
clerics, that is, $\minister \ISA \cleric$.
Therefore:
\[
  \N_e = \{~
        \single(\john),\quad
        \minister \ISA \cleric
        ~\}.
\]
In the contraction scenario, due to an economic crisis,
rent subsidies got cancelled for 
priests,
that is, $\N_c$ is
\[
        \N_c = \{~
         \priest \ISA \renter
        ~\}.
\]
Later on in the paper we will discuss how to incorporate such new
knowledge $\N_e$ and $\N_c$ into the example KB $\KEX$.
\end{example}


\subsection{Postulates for Knowledge Base Evolution}
\label{sec:postulates}
In the Semantic Web context,
update/revision and erasure/contraction~\cite{KaMe91,EiGo92},
        the classical understandings of ontology expansion and contraction, respectively,
        are too restrictive from the intuitive and formal perspective.
Indeed, on the one hand the `granularity' of knowledge changes when moving from propositional to Description Logics:
        the atomic statements of a DL, namely the ABox and TBox axioms,
        are more complex than the atoms of propositional logic.
On the other hand, a set of propositional formulas makes sense, intuitively,
        if it is satisfiable, while
                a KB can be satisfiable, but incoherent, that is,
        one or more concepts are necessarily empty.
Therefore, in the two following sections,
        we propose new postulates for expansion and contraction,
        to be adopted in the context of evolution on the Semantic Web.
%

\paragraph*{Framework Postulates}
The first two postulates describe the basic requirements of our framework.
The first one is that evolution
        (both expansion and contraction)
        should preserve coherence:
\begin{quote}
        \EP{1}: Expansion should preserve the coherence of the KB,
        that is, if $\K$ is coherent, then so is $\K'_e$.\\[2ex]
        \CP{1}: Contraction should not add any extraneous knowledge,
        that is, $\K \models \K'_c$.
\end{quote}
Observe that \CP{1} does not say explicitly that contraction should preserve coherence;
        the latter, however, is implied.
The next postulate formalises the idea
        that expansion should incorporate new knowledge:
\begin{quote}
        \EP{2}: Expansion should entail all new knowledge,
        that is, $\K'_e \models \N_e$.
\end{quote}
Unfortunately,
        there is no obvious way to say
        what a corresponding contraction postulate should be.
Indeed,
        the most straightforward idea would be to say that $\K'_c \not\models \N_c$,
        that is, there should exist a model of $\K'_c$ that is not a model of $\N_c$.
This requirement, however, leads to undesirable consequences
        as shown in the following example.

\begin{example}
Consider the KB consisting of the two axioms $A \ISA B$ and $C \ISA D$.
Assume that we have learnt that both axioms are false
        and therefore the new information $\N_c$ consists of these two axioms.
Observe that it is the case for both $\K'_1 = \set{A \ISA B}$ and $\K'_2 = \set{C \ISA D}$
        that $\K'_i \not\models \N_c$.
However, intuitively, neither of them should be a result of contraction
        since either KB entails a piece of false information.
\end{example}

The example suggests that
        we need to make sure that $\K'_c$ does not entail \emph{each} axiom of $\N_c$.
There are two alternatives:
\begin{quote}
        \CP{2}: Contraction should not entail any piece of the new knowledge,
        i.e., $\K'_c \not\models \alpha$ for all $\alpha \in \N_c$.\\[2ex]
        \CP{2$'$}: Contraction should not entail the disjunction of the new knowledge,
        that is, $\K'_c \not\models \alpha_1 \lor\cdots\lor \alpha_n$,
        where $\N_c = \set{\alpha_1, \ldots, \alpha_n}$.
\end{quote}
Note that in general \CP{2} is strictly weaker than \CP{2$'$}
       when $\N_c$ contains more than one axiom.
That is, \CP{2$'$} entails \CP{2}, while the converse is not always the
case\footnote{The converse holds, however, for \dllite.  This is a direct
 consequence of Theorem~\ref{lem:no-disjunction-dllite}.}.
In our work we will focus rather on~\CP{2}.
Note also that
        most of our negative results
        hold already for contraction where $\N_c$
        is a singleton and thus, when these two postulates coincide.

\paragraph*{Basic Properties Postulates}
The next postulates define the basic property that
        evolution operators should satisfy;
        namely, it states when no changes should be applied to the KB:
%
\begin{quote}
        \EP{3}: Expansion with old information should not affect the KB,
        that is, if $\K \models \N_e$, then $\K'_e \equiv \K$.\\[2ex]
        \CP{3}: Contraction with conflicting information should not affect the KB,
        that is,
        if $\K \not\models \alpha$ for each $\alpha \in \N_c$,
        then $\K'_c \equiv \K$.
\end{quote}
Observe that we can also define the postulate \CP{3$'$},
        which is an alternative to \CP{3}, but based on \CP{2$'$}.

%
%

The next two postulates define the preciseness of evolution:
\begin{quote}
        \EP{4}: The union of $\N_{2e}$ with the expansion of $\K$ with
        $\N_{1e}$ implies the expansion of $\K$ with $\N_{1e} \cup \N_{2e}$.\\[2ex]
        \CP{4}: The union of $\N_c$ with the contraction of $\K$ with $\N_c$
        implies $\K$.
\end{quote}

\paragraph*{Principle Postulates}
The final two postulates
        represent evolution principles that are widely accepted in the literature.
The first one is the principle of \emph{irrelevance of syntax}:
\begin{quote}
        \EP{5}: Expansion should not depend on the syntactical representation of knowledge,
        that is,
        if $\K_1 \equiv \K_2$ and $\N_{1e} \equiv \N_{2e}$,
        then $\K'_{1e} \equiv \K'_{2e}$.\\[2ex]
        \CP{5}: Contraction should not depend on the syntactical representation of knowledge,
        that is,
        if $\K_1 \equiv \K_2$ and $\N_{1c} \equiv \N_{2c}$,
        then $\K'_{1c} \equiv \K'_{2c}$.
\end{quote}
Also, the so-called principle of \textit{minimal change}
        is widely accepted in the literature~\cite{EiGo92,KaMe91,Wins90}:
\begin{quote}
        The change to $\K$ should be minimal,
        that is, $\K'_e$ and $\K'_c$ are minimally different from~$\K$.
\end{quote}
However, there is no general agreement on how to define
        this minimality and the current belief is that
        there is no general notion of minimality
        that will ``do the right thing'' under all circumstances~\cite{Wins90}.
In this work we will follow this belief and we will incorporate
        some suitable notion of minimality into each evolution semantics we
        introduce.

\subsection{Connection to AGM Postulates}
     \label{sec:connection-to-agm}
In this section we discuss the connection between our postulates and
        the AGM postulates~of Alchourr{\'o}n et al.~\cite{AGM85}.
The AGM approach has strongly influenced the formulation of
postulates by~Katsuno and Mendelzon in~%
        \cite{KaMe91}.
Given a (propositional) knowledge base $\psi$ and a sentence $\mu$,
        then $\psi \circ \mu$ denotes the \emph{revision} of $\psi$ by $\mu$;
        that is, the new knowledge base obtained by adding new knowledge $\mu$
        to the old knowledge base $\psi$.
        The following are the AGM postulates for revision:
\begin{enumerate}
\item[\bf (P+1)] $\psi \upop \mu$ implies $\mu$.
\item[\bf (P+2)] If $\psi \land \mu$ is satisfiable,
        then $\psi \upop \mu \equiv \psi \land \mu$.
\item[\bf (P+3)] If $\mu$ is satisfiable,
        then $\psi \upop \mu$ is also satisfiable.
\item[\bf (P+4)] If $\psi_1 \equiv \psi_2$ and $\mu_1 \equiv \mu_2$,
        then $\psi_1 \upop \mu_1 \equiv \psi_2 \upop \mu_2$.
\item[\bf (P+5)] $(\psi \upop \mu) \land \phi$ implies $\psi \upop (\mu \land \phi)$.
\item[\bf (P+6)] If $(\psi \upop \mu) \land \phi$ is satisfiable,
        then $\psi \upop (\mu \land \phi)$ implies $(\psi \upop \mu) \land \phi$.
\end{enumerate}
Observe that \textbf{(P+1)} corresponds to our postulate \EP{2},
        \textbf{(P+3)} to \EP{1},
        \textbf{(P+4)} to \EP{5}, and
        \textbf{(P+5)} to \EP{4}.
Note that we do not have a postulate corresponding to \textbf{(P+6)},
        and instead of one corresponding to \textbf{(P+2)},
        we have the strictly \textit{weaker} postulate \EP{3}.%
\footnote{Compare \EP{3} with \textbf{(U2)} in~\cite{KaMe91}.}
The reason is that \textbf{(P+2)} and \textbf{(P+6)}
        reflect the view of Alchourr{\'o}n et al.\ on the Principle of Minimal Change;
        we, however, would like to study a broader class of operators
        than the one considered by Alchourr{\'o}n et al.,
        so we did not adapt \textbf{(P+6)} and weakened~\textbf{(P+2)}.

Now we turn to the AGM postulates for contraction.
Given a (propositional) knowledge base $\psi$ and a sentence $\mu$,
        then $\psi \controp \mu$ denotes the contraction of $\psi$ by $\mu$.
The following are the AGM postulates for contraction:
\begin{enumerate}
\item[\bf (P--1)] $\psi$ implies $\psi \controp \mu$.
\item[\bf (P--2)] If $\psi$ does not imply $\mu$,
        then $\psi \controp \mu$ is equivalent to $\psi$.
\item[\bf (P--3)] If $\mu$ is not a tautology,
        then $\psi \controp \mu$ does not imply $\mu$.
 \item[\bf (P--4)] If $\psi_1 \equiv \psi_2$ and $\mu_1 \equiv \mu_2$,
        then $\psi_1 \controp \mu_1 \equiv \psi_2 \controp \mu_2$.
\item[\bf (P--5)] $(\psi \controp \mu) \land \mu$ implies $\psi$.
\end{enumerate}
Observe that~\textbf{(P--1)} corresponds to \CP{1},
        \textbf{(P--3)} to \CP{2},
        \textbf{(P--4)} to~\CP{5}, and
        \textbf{(P--5)} to \CP{4}.
Similarly to the case of expansion,
        we have substituted \textbf{(P--2)} with the weaker postulate \CP{3}.%
        \footnote{Compare \CP{3} with \textbf{(E2)} in~\cite{KaMe91}.}


\section{Model-based Approaches to Evolution}
\label{sec:mbas}

Among the candidate semantics for evolution operators
proposed in the literature we study first the \emph{model-based approaches}
(MBAs)~\cite{Wins90,GiSm87,PLCD*08,QiDu09}.
%
%
The section is organized as follows.
First, we define MBAs along several dimensions.
Then, we show negative results for MBAs in the context of \dllite.
Finally, we discuss conceptual problems of MBAs.

\subsection{Definition of Model-based Approaches to Evolution}
\label{sec:mba expansion}

We first define model-based expansion and then proceed to contraction.

%
\paragraph*{Model-based Expansion}

In MBAs, the result of the expansion of a KB $\K$
        w.r.t.\ new knowledge~$\N$ is a
        set $\upd{\K}{\N}{}$ of models.
The general idea of MBAs is
  to choose as the result of evolution some
  models of $\N$ depending on their distance to the models of $\K$.
Katsuno and Mendelzon~\cite{KaMe91} considered two ways of choosing these models of $\N$.

In the first one, which we call \emph{local}, the idea
is to go over all models $\I$ of $\K$ and
for each $\I$ to take those models $\J$ of $\N$ that are minimally distant from
$\I$.
Formally,
\[
  \upd{\K}{\N}{\loc}=\bigcup_{\I \in \Mod(\K)}\argmin_{\J \in \M} \dist(\I, \J),
\]
where
\begin{inparaenum}[\itshape (i)]
\item $\dist(\cdot,\cdot)$ is a function that varies from approach to approach,
  and whose range is a partially ordered domain,
\item $\argmin$ stands for the \emph{argument of the minimum,} that is, in our
  case, the set of interpretations $\J$ for which the value of $\dist(\I, \J)$
  reaches a minimum given $\I$, and
\item $\M$ is equal to
  \begin{inparablank}
  \item $\Mod(\N)$ in the case of KB evolution, or
  \item $\Mod(\T \cup \N)$ in the case of ABox evolution.
  \end{inparablank}
\end{inparaenum}
The distance function $\dist$ commonly takes as values either numbers or
subsets of some fixed set, and the minimum is defined according to the partial
order over its range.

In  the second way, called \emph{global},
the idea is to choose those models of $\N$ that are minimally distant from the
entire set of models of $\K$.
Formally,
\begin{equation}
  \label{eq:global-evolution}
  \upd{\K}{\N}{\glob} ~=~ \argmin_{\J \in \M} \dist(\Mod(\K), \J),
\end{equation}
where $\dist(\Mod(\K), \J) = \min_{\I \in \Mod(\K)} \dist(\I, \J)$ and
        $\M$ is as in the previous case.
Note that the minimum need not be unique, e.g., if distances are measured in terms of sets.
Then the distance between $\Mod(\K)$ and $\J$ is the set of all minimal distances $\dist(\I, \J)$
between elements $\I$ of $\Mod(\K)$ and $\J$.

To get a better intuition of local semantics, consider Figure~\ref{fig:MBS},
        which depicts two models $\I_1$ and $\I_2$ of $\K$,
        and four interpretations $\J_1,\ldots,\J_4$ that satisfy \N.
The distance between $\I_i$ and $\J_j$ is represented by the shape of the line connecting them:
        solid lines correspond to minimal distances, and dashed ones
        to distances that are not minimal.
In this case, $\J_1$ is in $\lupd{\K}{\N}$,
        because it is minimally distant from $\I_1$,
        and $\J_3$ and $\J_4$ are in $\lupd{\K}{\N}$,
        because they are minimally distant from $\I_2$.

\begin{figure}[t!]
\centering
\includegraphics[scale=0.5]{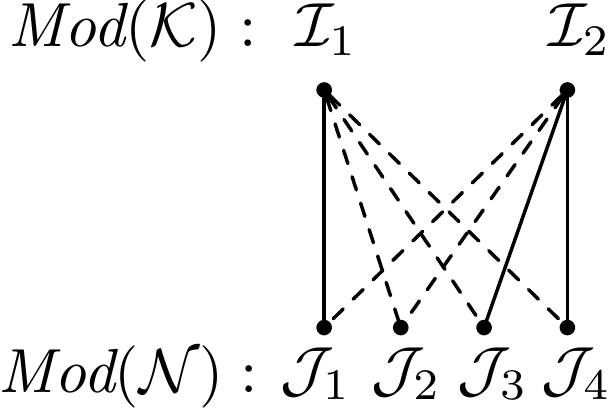}
\caption[Example of model-based evolution semantics]%
{Model-based evolution semantics: example}
\label{fig:MBS}
\end{figure}

\paragraph*{Model-based Contraction}
In the literature, contraction in the DL setting
        received much less attention than expansion.
The general view on contraction,
        which originates from the ideas of contraction in propositional logic
        (see Section~\ref{sec:connection-to-agm}),
        is that the resulting set of models can be divided into two parts:
        first, the models of the original KB $\K$ (cf.~\CP{4}),
        and second, interpretations that falsify the axioms of $\N$ (cf.~\CP{2})
        and that are minimally distant from the models of $\K$.
Following this view,
        we define local and global model-based contraction operators as follows:
\begin{align*}
        \lcontr{\K}{\N} &~=~
                \Mod(\K)\ \cup
                        \bigcup_{\phi \in \N} \;
                        \bigcup_{\I \in \Mod(\K)\;}
                        \argmin_{\J \in \M_{\lnot\phi}} \dist(\I, \J), \\
        \gcontr{\K}{\N} &~=~
                \Mod(\K)\ \cup
                        \bigcup_{\phi \in \N} \;
                        \argmin_{\J \in \M_{\lnot\phi}} \dist(\Mod(\K), \J),
\end{align*}
        where $\M_{\lnot\phi}$ is equal to
        \begin{inparaenum}[\it (i)]
        \item $\set{\J \mid \J \not\models \phi}$ in the case of KB evolution, or
        \item $\set{\J \mid \J \in \Mod(\T) \text{ and } \J \not\models \phi}$
        in the case of ABox evolution.
        \end{inparaenum}
Observe that the second part of each definition,
        which builds the part of $\contr{\K}{\N}{}$ that falsifies $\N$,
        can be defined differently
        (e.g., the condition in the definition of $\M_{\lnot\phi}$
        could be $\J \not\models \bigvee_{\phi \in \N} \phi$,
        which corresponds to \CP{2$'$})
        or in a more general way (e.g., see~\cite{Zheleznyakov:2013thesis}).
We argue, however, that most model-based contraction operators
        satisfying our postulates
        coincide with one of our operators in the case when $|\N| = 1$,
        and since all of our negative results hold already for this case,
        they also apply to these other definitions.

\paragraph*{Three-dimensional Space of MBAs}

The classical MBAs have been developed for propositional theories.
In this context, an interpretation can be identified with the set of propositional atoms
that it makes true, and two distance functions have been introduced.
They are respectively based on the symmetric difference and on the cardinality
of the symmetric difference of interpretations, namely
\begin{equation}
  \label{eq:distances}
  \dist_\setin(\I, \J) = \I \ominus \J
  \quad\text{and}\quad
  \dist_\cardin(\I, \J) = \card{\I \ominus \J},
\end{equation}
where the symmetric difference $\I\ominus \J$ of two sets $\I$ and $\J$ is
defined as
$\I\ominus \J = (\I\setminus\J) \cup (\J\setminus\I)$.
Distances under $\dist_\setin$ are sets and are compared by set inclusion,
that is,
$\dist_\setin(\I_1, \J_1) \leq \dist_\setin(\I_2, \J_2)$
iff $\dist_\setin(\I_1, \J_1) \incl \dist_\setin(\I_2, \J_2)$.
Distances under $\dist_{\cardin}$ are natural numbers and are compared
in the standard way.

One can extend these distances to DL interpretations in two different ways.
One way is to consider interpretations $\I$, $\J$ as sets of \emph{atoms}.
Then $\I\ominus\J$ is again a set of atoms and we can define distances
as in Equation~\eqref{eq:distances}.
We denote these distances as
$\dist^\ats_\setin(\I, \J)$ and $\dist^\ats_\cardin(\I, \J)$, respectively.
While in the propositional case distances are always finite,
note that this may not be the case for DL
interpretations that are infinite.
Another way is to define distances at the level of the concept and role
\emph{symbols} in the signature $\SIG$ underlying the interpretations:
\begin{align*}
  \dist^\sbs_\setin(\I, \J) &= \set{S \in \SIG \mid S^\I \neq S^{\J}},
  \quad \text{and} \\
  \dist^\sbs_\cardin(\I, \J) &= \card{\set{S\in \SIG \mid S^\I \neq S^{\J}}}.
\end{align*}

\begin{figure}[t!]
\centering
\includegraphics[scale=0.5]{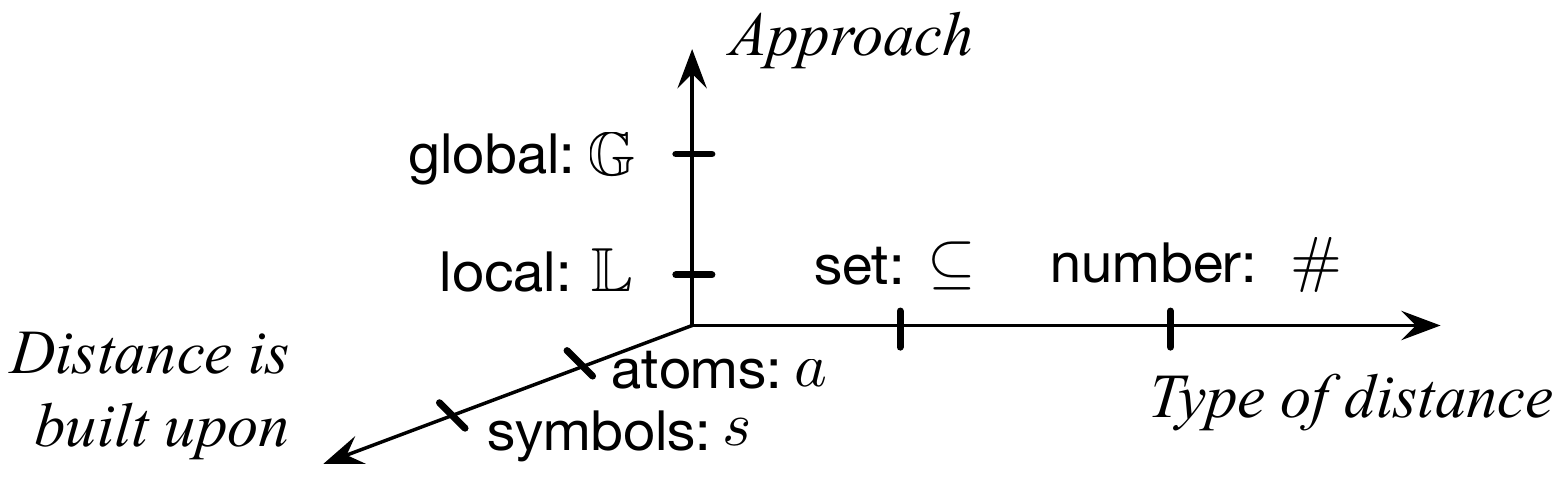}
\caption{Three-dimensional space of model-based evolution semantics}
\label{fig:3d space}
\end{figure}

Summing up across the different possibilities,
we have three dimensions,
which give eight possibilities to define
a semantics of evolution according to MBAs by choosing (as depicted in Figure~\ref{fig:3d space}):
\begin{compactenum}
        \item the \emph{local} or the \emph{global} approach,
        \item \emph{atoms} or \emph{symbols} for defining distances, and
        \item \emph{set inclusion} or \emph{cardinality} to compare
              symmetric differences.
\end{compactenum}

We denote each of these eight possibilities by a combination of three symbols,
indicating the choice in each dimension.
By $\loc$ we denote local and by $\glob$ global semantics.
We attach the superscripts $\ats$ or $\sbs$ to indicate whether
distances are defined in terms of atoms or symbols, respectively.
And we use the subscripts $\setin$ or $\cardin$ to indicate whether
distances are compared in terms of set inclusion or cardinality, respectively.
For example, $\loc^\ats_\cardin$ denotes the local semantics
where the distances are expressed in terms of cardinality of sets of atoms.

Considering that in the propositional case a distinction between atom and symbol-based
semantics is meaningless,
we can also use our notation, without superscripts, to identify MBAs in that setting.
Interestingly, the  two classical local MBAs proposed
by Winslett~\cite{Wins90} and
Forbus~\cite{Forb89} correspond, respectively, to
$\loc_\setin$, and $\loc_\cardin$,
while the one by Borgida~\cite{Borg85} is a variant of $\loc_\setin$.
The two classical global MBAs proposed
by Satoh~\cite{Sato88} and Dalal~\cite{Dala88}
correspond respectively to
$\glob_\setin$, and~$\glob_\cardin$.

Next, we show that these semantics satisfy the evolution postulates
        defined in Section~\ref{sec:postulates}.

\begin{proposition}
\label{prop:model-based-operators-satisfy-postulates}
For $\mathbb{X} \in \set{\glob, \loc}$, $y \in \set{s, a}$ and $z \in \set{{\setin}, \cardin}$,
\begin{compactitem}
\item the expansion operator $\upd{}{}{\mathbb{X}^y_z}$ satisfies {\EP{1}}--\,{\EP{5}};
\item the contraction operator $\contr{}{}{\mathbb{X}^y_z}$ satisfies {\CP{1}}--\,{\CP{5}}.
\end{compactitem}
\end{proposition}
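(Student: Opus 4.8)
The plan is to read each evolution result \emph{semantically}: identify the ``KB'' $\K'$ produced by a model-based operator with its set of interpretations $\upd{\K}{\N}{\mathbb{X}^y_z}$ (resp.\ $\contr{\K}{\N}{\mathbb{X}^y_z}$), so that $\K'\models\beta$ means every interpretation in that set is a model of $\beta$ and $\K'\equiv\K''$ means the two sets coincide. I would first isolate three facts used throughout. (i) For expansion, $\M\incl\Mod(\N_e)$ in both the KB and the ABox case, and $\M\neq\eset$ since $\N_e$ (resp.\ $\T\cup\N$) is assumed coherent. (ii) For contraction, $\Mod(\K)\incl\contr{\K}{\N}{}$ holds by construction, and for every $\alpha\in\N_c$ the set $\M_{\lnot\alpha}$ is non-empty: $\alpha$ is not a tautology, and in the ABox case the interpretation making every concept and role empty is a model of any \dllite TBox while falsifying $\alpha$. (iii) Every $\argmin$ occurring in the definitions is actually attained --- for $z=\cardin$ because the distances are natural numbers, for $y=s$ because only finitely many values occur over the finite signature of $\K\cup\N$, and for the remaining case $y=a,\,z=\setin$ by a separate argument that a $\incl$-minimal revision always exists (this is one point where the possibility of infinite interpretations has to be handled carefully, exploiting the structure of \dllite).

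Given these, most expansion postulates are direct. \EP{2} is fact (i). For \EP{3}, if $\K\models\N_e$ then every $\I\in\Mod(\K)$ already lies in $\M$, so $\dist(\I,\I)$ takes the least possible value; hence in the local case $\argmin_{\J\in\M}\dist(\I,\J)=\set{\I}$ and in the global case the minimal distance to $\Mod(\K)$ is attained exactly on $\Mod(\K)$, so in both cases the result equals $\Mod(\K)$. \EP{5} holds because each operator is a function of $\Mod(\K)$, of $\M$, and of $\dist$, and $\dist$ depends only on interpretations. For \EP{4} I would use the elementary monotonicity of $\argmin$: if $X\incl Y$, $a\in X$ and $a$ minimises $f$ over $Y$, then $a$ minimises $f$ over $X$; applying this with $Y$ the candidate set for $\N_{1e}$, $X$ the (smaller) candidate set for $\N_{1e}\cup\N_{2e}$, and $f=\dist(\I,\cdot)$ (local) or $f=\dist(\Mod(\K),\cdot)$ (global), shows that any model of the $\N_{1e}$-expansion that also satisfies $\N_{2e}$ lies in the $(\N_{1e}\cup\N_{2e})$-expansion.

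The real work, and what I expect to be the main obstacle, is \EP{1}. Fix an atomic concept or role $S$ occurring in $\K\cup\N$; one must exhibit a member of the expansion result with $S$ interpreted non-emptily. Coherence of $\K$ gives $\I_S\models\K$ with $S^{\I_S}\neq\eset$, and coherence of $\N_e$ (resp.\ $\T\cup\N$) gives a model $\J_S$ of $\N_e$ witnessing $S\neq\eset$. The idea is to build $\I^{\ast}\models\K$ by grafting onto $\I_S$ a fresh isomorphic copy of the part of $\J_S$ that witnesses $S$, placed on domain elements occurring neither in $\K$, nor in $\N$, nor in $\I_S$. Since all forms of \dllite axioms (the concept and role constructors, concept and role inclusions, functionality) are preserved under such disjoint grafting, $\I^{\ast}$ remains a model of $\K$; since $\N_e$ constrains none of the fresh elements, some minimal $\N_e$-revision of $\I^{\ast}$ in the chosen distance can be taken to leave the grafted copy untouched, hence still interprets $S$ non-emptily and belongs to the result. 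For the global operators the same $\I^{\ast}$ works, reading ``distance to $\I^{\ast}$'' as ``distance to $\Mod(\K)$''. Making the grafting precise for every constructor, and verifying for the set-inclusion distances that a minimal revision really can avoid the fresh part, is the delicate step here.

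The contraction postulates run in parallel. \CP{1} and \CP{2} are facts (ii): for each $\alpha\in\N_c$ the non-empty $\argmin$ over $\M_{\lnot\alpha}$ contributes to $\contr{\K}{\N}{}$ an interpretation falsifying $\alpha$, so $\K'_c\not\models\alpha$. For \CP{4}, write the result as $\Mod(\K)\cup F$ with $F\incl\bigcup_{\phi\in\N_c}\M_{\lnot\phi}$; every member of $F$ falsifies some $\phi\in\N_c$, hence is not a model of $\N_c$, so $F\cap\Mod(\N_c)=\eset$ and therefore $\contr{\K}{\N}{}\cap\Mod(\N_c)=\Mod(\K)\cap\Mod(\N_c)\incl\Mod(\K)$, which is exactly \CP{4}. \CP{5} is as for \EP{5}. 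Finally, for \CP{3}, if $\K\not\models\alpha$ for every $\alpha\in\N_c$, then for each $\phi\in\N_c$ some model of $\K$ already lies in $\M_{\lnot\phi}$, so the $\phi$-contribution is realised at distance $\eset$ (or $0$) and collapses onto the models of $\K$ falsifying $\phi$; the point that needs care --- and where the hypothesis $\K\not\models\phi$ must be used essentially --- is to show that no strictly closer falsifier of $\phi$ outside $\Mod(\K)$ is selected. Together with \EP{1}, I regard this as where the genuine effort of the proof lies.
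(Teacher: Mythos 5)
Your proposal is correct in substance and follows essentially the same route as the paper's proof: \EP{2}, \EP{5}, \CP{1}, \CP{2}, \CP{5} read off the definitions; \EP{3} and \CP{3} from the observation that when the new information is already entailed (resp., nowhere entailed) the minimal distance is zero and is attained on models of $\K$; \EP{4} from the monotonicity of $\argmin$ under restriction of the candidate set; and \CP{4} from the fact that the interpretations added to $\Mod(\K)$ all falsify some member of $\N_c$ and hence drop out once one intersects with $\Mod(\N_c)$. The difference is one of emphasis. The paper declares \EP{1} (and, implicitly, the non-emptiness of every $\argmin$) to follow directly from the definitions, whereas you correctly single these out as the places where an actual argument is required: your fact \textit{(iii)} on the attainment of $\subseteq$-minimal distances over infinite interpretations, and your grafting construction for \EP{1}, address points the paper silently assumes. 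Both are only sketched in your write-up, so a complete proof would still have to carry out the grafting for every \dllite construct and establish attainment for $\dist^\ats_\setin$.

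The one place where I would press you further is \CP{3} for the \emph{local} operators, precisely the step you flag as needing care. The hypothesis $\K\not\models\phi$ guarantees only that \emph{some} model of $\K$ falsifies $\phi$; for a model $\I\in\Mod(\K)$ with $\I\models\phi$, the set $\argmin_{\J\in\M_{\lnot\phi}}\dist(\I,\J)$ consists of falsifiers of $\phi$ at \emph{positive} distance from $\I$, and nothing in the definition forces these to be models of $\K$. For instance, with $\K=\set{B\ISA A}$, $\N_c=\set{A(a)}$ and $\I$ containing both $B(a)$ and $A(a)$, the $\dist^\ats_\setin$-minimal falsifier removes $A(a)$ alone and violates $B\ISA A$. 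The ``collapse onto $\Mod(\K)$'' that you and the paper both invoke is therefore immediate only for the global operators, where the minimum over $\Mod(\K)$ is genuinely zero; for the local ones this step is a gap that your sketch shares with the paper's own one-line justification, and closing it (or restricting the claim) is the real remaining work.
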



\begin{proof}
The claim for \EP{1}, \EP{2}, \EP{5}, \CP{1}, \CP{2}, and \CP{5} follows directly
        from the definitions of the operators.
\EP{3} follows from the observation that
        if $\K \models \N$,
        then $\M$ in the definition of the operators coincides with $\Mod(\K)$,
        and thus each model of $\M$ is minimally distant from itself.
\CP{3} follows from the observation that
        if $\K \not\models \alpha$ for each $\alpha \in \N$,
        then the models of $\M_{\lnot\phi}$ minimally distant
        from some model $\I$ of $\K$ (resp., from $\K$)
        are exactly those models of $\K$ that falsify some $\phi$.
Regarding \EP{4},
        the claim is trivial if $(\upd{\K}{\N_1}{\mathbb{L}^y_z}) \cup \N_2$ is not satisfiable.
        If it is satisfiable,
        then observe that if $\I \in \Mod(\K)$
        and $\J_0 \in \argmin_{\J \in \Mod(\N_1)} \dist(\I, \J) \cap \Mod(\N_2)$,
        then $\J_0 \in \argmin_{\J \in \Mod(\N_1 \cup \N_2)} \dist(\I, \J)$.
The proofs for the case of $\mathbb{G}^y_z$ and the case of ABox expansion are similar.
Finally, \CP{4} follows from the following observation:
        if $\J_0 \in (\contr{\K}{\N}{\mathbb{L}^y_z}) \cup \N$,
        Then $\J_0 \in (\Mod(\K) \cup \M') \cap \Mod(\N)$,
        where $\M' = \argmin_{\J \in \M_{\lnot\phi}} \dist(\I, \J)$
        for some model $\I$ of~$\K$ and some $\phi \in \N$.
From $\J_0 \in \Mod(\N)$ we conclude that $\J_0 \models \phi$ and consequently
        $\J_0 \in \Mod(\K) \setminus \M'$, which proves the claim.
The proof for the case of $\mathbb{G}^y_z$ is similar.
\end{proof}

Under each of our eight semantics,
        expansion results in a set of interpretations.
In the propositional case,
        each set of interpretations over finitely many symbols
can be captured by 
a formula whose models are exactly those interpretations.
In the case of DLs,
        this is not necessarily the case,
        since on the one hand,
        a KB might have infinitely many infinite models
        and, on the other hand, logics may lack some connectives
        like disjunction or negation.
Thus, a natural problem arising in the case of DLs
        is the \emph{expressibility} problem.

Let $\D$ be a DL and $\mathbb{M}$ one of the eight MBAs introduced above.  We
say that $\D$ is \emph{closed under expansion for $\mathbb{M}$} (or that
\emph{expansion w.r.t.\ $\mathbb{M}$ is expressible in $\D$}), if for all KBs
$\K$ and $\N$ written in~$\D$, there is a KB $\K'$ also written in $\D$ such
that $\Mod(\K') = \upd{\K}{\N}{\mathbb{M}}$.
Analogously, we say that $\D$ is \emph{closed under contraction} for
$\mathbb{M}$ (or that \emph{contraction w.r.t.\ $\mathbb{M}$ is expressible in
 $\D$}), if for all KBs $\K$ and $\N$ written in~$\D$, there is a KB $\K'$ also
written in $\D$ such that $\Mod(\K') = \contr{\K}{\N}{\mathbb{M}}$.
We study now whether \dlfr is closed under evolution w.r.t.\ the various
semantics.


\subsection{Inexpressibility of Model-based Approaches}
%
We show now
        that both expansion and contraction,
        w.r.t.\ the introduced semantics are inexpressible in \dlfr.
Moreover, all our inexpressibility results
        hold already for TBox evolution, and for five of the eight considered
        semantics we show it for ABox evolution.

The key observation underlying these results is that, on the one hand,
the principle of minimal change often introduces implicit disjunction
        in the resulting KB.
On the other hand,
\dlfr can be embedded into a slight extension of Horn logic~\cite{CaKN07}
and therefore does not allow one to express genuine disjunction.
Technically, this can be expressed by saying that
every \dlfr KB that entails a disjunction of \dlfr assertions
entails one of the disjuncts.
The theorem below gives a contrapositive formulation of this statement.
Although \dlfr does not have a disjunction operator,
by abuse of notation we write $\J\models \phi\lor\psi$ as a shorthand for
\quotes{$\J\models\phi$ or $\J\models \psi$},
for \dlfr assertions $\phi$ and $\psi$.

\begin{theorem}
        \label{lem:no-disjunction-dllite}
        Let $\M$ be a set of interpretations.
        Suppose there are \dllite assertions
        $\phi$,~$\psi$ such that
        \begin{compactenum}
                \item $\J\models\phi\lor \psi$
                      for every $\J\in\M$, and
                \item there are $\J_\phi$, $\J_\psi\in\M$ such that
                          $\J_\phi \not\models \phi$
                          and
                          $\J_\psi \not\models \psi$.
        \end{compactenum}
        Then, there is no \dlfr KB $\K$ such that $\M=\Mod(\K)$.
\end{theorem}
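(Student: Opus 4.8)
The plan is to obtain the theorem as the contrapositive of the following \emph{no-genuine-disjunction property} of \dlfr: if a \dlfr KB $\K$ satisfies $\K\models\phi\lor\psi$ for \dlfr assertions $\phi,\psi$, then $\K\models\phi$ or $\K\models\psi$. Granting this, suppose for contradiction that $\M=\Mod(\K)$ for some \dlfr KB $\K$. Condition~1 says exactly that $\K\models\phi\lor\psi$, so the property yields $\K\models\phi$ or $\K\models\psi$; but by condition~2 the model $\J_\phi\in\M=\Mod(\K)$ witnesses $\K\not\models\phi$ and $\J_\psi$ witnesses $\K\not\models\psi$, a contradiction. Hence no such $\K$ exists, and the theorem follows once the property is established.

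To prove the property, note first that $\K$ must be satisfiable, since otherwise $\K\models\phi$ trivially. Assume toward a contradiction that $\K\not\models\phi$ and $\K\not\models\psi$; the goal is to build a single interpretation $\J^{*}$ with $\J^{*}\models\K$, $\J^{*}\not\models\phi$ and $\J^{*}\not\models\psi$, contradicting $\K\models\phi\lor\psi$. The conceptual reason such a $\J^{*}$ exists is the Horn character of \dlfr~\cite{CaKN07}; concretely I would argue through canonical (chase) models. The first step turns each non-entailment into an explicit canonical countermodel obtained by adding a few assertions over fresh constants: to a concept inclusion $\phi=B\ISA C$ associate $A_\phi=\{B(c_\phi)\}$, augmented by $B'(c_\phi)$ when $C=\lnot B'$, for a fresh $c_\phi$; to a role inclusion $R_1\ISA R_2$ associate $\{R_1(c_\phi,c_\phi')\}$ with fresh $c_\phi,c_\phi'$; to a functionality assertion $\FUNCT{R}$ associate $\{R(c_\phi,c_\phi^1),R(c_\phi,c_\phi^2)\}$ with three fresh constants; and to a membership assertion set $A_\phi=\emptyset$. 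Using standard names and the infinite domain to relocate witnesses, one checks that $\K\not\models\phi$ forces $\K\cup A_\phi$ to be satisfiable and that its canonical model $\I_\phi$ falsifies $\phi$ on the fresh constants; this uses the characterisation of \dllite entailment of membership assertions and of concept and role inclusions through the canonical model, together with, in the functionality case, the syntactic restriction of \dlfr on functional roles.

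The decisive step merges $\I_\phi$ and $\I_\psi$. Choosing the fresh constants of $A_\phi$ disjoint from those of $A_\psi$, let $\J^{*}$ be the canonical model of $\K\cup A_\phi\cup A_\psi$. The key technical claim is a \emph{locality} property of the \dllite chase: because $A_\phi$ and $A_\psi$ mention disjoint fresh constants, the chase of $\K\cup A_\phi\cup A_\psi$ is the union of the chases of $\K\cup A_\phi$ and of $\K\cup A_\psi$, overlapping only on the canonical model of $\K$; no atom over the constants and chase-descendants of $A_\phi$ is produced because of $A_\psi$, and symmetrically. From locality, first, $\K\cup A_\phi\cup A_\psi$ is satisfiable: the only possible obstruction is a violated negative inclusion, but every element of $\J^{*}$ already lies inside one of the two satisfiable sub-chases, so $\J^{*}\models\K$. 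Second, the restriction of $\J^{*}$ to the subdomain of $\I_\phi$ coincides with $\I_\phi$, and likewise for $\I_\psi$, so the fresh witnesses that made $\I_\phi$ falsify $\phi$ still falsify $\phi$ in $\J^{*}$, and symmetrically $\J^{*}\not\models\psi$. This is the required contradiction.

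I expect the main obstacle to be the combination of two things: the case analysis that turns non-entailment of an arbitrary \dllite assertion into a concrete canonical countermodel, where the negative-inclusion case $C=\lnot B'$ and the functionality case are delicate because they depend on the \dlfr syntactic restrictions and on the behaviour of the deductive closure; and making the chase-locality and decomposition precise enough that the statement ``the union of two satisfiable sub-chases is a model of $\K$'' is genuinely proved rather than merely plausible. The tempting shortcut through direct products of models of $\K$ disposes of membership assertions in one line, but it breaks down for concept inclusions, since the subject concept of $\phi$ may be empty in one of the factors; hence the fresh-witness construction appears to be the appropriate tool.
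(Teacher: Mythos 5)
Your proposal is sound and reaches the theorem by the same contrapositive reduction (no genuine disjunction in \dlfr), but the way you build the single countermodel is genuinely different from the paper's. The paper splits on the syntactic type of $\phi$ and $\psi$: when both are membership assertions it invokes the canonical model of $\K$ and the homomorphism into every other model to conclude that whichever disjunct the canonical model satisfies is entailed; when at least one of them is an inclusion or functionality assertion it takes \emph{arbitrary} countermodels $\J_\phi,\J_\psi$, relocates them by injective renamings respecting the constants of $\K$ (and of $\phi$), and forms the disjoint union $\J_\phi^f\uplus\J_\psi^g$, using only elementary closure properties (truth of all assertion types is preserved under disjoint union with a $\T$-model over fresh support; falsity of inclusions and functionality assertions is preserved under any disjoint union). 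You instead make every non-entailment concrete by adjoining a fresh-constant ABox $A_\phi$, pass to canonical models, and merge via a locality property of the chase of $\K\cup A_\phi\cup A_\psi$. What your route buys is uniformity and an explicit, finitely generated witness; what it costs is that the two lemmas you defer are exactly where the work lives, and they are not lighter than the paper's: (i) satisfiability of $\K\cup A_\phi$ requires relocating a falsifying element of some model of $\K$ onto the fresh constant $c_\phi$, and when that element happens to be a named constant of $\K$ you cannot permute it away while fixing the constants of $\K$ --- you end up needing precisely the disjoint-union-with-a-renamed-$\T$-model trick the paper formalizes; and (ii) the chase-locality claim must also cover functionality (a ``bridge'' atom $R(d,e)$ with $d$ in the $\K$-part and $e$ a fresh descendant would break $\FUNCT{R}$), not only negative inclusions as your sketch states, though the absence of such bridges does follow once locality is stated carefully. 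Neither point is a gap in the strategy, but both should be proved rather than asserted; the paper's case split exists precisely to avoid having to formalize the chase at this level of detail.
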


\begin{proof}
We prove the theorem by contradiction.
Assume there exists a \dlfr KB $\K$ such that
for every model $\J$ of $\K$ we have $\J\models\phi\lor \psi$,
but $\K\not\models\phi$ and $\K\not\models\psi$.

We distinguish the two cases
\begin{inparaenum}[\itshape (1)]
\item $\phi$ and $\psi$ are membership assertions, and
\item $\phi$ is an arbitrary assertion while $\psi$ is an inclusion or functionality assertions.
\end{inparaenum}

\medskip

\noindent
\textit{Case 1.}
This part of the proof relies on a result by Calvanese et al.~\cite{CDLLR07}
who showed that for every satisfiable \dlfr KB $\K$
there exists a model $\I_\K$, the \emph{canonical\/} model of $\K$,
that can be homomorphically mapped to every other model of $\K$.
Formally, for every model $\J$ there is a mapping $h\col\dom\to\dom$ such that
\begin{inparaenum}[\itshape (i)]
\item $h(a) = a$ for every constant $a$ appearing in $\K$,
\item $h(A^{\I_\K}) \subseteq A^\J$ for every atomic concept $A$, and
\item $h(P^{\I_\K}) \subseteq P^\J$ for every atomic role $P$.
\end{inparaenum}
In essence, the canonical model is constructed by chasing the ABox of $\K$
with the positive inclusion assertions in the TBox of $\K$,
that is, the inclusion assertions without negation sign.
Intuitively, the homomorphism $h$ exists because every model $\J$ of $\K$ satisfies these assertions,
and therefore all atoms introduced by the chase into $\I_\K$ have a corresponding atom in $\J$.
(Technically, there is a slight difference between our definition of interpretations
and the one in~\cite{CDLLR07}, as we assume that all interpretations share the same domain,
while domains can be arbitrary non-empty sets in~\cite{CDLLR07}.
The argument in~\cite{CDLLR07}, however, can be carried over in a straightforward way to our setting.)

Now, for the canonical model $\I_\K$ of $\K$ we have $\I_\K\models \phi\lor\psi$.
Then one of $\phi$ and $\psi$ is satisfied by $\I_\K$, say $\phi$.
However, since $\I_\K$ is canonical,
$\phi$ is also satisfied by every other model $\J$ of~$\K$,
due to the existence of a homomorphism from $\I_\K$ to $\J$.
For example, if $\phi = A(a)$, then $\I_\K\models A(a)$ implies $a\in A^{\I_\K}$,
which implies $a = h(a) \in h(A^{\I_\K}) \subseteq A^\J$, that is, $\J\models A(a)$.
For other kinds assertions, a similar argument applies.
This contradicts the assumption that there exists a $\J_\phi$ that falsifies~$\phi$.

\medskip

\noindent
\textit{Case 2.} Let $\K = \T \cup \A$.
The argument for this case will be based on the fact that
the disjoint union of a model of $\K$ and a model of the $\T$
is again a model of $\K$,
while the disjoint union of a counterexample for $\phi$ and
a counterexample for $\psi$ is a counterexample for both.
In order to formalise this idea 
we need some notation and simple facts as a preparation.

Given two interpretations $\I_1$, $\I_2$, their \emph{union} $\I_1\cup \I_2$
is the interpretation defined by $A^{\I_1\cup \I_2} = A^{\I_1}\cup A^{\I_2}$
for every primitive concept $A$ and
$P^{\I_1\cup \I_2} = P^{\I_1}\cup P^{\I_2}$ for every primitive role $P$.
From the definition it follows also for all concepts of the form $B = \exists R$,
where $R$ is one of $P$ or $P^{-}$,
that $B^{\I_1\cup \I_2} = B^{\I_1}\cup B^{\I_2}$.

We define the \emph{support set} of $\I$ 
as the set of constants that occur in the interpretation of some atomic concept or role under $\I$.
If $\I_1$, $\I_2$ have disjoint support sets, we denote their union also as $\I_1\uplus \I_2$
and speak of a \emph{disjoint union}.

Let $\alpha$ be an inclusion or functionality assertion, let $\beta$ be a membership assertion,
and let $\I_1$, $\I_2$ be interpretations with disjoint support.
Then the following statements are straightforward to check:
\begin{enumerate}[\itshape (i)]
\item
    $\I_1\uplus \I_2\models\alpha$ iff
    $\I_1\models\alpha$ and $\I_2\models\alpha$;
\item
    if the support set of $\I_2$ is disjoint from the set of constants of $\beta$, then
    $\I_1\uplus \I_2\models\beta$ iff
    $\I_1\models\beta$;
\item
     $\I_1\uplus \I_2$ is a model of $\T$ iff
    $\I_1$ and $\I_2$ are both models of~$\T$;
\item
    if the support set of $\I_2$ is disjoint from the set of constants of $\K$, then
    $\I_1\uplus \I_2$ is a model of $\K$ iff
    $\I_1$ is a model of $\K$ and $\I_2$ is a model of $\T$.
\end{enumerate}
Here, \textit{(iii)} follows from \textit{(i)}, and \textit{(iv)} from  \textit{(ii)} and  \textit{(iii)}.
The assumption about the disjoint support sets is needed
in \textit{(i)}, \textit{(iii)} and \textit{(iv)} to guarantee that
negative inclusion assertions and functionality assertions continue to hold in $\I_1\uplus \I_2$.
In addition, the assumptions about the disjointness of constant sets and the support set of~$\I_2$
in \textit{(ii)} and \textit{(iv) } are needed
to guarantee that $\I_2$ has no influence on the satisfaction of membership assertions.

Next, we introduce a technique to create disjoint variants of interpretations
by moving their support sets with injective functions.
If $f\col\dom\to\dom$ is an injective mapping,
then the \emph{image of $\I$} under $f$ is the interpretation $\I^f$
satisfying $A^{\I^f} = f(A^\I)$ for every atomic concept $A$ and
$P^{\I^f} = f(P^\I)$ for every atomic role $P$.
If $K$ is a set of constants, we say that \emph{$f$ respects $K$} if $f(a)=a$ for every constant $a\in K$.

Let $\alpha$ be an inclusion or functionality assertion, $\beta$ a membership
assertion, and $\I$ an interpretation.
Then the following statements are straightforward to check:
\begin{enumerate}[\itshape (i)] \setcounter{enumi}{4}
\item
     $\I \models \alpha$ if and only if $\I^f \models \alpha$;
\item
     if $f$ respects the constants occurring in $\beta$, then
     $\I\models\beta$ iff $\I^f\models\beta$;
\item
     $\I$ is a model of $\T$ iff $\I^f$ is a model of~$\T$;
\item
     if $f$ respects the constants of $\K$, then
     $\I$ is a model of $\K$ iff $\I^f$ is a model of $\K$.
\end{enumerate}
Note that the injectivity assumption is needed for $\I^f$ to satisfy negative inclusion assertions
and functionality assertions if $\I$~does.

Now, suppose that $\phi$ is an arbitrary assertion and that
$\psi$ is an inclusion or functionality assertion.
Moreover, let $\J_\phi$, $\J_\psi$ be models of $\K$ such that $\J_\phi\not\models\phi$ and $\J_\psi\not\models\psi$.
To create disjoint variants of these interpretations,
we choose injective mappings $f$, $g\col\dom\to\dom$ such that
\begin{inparablank} 
\item $f$ respects the constants of $\K$ and $\phi$, and
\item $f(\dom) \cap g(\dom) = \emptyset$.
\end{inparablank}
Clearly, such mappings always exist.
From the facts about images of interpretations, we conclude that
\begin{enumerate}
\item 
$\J_\phi^f$ is a model of $\K$ and
$\J_\phi^f\not\models\phi$;
\item 
$\J_\psi^g$ is a model of $\T$ and
$\J_\psi^g\not\models\psi$;
\item 
$\J_\phi^f$ and $\J_\psi^g$ have disjoint support sets.
\end{enumerate}
Hence, for $\J=\J_\phi^f\uplus \J_\psi^g$ we have that
$\J$ is a model of $\K$ and
$\J$ falsifies both $\phi$ and $\psi$.
This contradicts the assumption that every model of $\K$
satisfies one of $\phi$ or $\psi$.
\end{proof}

\subsubsection{KB Evolution}

In this part we show that \dlfr is not closed under TBox evolution (both
expansion and contraction) for \emph{any} of the introduced MBAs.
We start with the following example that illustrates the issue.


\begin{example}
	\label{ex:rev-operator-inexpr}
Consider the KB $\KEX$ of our running example and assume that the new
information $\N_T = \set{\wife \ISA \lnot\renter}$ arrived.
We explore expansion
	w.r.t.\ the semantics $\glob^\sbs_\cardin$,
	which counts for how many symbols the interpretation changes.

Consider three assertions,
(derived) from $\K$,
that are essential for this example:
$\empwife \ISA \wife$, $\empwife \ISA \renter$, and $\empwife(\mary)$.
One easily verifies that the minimum of $\dist^\sbs_\cardin(\I,\J)$
for $\I\in\Mod(\K)$ and $\J\in\Mod(\N_T)$ is 1,
since, intuitively, we can turn a model of $\K$ into a model of $\N_T$
by dropping $\mary$ either from $\wife$ or from $\renter$.
Let $\J \in \upd{\K}{\N_T}{\glob^\sbs_\cardin}$.
Then there exists $\I\in\Mod(\K)$ such that $\dist^\sbs_\cardin(\I, \J) = 1$.
Hence, there is only one symbol $S\in \set{\empwife, \wife, \renter}$
whose interpretation has changed from $\I$ to $\J$, that is $S^\I
\neq S^\J$.
Observe that $S$ cannot be $\empwife$.
Otherwise, $\wife$ and $\renter$ would be interpreted identically
under $\I$ and $\J$,
and $\wife$ and $\renter$ would not be disjoint under $\J$,
since $\mary$ is an instance of both,
thus contradicting $\N_T$.
Now, assume that $\wife$ has not changed.
Then $\J\models \empwife\ISA\wife$,
since this held already for $\I$.
However, $\J\not\models \empwife\ISA\renter$,
since $\mary\in \empwife^\J$, but $\mary \notin \renter^\J$,
due to the disjointness of $\wife$ and $\renter$ with respect to $\J$.
Similarly, if we assume that $\renter$ has not changed,
it follows that $\J\models \empwife\ISA\renter$,
but $\J\not\models \empwife\ISA\wife$.
By Theorem~\ref{lem:no-disjunction-dllite}
we conclude that $\upd{\K}{\N_T}{\glob^\sbs_\cardin}$ is not expressible in \dlfr.
%
%
\end{example}

We now proceed to our first inexpressibility result, for KB expansion.


\begin{theorem}
  \label{thm:tbox-mbs-inexpressible-dllite}
\dlfr is not closed under KB expansion
  for $\mathbb{X}^y_z$,
  where $\mathbb{X} \in \set{\glob, \loc}$,
  $y \in \set{\sbs, \ats}$, and $z \in \set{\setin, \cardin}$.
Moreover, this holds already
        when both the initial KB and the new information are written in \dlcore
        and the new information consists of a single TBox axiom.
\end{theorem}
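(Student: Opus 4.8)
The plan is to exhibit a single, small \dlcore KB $\K$ and a single TBox axiom $\N$ such that the expansion $\upd{\K}{\N}{\mathbb{X}^y_z}$ contains two models witnessing an implicit disjunction in the sense of Theorem~\ref{lem:no-disjunction-dllite}; invoking that theorem then immediately yields inexpressibility in \dlfr for all eight semantics at once. The example from Example~\ref{ex:rev-operator-inexpr} already does exactly this for $\glob^\sbs_\cardin$, so the real task is to engineer an example robust across all four superscript/subscript choices and both the local and global variants. The natural candidate is a variation of that construction: take $\T$ to contain $\empwife \ISA \wife$ and $\empwife \ISA \renter$ (or, to stay purely in \dlcore, fresh atoms $A \ISA B$, $A \ISA C$), the ABox $\A = \{A(a)\}$, and new information $\N = \{B \ISA \lnot C\}$, so that $\K$ forces $a$ into both $B$ and $C$ while $\N$ makes $B$ and $C$ disjoint. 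Thus every model of $\N$ compatible with ``repairing'' $\K$ must drop $a$ from at least one of $B$, $C$, and minimal change (under any of the four distance notions) forbids dropping it from both and forbids touching anything else unnecessarily.

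The key steps, in order, are: (1) compute the minimal distance between $\Mod(\K)$ and $\Mod(\N)$ — argue it is achieved by models that differ from some model of $\K$ only in that $a \notin B$ or only in that $a \notin C$, and that this minimum is the same (a single symbol, or a single atom, respectively) whether we measure by symbols or atoms, by inclusion or cardinality; here one must check that $A$ cannot be the symbol/atom that changes, exactly as in Example~\ref{ex:rev-operator-inexpr}, because $A \ISA B$ and $A \ISA C$ would then force $B$ and $C$ to share the element $a$, violating $\N$. (2) Produce $\J_\phi \in \upd{\K}{\N}{\mathbb{X}^y_z}$ with $\J_\phi \not\models \phi$ where $\phi := A \ISA B$ (a model where $a$ was dropped from $B$) and $\J_\psi$ with $\J_\psi \not\models \psi$ where $\psi := A \ISA C$ (a model where $a$ was dropped from $C$). (3) Show $\J \models \phi \lor \psi$ for every $\J \in \upd{\K}{\N}{\mathbb{X}^y_z}$: any such $\J$ is a minimally-distant model of $\N$, hence cannot have dropped $a$ from both $B$ and $C$ (that would be distance $2$, or a $2$-element change, which is not minimal and not $\subseteq$-minimal either), so $a \in B^\J$ or $a \in C^\J$, i.e.\ $\J \models A \ISA B$ or $\J \models A \ISA C$ since $A^\J = \{a\}$ in the relevant models. (4) Apply Theorem~\ref{lem:no-disjunction-dllite} with $\M = \upd{\K}{\N}{\mathbb{X}^y_z}$ to conclude there is no \dlfr KB $\K'$ with $\Mod(\K') = \upd{\K}{\N}{\mathbb{X}^y_z}$.

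The same construction should handle the local semantics with essentially no change, since for a concrete model $\I$ of $\K$ the closest models of $\N$ are again exactly those obtained by dropping $a$ from $B$ or from $C$, and the union over $\I \in \Mod(\K)$ still never produces a model in which $a$ lies outside both $B$ and $C$. One subtlety to watch: for the local case the set $\M$ is a large union, so I must make sure that \emph{no} model of $\K$ has a closest $\N$-model that drops $a$ from both symbols — this follows because such a model is strictly farther (in every one of the four orders) than one that drops $a$ from only one. The main obstacle, and where the write-up will need care, is step (1): verifying uniformly across all four distance variants that the minimal distance is ``change exactly one thing'' and that ``that one thing'' can be taken to be $B$ or $C$ but never $A$ — in particular ruling out, for the atom-based semantics, exotic minimal repairs that alter some unrelated atom, and checking that passing to a \dlcore-only example (with no functionality or role inclusions in play) does not open up cheaper repairs. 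Once that is pinned down, the disjunction argument and the appeal to Theorem~\ref{lem:no-disjunction-dllite} are routine.
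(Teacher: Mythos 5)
Your construction is, up to renaming ($A=\empwife$, $B=\wife$, $C=\renter$, $a=\mary$), exactly the example the paper uses, and the overall strategy --- exhibit an implicit disjunction in the set of minimally distant models and invoke Theorem~\ref{lem:no-disjunction-dllite} --- is the same. However, there is a genuine gap in step (3): you instantiate the disjuncts as the \emph{inclusion assertions} $\phi = A \ISA B$ and $\psi = A \ISA C$, and the claim that every $\J$ in the result satisfies $\phi\lor\psi$ fails for the local semantics. The local operator unions over \emph{all} $\I\in\Mod(\K)$, including models where $A^\I$ (and hence $B^\I\cap C^\I$) contains elements besides $a$. Concretely, take $\I$ with $A^\I=B^\I=C^\I=\set{a,b}$: a minimally distant $\J\in\Mod(\N)$ under $\loc^\ats_\setin$ or $\loc^\ats_\cardin$ may drop $a$ from $B$ and $b$ from $C$ (symmetric difference $\set{B(a),C(b)}$, which is incomparable with, and no larger than, the alternatives), and this $\J$ falsifies \emph{both} $A\ISA B$ and $A\ISA C$. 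So condition~1 of Theorem~\ref{lem:no-disjunction-dllite} is violated for your chosen pair, and the appeal to that theorem does not go through for the four local semantics. Your parenthetical justification ``since $A^\J=\set{a}$ in the relevant models'' is exactly the unwarranted assumption: it holds for the global case (where the globally minimal distance forces $B^\I\cap C^\I=\set{a}$ and only one atom/symbol to change), but not for the local one.

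The repair is small and is precisely what the paper does: take the disjuncts to be the \emph{membership assertions} $\phi = B(a)$ and $\psi = C(a)$. Your own intermediate conclusion in step (3) --- that no minimal repair drops $a$ from both $B$ and $C$, hence $a\in B^\J$ or $a\in C^\J$ --- is correct under all eight semantics and is exactly the statement $\J\models B(a)\lor C(a)$; the witnesses $\J_\phi$, $\J_\psi$ from step (2) already falsify $B(a)$ and $C(a)$ respectively. With that substitution the argument matches the paper's proof. (For the symbol-based local semantics you should also note, as the paper does, that the additional minimal $\J$'s which modify the interpretation of $B$ or $C$ on individuals outside $B^\I\cap C^\I$ still touch only one of the two symbols and therefore still satisfy $B(a)\lor C(a)$.)
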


\begin{proof}
The main idea of the proof is that
        evolution changes models in such a way
        that capturing them all would require to have a disjunction,
        which is impossible by Theorem~\ref{lem:no-disjunction-dllite}.
We generalize the idea of Example~\ref{ex:rev-operator-inexpr}
where the inexpressibility of TBox expansion w.r.t.\ $\glob^\sbs_\cardin$
has already been shown.

To show inexpressibility of expansion w.r.t.\ all eight semantics,
we consider the same fragment of our running example:
\begin{align*}
  \K_T &~=~ \set{\empwife \ISA \wife,\ \empwife \ISA \renter},\\
  \K_A &~=~ \set{\empwife(\mary)},\\
  \N_T &~=~ \set{\wife \ISA \lnot\renter},
\end{align*}
and $\K = \K_T\cup \K_A$.

We first consider expansion under global semantics.
With an argument as in Example~\ref{ex:rev-operator-inexpr}, one verifies that
there are models $\I$ of $\K$ where
only $\mary$ is both a $\wife$ and a $\renter$,
and such models can be turned into models $\J$ of $\N_T$
by either dropping $\mary$ from the set of wives or from the set of renters.
For these models we have
\begin{itemize}
\item $\dist^a_\setin(\I, \J) = \set{\wife(\mary)}$ or \\[.5ex]
      $\dist^a_\setin(\I, \J) = \set{\renter(\mary)}$;
\item $\dist^a_\cardin(\I, \J) = 1$;
\item $\dist^s_\setin(\I, \J) = \set{\wife}$ or
  $\dist^s_\setin(\I, \J) = \set{\renter}$;
\item $\dist^s_\cardin(\I, \J) = 1$.
\end{itemize}
Under each of the concerned semantics, these distances are minimal because
smaller distances could only be 0 or the empty set, respectively,
and interpretations with cardinality distance 0 or empty set-difference are identical.
Hence, for every model $\J \in \upd{\K}{\N_T}{\glob^y_z}$ there is a model $\I\in\K$ that differs
from $\J$ only in the interpretation of one concept, either $\wife$ or $\renter$.
It follows that
\begin{inparaenum}[\itshape (1)]
\item each such $\J$ either satisfies $\wife(\mary)$ or $\renter(\mary)$ and
\item there are $\J$ that satisfy one of the two assertions, but not the other.
\end{inparaenum}
Thus, by Theorem~\ref{lem:no-disjunction-dllite},
for none of the global semantics it is possible to express expansion in \dlfr.

Next, we turn to local semantics.
The arguments used here are a slight variant of the ones above,
taking into account the difference between the two kinds of semantics.
We start with some $\I\in\Mod(\K)$.
In such a model, $\mary$ for sure is both a $\wife$ and a $\renter$,
but there may be further individuals that are instances of both of these concepts.
Such an $\I$ can be turned into a model $\J\in\N_T$ by dropping for each individual
$o\in \wife^\I \cap \renter^\I$ either the atom $\wife(o)$ or the atom $\renter(o)$.

With respect to the atom-based distances $\dist^a_\setin$ and $\dist^a_\cardin$,
each $\J$ obtained in this way has minimal distance to~$\I$.
Moreover, these are the only models of $\N_T$ with minimal distance to $\I$
because further changes would increase the difference set and therefore the difference count.

With respect to the symbol-based distances $\dist^s_\setin$ and $\dist^s_\cardin$,
a $\J$ obtained in this way is only minimal if the dropped atoms all have the same symbol.
In this case we have again
$\dist^s_\setin(\I, \J) = \set{\wife}$ or $\dist^s_\setin(\I, \J) = \set{\renter}$ and,
correspondingly,
$\dist^s_\cardin(\I, \J) = 1$.
There are, however, further models of $\J\in\N_T$ with the same minimal distance to $\I$,
namely those that in $\J$ interpret individuals as instances of $\wife$
(or $\renter$, respectively)
that in $\I$ were neither instances of $\wife$ nor of $\renter$.

In summary, since $\I$ was chosen arbitrarily, we have seen again that
\begin{inparaenum}[\itshape (1)]
\item each $\J\in \upd{\K}{\N_T}{\loc^y_z}$ either satisfies $\wife(\mary)$ or $\renter(\mary)$ and
\item there are $\J$ that satisfy one of the two assertions, but not the other.
\end{inparaenum}
So, the conditions of Theorem~\ref{lem:no-disjunction-dllite} are satisfied and thus
for none of the local semantics it is possible to express expansion in \dlfr.
\end{proof}

We now proceed to our second inexpressibility result, for KB contraction.

\begin{theorem}
  \label{thm:tbox-mbs-contraction-inexpressible-dllite}
\dlfr is not closed under KB contraction
  for $\mathbb{X}^y_z$,
  where $\mathbb{X} \in \set{\glob, \loc}$,
  $y \in \set{\sbs, \ats}$, and $z \in \set{\setin, \cardin}$.
Moreover, this holds already
        when both the initial KB and the new information are written in \dlcore
        and the new information consists of a single TBox axiom.
\end{theorem}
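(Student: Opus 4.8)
The plan is to replay the strategy of the proof of Theorem~\ref{thm:tbox-mbs-inexpressible-dllite} (KB expansion), again routing everything through Theorem~\ref{lem:no-disjunction-dllite}: I will exhibit a \dlcore KB $\K$ and a single \dlcore TBox axiom $\phi$ entailed by $\K$ such that the interpretation set $\M:=\contr{\K}{\set{\phi}}{\mathbb{X}^y_z}$ admits two \dllite assertions $\chi_1,\chi_2$ with $\J\models\chi_1\lor\chi_2$ for every $\J\in\M$, while some member of $\M$ falsifies $\chi_1$ and some member falsifies $\chi_2$. Theorem~\ref{lem:no-disjunction-dllite} then states that $\M$ is not $\Mod(\K')$ for any \dlfr KB $\K'$, which is exactly non-closure under this contraction. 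By definition $\M$ is $\Mod(\K)$ together with the interpretations that falsify $\phi$ and are minimally distant (in the chosen sense) from $\Mod(\K)$, respectively from a model of $\K$; hence the heart of the proof is a classification of these minimal falsifiers.

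Concretely I would use the contraction instance of the running example cut down to \dlcore: let $\K_T=\set{\priest\ISA\cleric,\ \cleric\ISA\renter}$, $\K_A=\set{\priest(\adam)}$, $\K=\K_T\cup\K_A$, and $\phi=(\priest\ISA\renter)$, which $\K$ entails since every model $\I$ of $\K$ has $\adam\in\priest^\I\incl\cleric^\I\incl\renter^\I$. A distance-minimal $\J$ with $\J\not\models\phi$ relative to such an $\I$ must differ from $\I$ in exactly one atom (resp.\ exactly one symbol), and a short case analysis shows the only possibilities are: (i) enlarging $\priest$ with individuals lying outside $\renter^\I$, leaving $\cleric$ and $\renter$ untouched; or (ii) shrinking $\renter$ so as to expel some element of $\priest^\I$, leaving $\priest$ and $\cleric$ untouched. (Touching only $\cleric$, or touching a role, never yields a falsifier at minimal cost, since then $\priest^\J=\priest^\I\incl\renter^\I=\renter^\J$.) Now set $\chi_1:=(\priest\ISA\cleric)$ and $\chi_2:=\renter(\adam)$. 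Every model of $\K$ satisfies both. Every type-(i) falsifier leaves $\renter$ untouched, so $\adam\in\renter^\J$ and $\J\models\chi_2$ (and in fact it also falsifies $\chi_1$, because the individuals it adds to $\priest$ lie outside $\cleric^\I\incl\renter^\I$). Every type-(ii) falsifier leaves $\priest,\cleric$ untouched, so $\priest^\J=\priest^\I\incl\cleric^\I=\cleric^\J$ and $\J\models\chi_1$. Thus condition~1 of Theorem~\ref{lem:no-disjunction-dllite} holds for $\M$. For condition~2, the type-(i) falsifier that adds a single fresh individual to $\priest$ gives $\J_{\chi_1}\not\models\chi_1$, and the type-(ii) falsifier that deletes precisely the atom $\renter(\adam)$ gives $\J_{\chi_2}\not\models\chi_2$; both change only one atom, hence have minimal distance and lie in $\M$ under each of the eight semantics. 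The ``moreover'' part is then immediate, since $\K$ is \dlcore and the new information is the single TBox axiom $\phi$.

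I expect the classification of the minimal falsifiers to be the delicate step, for the same reasons as in Theorem~\ref{thm:tbox-mbs-inexpressible-dllite}: it has to be carried out uniformly over atom- versus symbol-based distances, set inclusion versus cardinality, and local versus global. The symbol-based variants are the ones to watch, since there a ``one-symbol'' falsifier may reinterpret $\priest$ (or $\renter$) arbitrarily rather than by a single atom; the saving observation is that $\cleric^\I\incl\renter^\I$ forces any $\priest$-only change that falsifies $\priest\ISA\renter$ to also falsify $\priest\ISA\cleric$, so type-(i) falsifiers still all satisfy $\chi_1\lor\chi_2$ via $\chi_2$, and type-(ii) falsifiers still all satisfy it via $\chi_1$ — hence no stray member of $\M$ escapes the disjunction. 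Keeping $\K$ in the minimal shape above (a two-link inclusion chain plus one fact) is what keeps the case analysis short, and the global-semantics bookkeeping for $\dist^\sbs_\setin$ (where $\dist(\Mod(\K),\J)$ is itself a set of minimal distances) is handled verbatim as in the proof of Theorem~\ref{thm:tbox-mbs-inexpressible-dllite}.
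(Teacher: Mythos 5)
Your proposal is correct and is essentially the paper's own proof: the paper contracts $\K=\set{\priest\ISA\cleric,\ \cleric\ISA\renter}$ by the same single axiom $\priest\ISA\renter$, classifies the minimal falsifiers into the same two types (enlarge $\priest$ outside $\renter$, or shrink $\renter$ inside $\priest$), and concludes via Theorem~\ref{lem:no-disjunction-dllite}. You deviate in two small ways: you add the ABox fact $\priest(\adam)$ to $\K$, and you use the disjunction $(\priest\ISA\cleric)\lor\renter(\adam)$ where the paper uses $(\priest\ISA\cleric)\lor(\cleric\ISA\renter)$. Both deviations are sound, and the first actually buys you something: since $\adam\in\priest^{\I}$ in every model, a falsifier touching only the symbol $\renter$ always exists, so the minimal symbol distance is always a singleton; the paper's ABox-free version must also cope with models such as $\priest^{\I}=\eset$, $\renter^{\I}=\domain$, from which every falsifier touches two symbols and has enough slack to violate both disjuncts, a case the paper's write-up does not address. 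One step of yours does need a patch: under $\dist^\ats_\setin$ it is not true that a minimal falsifier differs from $\I$ in exactly one atom. For $o\notin\priest^{\I}$ with $o\in\renter^{\I}$, the two-atom difference $\set{\priest(o),\renter(o)}$ (add $o$ to $\priest$ and remove it from $\renter$) is $\subseteq$-minimal, since neither singleton subset alone yields a falsifier and it is incomparable to the singleton differences at other individuals. This does not break the argument --- such a swap has $o\neq\adam$, hence keeps $\adam\in\renter^{\J}$ and satisfies your $\chi_2$ --- but the classification must be widened to admit these swaps and then checked against the disjunction, rather than asserted as a one-atom/one-symbol claim. The same gloss occurs in the paper's own proof, where it is likewise harmless for the atom-based semantics.
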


\begin{proof}
To show inexpressibility of contraction
we consider another fragment of our running example:
\begin{align*}
  \K &~=~ \set{\priest \ISA \cleric,\ \cleric \ISA \renter},  \\
  \N_T &~=~ \set{\priest \ISA \renter}.
\end{align*}

We first consider local semantics.
To obtain $\contr{\K}{\N_T}{}$,
we have to add to $\Mod(\K)$ all interpretations $\J$
that falsify $\priest \ISA \renter$ and that
are minimally distant to some model $\I$ of~$\K$,
where distance is measured by one of the four measures
defining the local semantics.

Let $\I$ be a model of $\K$.
Then $\priest^\I \incl \cleric^\I$, $\cleric^\I \incl \renter^\I$, and hence
$\priest^\I \incl \renter^\I$.

There are, in principle, two ways to minimally change $\I$ in such a way that
$\priest \ISA \renter$ is no more satisfied.
For one, we can add an individual $o\in\domain\setminus\renter^\I$ to $\priest^\I$,
provided $\renter^\I\neq\domain$, thus violating also $\priest \ISA \cleric$.
Alternatively, we can drop from $\renter^\I$ an individual $o$ that is also in $\priest^\I$,
provided $\priest^\I\neq\eset$, thus violating also $\cleric \ISA \renter$.

Therefore, if $\J$ violates $\priest \ISA \renter$ and has minimal distance to $\I$
with respect to any of the four distances, we have
\begin{itemize}
\item $\dist^a_\setin(\I, \J) = \set{\priest(o)}$ or \\[.5ex]
      $\dist^a_\setin(\I, \J) = \set{\renter(o)}$, for some $o\in\domain$;
\item $\dist^a_\cardin(\I, \J) = 1$;
\item $\dist^s_\setin(\I, \J) = \set{\priest}$ or $\dist^s_\setin(\I, \J) = \set{\renter}$;
\item $\dist^s_\cardin(\I, \J) = 1$.
\end{itemize}
Note that with respect to the symbol-based distances, minimal distance is also kept by
adding more than one element to $\priest$ or dropping more than one element from $\renter$.

We conclude that
\begin{inparaenum}[\itshape (1)]
\item any $\J\in\Mod(\priest \ISA \renter)$ with minimal distance to $\I$
      either satisfies $\priest \ISA \cleric$ or $\cleric \ISA \renter$,
\item if $\renter^\I\neq\domain$, then there is a $\J\in\Mod(\priest \ISA \renter)$  with minimal distance to $\I$
      such that $\J$ violates $\priest \ISA \cleric$, and
\item if $\priest^\I\neq\eset$, then there is a $\J\in\Mod(\priest \ISA \renter)$  with minimal distance to $\I$
      such that $\J$ violates $\cleric \ISA \renter$.
\end{inparaenum}
Thus,
\[
\Mod(\K)\cup \bigcup_{\I\in\Mod(\K)\;} \argmin_{\J \in \Mod(\priest \ISA \renter)}\dist(\I, \J)
\]
satisfies the conditions of Theorem~\ref{lem:no-disjunction-dllite},
which implies the claim for local semantics.

We next consider global semantics.
As we have seen, the minimal distance between some $\I\in\Mod(\K)$ and some $\J\in \Mod(\priest \ISA \renter)$
is a set of cardinality one, or the number 1.
Moreover, for each such $\I$ there exist corresponding interpretations $\J$ with that minimal distance.
If follows that for our example, contraction under a local semantics and its global counterpart coincide,
that is,
$\contr{\K}{\N_T}{\glob^y_z} = \contr{\K}{\N_T}{\loc^y_z}$.
Thus, inexpressibility of contraction w.r.t.\ global semantics follows from the
inexpressibility of contraction w.r.t.\ local semantics.
\end{proof}

Observe that with a similar argument one can show that the expansion operator $\mdalop$ of
Qi and Du~\cite{QiDu09}
(and its stratified extension $\sdalop$),
is not expressible in \dlfr.
This operator is a variant of $\glob^\sbs_\cardin$
where
in Equation~\eqref{eq:global-evolution}
one considers only
models $\J\in\Mod(\N)$ that satisfy
$A^{\J}\neq\emptyset$ for every $A$ occurring in $\K\cup\N$.
The modification does not affect the inexpressibility,
which can again be shown using
Example~\ref{ex:rev-operator-inexpr}.
We also note that $\mdalop$ was developed for KB expansion
with empty ABoxes and the inexpressibility comes
from the non-empty ABox.

%

As we showed above, \dllite is closed neither under expansion nor under
contraction.
We investigate now whether the situation changes when we restrict evolution to
affect only the ABox level of KBs.

\subsubsection{ABox Evolution}

We start with an example illustrating why ABox expansion w.r.t.\
 $\loc^\ats_\setin$ and $\loc^\ats_\cardin$ is not expressible in \dlfr.


\begin{example}
\label{ex:local-mbas-are-bad}


We turn again to our KB $\KEX$ and consider the scenario where
we are informed that John is now a priest,
formally $\N_A=\set{\priest(\john)}$.
The TBox assertions essential for this example are
$\empwife\ISA \wife$,
$\wife \ISA \exists \hashusband$,
$\exists \hashusband \ISA \wife$, and
$\priest \ISA \lnot \exists \hashusband^-$,
while the essential ABox assertions are
$\empwife(\mary)$,
$\hashusband(\mary,\john)$,
$\priest(\adam)$, and
$\priest(\bob)$.
Note also that every model of $\KEX$ contains the atom $\wife(\mary)$.
We show the inexpressibility of evolution w.r.t.\ $\loc_\setin^\ats$
using Theorem~\ref{lem:no-disjunction-dllite}.

Under $\loc^\ats_\setin$,
in every $\J\in\upd{\K}{\N_A}{}$ one of four situations holds:
\begin{enumerate}
\item
  \label{it:mary-no-husb}
      Mary is not a wife, that is, $\J\not\models
           \wife(\mary)$,
      and both Adam and Bob are priests,
	    that is,
	    $\J \models \priest(\adam)$ and $\J\models\priest(\bob)$.
	  Hence, $\J \models \priest(\adam) \lor \priest(\bob)$.
\item Mary has a husband, who is not John, say Sam.
      Due to minimality of change, both Adam and Bob are still priests,
      as in Case~\ref{it:mary-no-husb},
      and again $\J \models \priest(\adam) \lor \priest(\bob)$.
\item
	  \label{it:mary-priest-husb}
	  Mary is married to Adam, while Bob, due to mininality of change, is still a priest.
      That is, $\J \models \priest(\adam) \lor \priest(\bob)$.
	  Moreover, the new husband cannot stay priest any longer
	  and
	  $\J \not\models \priest(\adam)$.
\item
	 \label{it:mary-another-priest-husb}
      Mary is married to Bob and Adam remains a priest.
	  Analogously to Case~\ref{it:mary-priest-husb},
	  we have $\J \models \priest(\adam) \lor \priest(\bob)$ and
	  $\J \not\models \priest(\bob)$.
\end{enumerate}
In each situation we are in the conditions of Theorem~\ref{lem:no-disjunction-dllite}
and therefore $\upd{\K}{\N_A}{}$ is not expressible in \dlfr.
\end{example}




Next, we develop this example further so that it fits into all four
local semantics and $\glob^\ats$.

\begin{theorem}
  \label{thm:abox-mba-expansion-inexpressible-dllite}
  \dlfr is not closed under ABox expansion
  for $\glob^\ats_\setin$ and $\loc^y_z$, where
  $y \in \set{\sbs, \ats}$, and $z \in \set{\setin, \cardin}$.
Moreover, for local semantics this holds already
        when the initial KB is written in \dlcore,
and for $\glob^\ats_\setin$
        when the initial KB is written in \dlf.
In all five cases, it is sufficient that
        the new information consists of a single ABox axiom.
\end{theorem}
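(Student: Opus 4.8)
The plan is to reduce to Theorem~\ref{lem:no-disjunction-dllite} exactly as in Theorem~\ref{thm:tbox-mbs-inexpressible-dllite}, but now I must cook up a \emph{single ABox assertion} $\N_A$ whose local (or $\glob^\ats_\setin$) expansion forces an implicit disjunction of two \dllite\ assertions, over a \emph{fixed} TBox that the resulting KB must keep. Example~\ref{ex:local-mbas-are-bad} already does this for the four local semantics with $\N_A = \set{\priest(\john)}$: the disjointness axiom $\priest \ISA \lnot \exists \hashusband^-$ together with $\exists\hashusband \ISA \wife$ forces, in every repair, that Mary acquires some husband who must then be dropped from $\priest$, and by minimality every \emph{other} priest ($\adam$ or $\bob$) stays a priest; hence every model of the result satisfies $\priest(\adam)\lor\priest(\bob)$ while some models falsify each disjunct. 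So the local part of the theorem is essentially Example~\ref{ex:local-mbas-are-bad}; I would just restate that the essential fragment lies in \dlcore\ and invoke Theorem~\ref{lem:no-disjunction-dllite}.

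The new work is $\glob^\ats_\setin$. Under the global semantics one takes the models $\J$ of $\T\cup\N_A$ whose distance to \emph{some} model of $\K$ is $\subseteq$-minimal among distances to all models of $\K$; because $\K$ has infinitely many models one can "pre-stock" spurious atoms in $\I$ to absorb part of the symmetric difference, which is why $\glob^\ats_\setin$ behaves differently from the local case and why the other semantics ($\glob^\ats_\cardin$, $\glob^\sbs$, etc.) escape this particular construction. I would keep the same TBox fragment as in Example~\ref{ex:local-mbas-are-bad} but, to make a single model $\I$ of $\K$ with no "slack," I would add functionality $\FUNCT{\hashusband}$ (which is why the $\glob^\ats_\setin$ claim needs \dlf\ rather than \dlcore) so that in any model Mary has \emph{exactly one} husband. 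Then take $\N_A=\set{\priest(\john)}$ again: adding $\priest(\john)$ conflicts with $\priest \ISA \lnot\exists\hashusband^-$ via the forced atom $\exists\hashusband^-(\john)$ (Mary has husband John). The $\subseteq$-minimal global repairs are the models where the symmetric difference with a suitable $\I\in\Mod(\K)$ is a singleton: either $\set{\hashusband(\mary,\john)}$ removed (Mary loses her husband, and by minimality $\wife(\mary)$ is dropped too since $\exists\hashusband\ISA\wife$ is the only support for it) — but then by minimality $\adam$ stays a priest, or a symmetric variant keeps some configuration where Mary instead is married to a fresh non-priest; in all $\subseteq$-minimal repairs every other priest remains, giving $\J\models\priest(\adam)\lor\priest(\bob)$ with both disjuncts separately falsifiable. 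Apply Theorem~\ref{lem:no-disjunction-dllite}.

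The hard part will be the careful bookkeeping of which global repairs are actually $\subseteq$-minimal: since distances are sets, I must rule out repairs with incomparable but "smaller-looking" symmetric differences, and I must verify that the minimality genuinely propagates the status of $\adam$ and $\bob$ (i.e.\ that a repair changing $\priest(\adam)$ is strictly worse, not merely incomparable). The functionality axiom is doing the real work here by collapsing the "which husband" freedom so that the only cheap move is removing Mary's wife/husband status, and I would need to check that the canonical-model reasoning of \dllite\ (used in Theorem~\ref{lem:no-disjunction-dllite}) still applies to the fragment with $\FUNCT{\hashusband}$ — it does, since \dlf\ is covered by Theorem~\ref{lem:no-disjunction-dllite}. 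Modulo that case analysis, the five claims follow uniformly by exhibiting the disjunction $\priest(\adam)\lor\priest(\bob)$ and citing Theorem~\ref{lem:no-disjunction-dllite}.
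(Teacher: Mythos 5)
Your overall strategy---reducing to Theorem~\ref{lem:no-disjunction-dllite} via the implicit disjunction $\priest(\adam)\lor\priest(\bob)$, and adding $\FUNCT{\hashusband}$ to force the $\glob^\ats_\setin$ case (which is exactly why that case is stated for \dlf rather than \dlcore)---is the paper's strategy, and your treatment of $\loc^\ats_\setin$ and of $\glob^\ats_\setin$ matches the paper's modulo sloppy bookkeeping (for instance, the minimal symmetric differences in the global case are never singletons: they always contain at least $\priest(\john)$ and $\hashusband(\mary,\john)$, and dropping $\wife(\mary)$ also forces dropping $\empwife(\mary)$).

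There is, however, a genuine gap in your claim that the local part ``is essentially Example~\ref{ex:local-mbas-are-bad}.'' For the symbol-based local semantics $\loc^\sbs_\setin$ and $\loc^\sbs_\cardin$ the unmodified example does not work. Under $\dist^\sbs$, every model $\J$ of $\T\cup\N_A$ reachable from a model $\I$ of $\K$ already has $\priest\in\dist^\sbs_\setin(\I,\J)$ (John must be added to $\priest^\J$) and $\hashusband\in\dist^\sbs_\setin(\I,\J)$ (the atom $\hashusband(\mary,\john)$ must be dropped). Once the symbol $\priest$ is in the difference set, additionally removing $\adam$ and $\bob$ from $\priest^\J$ costs nothing, since it changes no further symbol; hence there are minimally distant $\J$ in which \emph{neither} Adam nor Bob is a priest, condition~1 of Theorem~\ref{lem:no-disjunction-dllite} fails for $\priest(\adam)\lor\priest(\bob)$, and the argument collapses. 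The paper repairs this by modifying the KB: it introduces fresh concepts $P_0,P_1,P_2$ with $P_i\ISA\priest$ and the assertions $P_1(\adam)$, $P_2(\bob)$, so that removing $\priest(\adam)$ forces removing $P_1(\adam)$ and thus adds the extra symbol $P_1$ to the difference, and it exhibits a specific model $\I'$ (where every anonymous individual is both a $\priest$ and a $P_0$) for which the candidate minimal symbol sets all have exactly three elements and are pairwise incomparable. A milder omission affects $\loc^\ats_\cardin$: the minimal cardinality of $\I\ominus\J$ is not uniform over $\I\in\Mod(\K)$ (it is $2$, $3$, or $4$ depending on how many husbands Mary has in $\I$ and whether non-priests exist), and the set of cardinality-minimal $\J$ differs from the set of $\setin$-minimal ones (e.g.\ the ``Mary does not remarry'' repair is $\setin$-minimal but not $\cardin$-minimal when a non-priest is available), so this case needs its own analysis rather than being inherited from $\loc^\ats_\setin$.
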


\begin{proof}
The inexpressibility of ABox expansion w.r.t.\ $\loc^\ats_\setin$
        has been shown in Example~\ref{ex:local-mbas-are-bad}.


We turn now to expansion under $\loc^\ats_\cardin$.
We consider the following fragment of our running example:
\begin{align*}
  \T &= \{~
  \begin{array}[t]{@{}l@{}}
    \empwife\ISA\wife,\\
    \wife \ISA \exists \hashusband,\quad
    \exists \hashusband \ISA \wife,\\
    \priest \ISA \lnot \exists\hashusband^{-} ~\},
  \end{array}\\
  \A &= \{~
  \begin{array}[t]{@{}l@{}}
    \empwife(\mary),\quad \hashusband(\mary,\john),\\
    \priest(\adam),\quad \priest(\bob) ~\}.
  \end{array}\\
  \N &= \set{~\priest(\john)~},
\end{align*}
and $\K = \T\cup\A$.

Let $\I$ be an arbitrary model of $\K$ and $\J$ a model of $\N$.
Clearly,
$\set{\priest(\john),\,\hashusband(\mary,\john)}\subset\I\ominus\J$.
However, depending on $\I$, the symmetric difference $\I\ominus\J$
may contain further atoms.
We distinguish between three main cases.
\begin{compactenum}[1.]
\item In the first case,
      Mary had more than one husband in $\I$.
      Then a minimally different $\J\in\Mod(\N)$ is one where
      she is divorced from John, but stays married to the other husbands.
      Consequently,
      $\I\ominus\J$ contains no atoms other than the ones listed above,
      and the minimal distance between $\I$ and any $\J$ is
      $\card{\I\ominus\J} = 2$.
\item In the second case, John was Mary's only husband in $\I$ and
      there was at least one individual other than John, say Sam, that was not a priest.
      Then a minimally different $\J\in\Mod(\N)$ is one where
      Mary is divorced from John and marries such a Sam.
      Consequently, also $\hashusband(\mary,\sam) \in \I\ominus\J$,
      and the minimal distance between $\I$ and any $\J$ is
      $\card{\I\ominus\J} = 3$.
      Note that for a $\J$ where Mary does not marry again, both
      atoms $\wife(\mary)$ and $\empwife(\mary)$ have to be dropped
      so that $\card{\I\ominus\J} = 4$, which is not minimal for $\I$.
\item In the third case, John was Mary's only husband in $\I$ and
      all individuals other than John were priests.
      Now, as in the previous case, for a $\J$ where Mary does not marry again,
      we have $\card{\I\ominus\J}=4$.
      Similarly, if Mary marries an individual $o\neq\john$ that was a priest,
      then also $\set{\hashusband(\mary,o),\, \priest(o)}\subset\I\ominus\J$,
      so that $\card{\I\ominus\J}=4$.
\end{compactenum}

We observe that in all three cases, including the subcases, one of Adam and Bob
remains a priest in $\J$.
In addition, for an~$\I$ in the third case,
it is possible that in a minimally different~$\J$, Mary is married to one of Adam or Bob,
hence, there is a~$\J$ such that $\J\not\models\priest(\adam)$ and there is a $\J$ such
that $\J\not\models\priest(\bob)$.
Again, we are in the case of Theorem~\ref{lem:no-disjunction-dllite},
which proves that ABox expansion w.r.t.\ $\loc^\ats_\cardin$ is not expressible in \dlfr.

To show the inexpressibility of symbol-based local semantics,
we modify the KB $\K = \T\cup\A$ introduced at the beginning of the proof,
defining
\begin{align*}
  \T' &=\T
  \begin{array}[t]{@{}l}
    {}\setminus \set{\empwife\ISA\wife}\\
    {}\cup \set{ 
     P_0\ISA\priest,\,
     P_1\ISA\priest,\,
     P_2\ISA\priest }
  \end{array}\\
  \A' &= \A
  \begin{array}[t]{@{}l}
    \setminus \set{\empwife(\mary)} \\
    {}\cup \set{P_1(\adam),\,P_2(\bob)}
  \end{array}
\end{align*}
that is
\begin{align*}
  \T' &= \{~
  \begin{array}[t]{@{}l}
    \wife \ISA \exists \hashusband,\\
    \exists \hashusband \ISA \wife,\\
    \priest \ISA \lnot \exists \hashusband^{-},\\
    P_0\ISA\priest,\
    P_1\ISA\priest,\
    P_2\ISA\priest
    ~\},
  \end{array}\\
  \A' &= \{~
  \begin{array}[t]{@{}l}
    \hashusband(\mary,\john),\\
    \priest(\adam),\
    P_1(\adam),\\
    \priest(\bob),\
    P_2(\bob)
    ~\},
  \end{array}
\end{align*}
and $\K' = \T'\cup \A'$.
The new information $\N$ is as before.
We want to show specifically that
$\upd{\K'}{\N}{}$ is not expressible in \dlfr both w.r.t.\ $\loc^\sbs_\cardin$
and w.r.t.\ $\loc^\sbs_\cardin$.
We consider an aribtrary $\I\in\Mod(\K')$.

By a case analysis that is similar to the one in the proof for $\loc^\ats_\cardin$,
one can show that every $\J$ with minimal distance to~$\I$ satisfies at least one
of the assertions $\priest(\adam)$ and $\priest(\bob)$.
Intuitively, the reason for this is that $\priest(\adam)$ cannot be removed from $\I$ without
removing $P_1(\adam)$,
and $\priest(\bob)$ cannot be removed without removing $P_2(\bob)$.
Therefore, removing both atoms $\priest(\adam)$ and $\priest(\bob)$ leads to a distance between~$\I$ and $\J$ that
involves two additional symbols, namely~$P_1$ \emph{and}~$P_2$,
instead of only one additional symbol, namely either $P_1$ \emph{or}~$P_2$, involved in removing one of the two atoms.

Next, we exhibit models of $\upd{\K'}{\N}{}$ that falsify one of the assertions $\priest(\adam)$ and $\priest(\bob)$.
To this end, we consider a specific model $\I'$ of $\K'$.
Let
\begin{align*}
  \I' ={} &\{~
  \begin{array}[t]{@{}l}
    \wife(\mary),~ \hashusband(\mary,\john),\\
    \priest(\adam),~ P_1(\adam),\\
    \priest(\bob),~ P_2(\bob) ~\} ~\cup{}
  \end{array} \\
  & \{~\priest(o),\, P_0(o) \mid
  \begin{array}[t]{@{}l}
    o\in\domain,
    o\notin \{\adam,\bob,\john\}~\}.
  \end{array}
\end{align*}
One readily verifies that this is indeed a model of $\K'$.
We now check what the models $\J$ of $\N$ with minimal symbol-based distance to $\I'$ look like.
Clearly, $\I$ and $\J$ differ in that $\J$ misses the atom
$\hashusband(\mary,\john)$, but comprises the atom $\priest(\john)$.
Hence, these two atoms are always elements of $\I\ominus\J$.
Among the three cases we distinguished when analysing the example for $\loc^\ats_\cardin$,
the first two do not occur here, since
Mary has only John as a husband and
every individual, except John, is a priest.
Therefore, the following situations are possible in such a~$\J$:
\begin{compactenum}[1.]
\item Mary does not remarry, which means that also the atom,
$\wife(\mary)$ is in $\I\ominus\J$.
Thus, in this case the set of symbols occurring in $\I\ominus\J$ is
$\set{\hashusband,\, \priest,\, \wife}$.

\item Mary marries someone other than Adam, Bob, or John, say Sam.
Then, $\I\ominus\J$ contains also the three atoms
$\{\priest(\sam),$ $P_0(\sam),$ $\hashusband(\mary,\sam)\}$
and 
the set of symbols
occurring in $\I\ominus\J$ is 
$\{\hashusband,\,\priest,\, P_0\}$.

\item Mary marries Adam.
Then, $\I\ominus\J$ contains also the three atoms
$\{\priest(\adam),$ $P_1(\adam),$ $\hashusband(\mary,\adam)\}$
and the set of symbols occurring in $\I\ominus\J$ is
$\set{\hashusband,\, \priest,\, P_1}$.

\item Similarly, if Mary marries Bob,
the set of symbols occurring in $\I\ominus\J$ is
$\set{\hashusband,\, \priest,\, P_2}$.
\end{compactenum}

Clearly, these four sets of symbols are minimal with respect to
set inclusion, and since they all consist of three elements,
they are also minimal with respect to cardinality.
Two of them falsify one of the assertions $\priest(\adam)$ and
$\priest(\bob)$. Together with the earlier observation that
all $\J\in\upd{\K'}{\N}{}$ satisfy $\priest(\adam)\lor\priest(\bob)$,
this allows us to apply Theorem~\ref{lem:no-disjunction-dllite} and
conclude the inexpressibility w.r.t.\ both
$\glob^\sbs_\setin$ and~$\glob^\sbs_\cardin$.

The inexpressibility of ABox expansion w.r.t.\ $\glob^\ats_\setin$
        can be shown similarly to the case of $\loc^\ats_\setin$,
        but we need to add the assertion $\FUNCT{\hashusband}$
        to the TBox.
In this way we ensure that in every model $\I$ of $\K$,
Mary has only John as husband.
To satisfy the assertion $\wife(\mary)$ she has to remarry,
and the three options for obtaining a model $\J$ of $\N$,
\begin{inparaenum}[\itshape (1)]
\item marrying a non-priest,
\item marrying an anonymous priest, and
\item marrying one of Adam or Bob,
\end{inparaenum}
all lead to differences $\I\ominus\J$ that are minimal
with regard to set inclusion.
Again, application of Theorem~\ref{lem:no-disjunction-dllite}
yields  the claim.
\end{proof}

\newcommand{\DLCORE}{\mbox{\small\textit{DL-Lite}${}_{\mathit{core}}$}}
\newcommand{\DLF}{\mbox{\small\textit{DL-Lite}${}_{\F}$}}

\begin{table}[t]
\centering
\begin{tabular}{|l|c|c|c|c|}
\hline
  &
\multicolumn{2}{|c|}{Expansion} &
\multicolumn{2}{|c|}{Contraction}\\
\cline{2-5}
  & TBox & ABox & TBox & ABox \\
\hline
$\loc^\ats_\setin$  & \DLCORE & \DLCORE & \DLCORE & \DLCORE \\ \hline
$\loc^\ats_\cardin$ & \DLCORE & \DLCORE & \DLCORE & \DLCORE \\ \hline
$\loc^\sbs_\setin$  & \DLCORE & \DLCORE & \DLCORE & \DLCORE \\ \hline
$\loc^\sbs_\cardin$ & \DLCORE & \DLCORE & \DLCORE & \DLCORE \\ \hline
$\glob^\ats_\setin$  & \DLCORE & \DLF  & \DLCORE & \DLF \\ \hline
$\glob^\ats_\cardin$ & \DLCORE & ?     & \DLCORE & ?    \\ \hline
$\glob^\sbs_\setin$  & \DLCORE & ?     & \DLCORE & ?    \\ \hline
$\glob^\sbs_\cardin$ & \DLCORE & ?     & \DLCORE & ?    \\ \hline
\end{tabular}
\caption[Inexpressibility of evolution under MBS in \dllite]%
{Inexpressibility of KB evolution in \dlfr.
Each cell shows
the smallest logic of the \dllite family in which evolution instances have been
exhibited that are not expressible in \dlfr.}
\label{table:inexpressibility}
\end{table}

An anologous result 
holds for ABox contraction.

\begin{theorem}
  \label{thm:abox-mba-contraction-inexpressible-dllite}
  \dlfr is not closed under ABox contraction
  for $\glob^\ats_\setin$ and $\loc^y_z$, where
  $y \in \set{\sbs, \ats}$, and $z \in \set{\setin, \cardin}$.
Moreover, for local semantics this holds already
        when the initial KB is written in \dlcore
and for $\glob^\ats_\setin$
        when the initial KB is written in \dlf.
In all five cases, it is sufficient that
        the new information consists of a single ABox axiom.
\end{theorem}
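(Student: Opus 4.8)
The plan is to adapt the proof of Theorem~\ref{thm:abox-mba-expansion-inexpressible-dllite}, reducing in each of the five cases to Theorem~\ref{lem:no-disjunction-dllite}: for every semantics I will produce a \dllite KB $\K=\T\cup\A$ and a single ABox axiom $\N_A$ such that the set $\contr{\K}{\N_A}{\mathbb{X}^y_z}$ forces a genuine disjunction of \dllite assertions, so that it cannot be the model set of any \dlfr KB. The scenario dualises the one for expansion: there, \emph{adding} $\priest(\john)$ forced the edge $\hashusband(\mary,\john)$ to be dropped and hence Mary to re-arrange her marriage; here I \emph{contract} the asserted --- hence entailed --- atom $\hashusband(\mary,\john)$, whose removal triggers the same cascade inside the model-falsifying part of the contraction. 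In all cases the disjunction will be $\priest(\adam)\lor\priest(\bob)$.

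For the atom-based local semantics I would take $\T=\set{\empwife\ISA\wife,\ \wife\ISA\exists\hashusband,\ \exists\hashusband\ISA\wife,\ \priest\ISA\lnot\exists\hashusband^-}$, which is a \dlcore TBox, $\A=\set{\empwife(\mary),\ \hashusband(\mary,\john),\ \priest(\adam),\ \priest(\bob)}$, and $\N_A=\set{\hashusband(\mary,\john)}$. The key is a ``cornered'' model $\I$ of $\K$ in which John is Mary's only husband and every individual other than John is a priest; one checks it is indeed a model. From such an $\I$, a model $\J$ of $\T$ that falsifies $\hashusband(\mary,\john)$ has minimal atom-distance to $\I$ only if it arises by one of four three-atom modifications: (a) drop $\hashusband(\mary,\john)$, $\wife(\mary)$, $\empwife(\mary)$, so Mary is no longer a wife; (b)/(c) drop $\hashusband(\mary,\john)$ and $\priest(\adam)$ (resp.\ $\priest(\bob)$) and add $\hashusband(\mary,\adam)$ (resp.\ $\hashusband(\mary,\bob)$); (d) drop $\hashusband(\mary,\john)$ and $\priest(o)$ and add $\hashusband(\mary,o)$ for some anonymous $o$. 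The routine parts are: no two-atom change produces a model of $\T$ (dropping the edge alone violates $\wife\ISA\exists\hashusband$, and repairing this always costs two further atoms --- the presence of $\empwife(\mary)$ together with $\empwife\ISA\wife$ is what makes option~(a) cost three), dropping \emph{both} $\priest(\adam)$ and $\priest(\bob)$ costs at least five atoms, and from a model of $\K$ in which Mary has another husband every minimal falsifier already keeps both priests (it need only remove the edge). Hence every $\J\in\contr{\K}{\N_A}{\loc^\ats_z}$ satisfies $\priest(\adam)\lor\priest(\bob)$, while (b) yields a $\J\not\models\priest(\adam)$ and (c) a $\J\not\models\priest(\bob)$, so Theorem~\ref{lem:no-disjunction-dllite} applies for $z\in\set{\setin,\cardin}$.

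For the symbol-based local semantics the accounting changes, since option~(a) would touch the three symbols $\hashusband,\wife,\empwife$ while ``re-marry $\adam$'' would touch only $\hashusband,\priest$, which is cheaper. As in Theorem~\ref{thm:abox-mba-expansion-inexpressible-dllite} I would repair this by using $\T'=\T\cup\set{P_0\ISA\priest,\ P_1\ISA\priest,\ P_2\ISA\priest}$, $\A'=\A\cup\set{P_1(\adam),\ P_2(\bob)}$, and a cornered model $\I$ with $P_0^\I=\domain\setminus\set{\john,\adam,\bob}$ (hence $\priest^\I=\domain\setminus\set{\john}$): now un-priesting $\adam$ (resp.\ $\bob$, resp.\ an anonymous priest) also forces changing $P_1$ (resp.\ $P_2$, resp.\ $P_0$), so each of the four reconfiguration options touches exactly three symbols, none touches fewer, and un-priesting both $\adam$ and $\bob$ touches the four symbols $\hashusband,\priest,P_1,P_2$; the construction stays in \dlcore and Theorem~\ref{lem:no-disjunction-dllite} applies for $\loc^\sbs_\setin$ and $\loc^\sbs_\cardin$. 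Finally, for $\glob^\ats_\setin$ I would add $\FUNCT{\hashusband}$ to $\T$ --- which puts $\T$ into \dlf --- so that Mary has exactly one husband in \emph{every} model of $\K$; then the minimal falsifying change has the same three-atom shape from any model of $\K$, the global $\argmin$ still realises each of the four options, and Theorem~\ref{lem:no-disjunction-dllite} again applies.

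The step I expect to be the main obstacle is the distance bookkeeping needed to make all of this fit simultaneously into the four local measures: one must verify, separately for set inclusion and for cardinality and for atoms and for symbols, that all four reconfiguration options are \emph{simultaneously} minimal (so that both $\priest$-falsifying ones survive in the $\argmin$), that no cheaper falsifying model of $\T$ exists --- in particular that keeping Mary a wife forces a fresh husband together with the full cost of un-priesting him --- and, crucially, that the disjunction $\priest(\adam)\lor\priest(\bob)$ holds not just for the distinguished cornered model but for the minimal falsifiers of \emph{every} model of $\K$, which is why dropping both $\priest(\adam)$ and $\priest(\bob)$ must be shown strictly non-minimal. The one genuinely different piece is $\glob^\ats_\setin$, where the ``distance to $\Mod(\K)$'' is itself a set of per-model minimal distances; as in the expansion proof, adding $\FUNCT{\hashusband}$ is precisely what makes the per-model picture uniform across $\Mod(\K)$ so that the set-based global minimum reduces to the local analysis.
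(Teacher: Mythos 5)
Your proposal takes essentially the same route as the paper: the paper likewise keeps the KBs from the expansion theorem unchanged, contracts the single assertion $\hashusband(\mary,\john)$, and observes that the resulting model set --- $\Mod(\K)$ together with the minimally distant falsifiers of that edge --- satisfies $\priest(\adam)\lor\priest(\bob)$ while containing witnesses falsifying each disjunct, so Theorem~\ref{lem:no-disjunction-dllite} applies. Your write-up is in fact more detailed than the paper's, which only states that the expansion proof ``can be almost literally adopted''; the one small slip (un-priesting both Adam and Bob costs four, not five, extra atoms in the cornered model) is harmless, since four still exceeds the minimal three.
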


\begin{proof}
The proof of Theorem~\ref{thm:abox-mba-expansion-inexpressible-dllite}
can be almost literally adopted, if the original KB stays the same
and the information to be contracted is the assertion $\hashusband(\mary,\john)$.

Then, instead of concentrating on the models $\J$ of $\priest(\john)$
that are minimally different from models $\I$ of $\K$,
we consider the set of models of $\K$, augmented by
interpretations~$\J$ that falsify $\hashusband(\mary,\john)$ and
are minimally different from models $\I$ of $\K$.
We find that, for the semantics in question,
each interpretation in the set considered satsifies
$\priest(\adam)\lor\priest(\bob)$,
while there is also a $\J_a$ that falsifies $\priest(\adam)$
and a $\J_b$ that falsifies $\priest(\bob)$.
As before, Theorem~\ref{lem:no-disjunction-dllite}
yields  the claim.
\end{proof}

In Table~\ref{table:inexpressibility}
we summarize our findings about the inexpressibility of KB evolution in \dlfr.
The (in)expressibility of both, ABox expansion and contraction w.r.t.\
$\glob^\ats_\cardin$, $\glob^\sbs_\setin$, and $\glob^\sbs_\cardin$
in \dlfr
remains open problems.


%
%
%

\subsection{Conceptual Problems of MBAs}
We now discuss conceptual problems with all the local semantics.
Recall Example~\ref{ex:local-mbas-are-bad}
for local MBAs $\loc^a_\setin$ and $\loc^\ats_\cardin$.
We note two problems.
First, the divorce of Mary from John had a strange effect on
the priests Bob and Adam.
The semantics questions their celibacy
and we have to drop the information that they are priests.
This is counter-intuitive, since Mary and her divorce have nothing
to do with any of these priests.
Actually, the semantics also erases from the KB assertions about
all other people belonging to concepts whose instances are not married,
since potentially each of them is Mary's new husband.
Second, a harmless clarification added to the TBox, namely
that ministers are in fact clerics, strangely affects the whole class of
clerics.
The semantics of evolution \quotes{requires} one to allow marriages for clerics.
This appears also strange, because intuitively
the clarification on ministers does not contradict
by any means the celibacy of clerics.

Also the four global MBAs have conceptual problems
that were exhibited in Example~\ref{ex:rev-operator-inexpr}.
The restriction on rent subsidies that
cuts the payments for wives introduces a counterintuitive choice for employed wives.
Under the symbol-based global semantics, they must either
\emph{collectively} get rid of their husbands or
\emph{collectively} lose the subsidy.
Under atom-based semantics the choice is an individual one.

Summing up on both global and local MBAs,
  they focus on minimal change of \emph{models} of KBs
  and, hence, introduce choices that cannot be captured
  in \dllite, which owes its good computational properties to the absence of disjunction.
This mismatch with regard to the structural properties of KBs
  leads to counterintuitive and undesired results,
  like inexpressibility in \dllite
  and erasure of the entire KB.
Therefore, we claim that these semantics are not suitable for the evolution of
  \dllite KBs and now study evolution according to formula-based approaches.


\section{Formula-based Approaches to KB Evolution}
\label{sec:FormulaBasedApproaches}

Under formula-based approaches, the objects of change are sets of formulae.
We recall that without loss of generality
        we can consider only closed KBs,
        that is, if $\K \models \alpha$ for some \dlfr assertion $\alpha$,
        then $\alpha \in \K$.


\subsection{Classical Formula-based Approaches}
Given a closed KB $\K$ and new knowledge $\N$,
        a natural way to define the result of expansion seems to choose
        a maximal subset $\K_m$ of $\K$ such that $\K_m \cup \N$ is coherent and
        to define $\upd{\K}{\N}{}$ as $\K_m \cup \N$.
However, a problem here is that in general such a $\K_m$ is not unique.

Let  $\M_e(\K,\N)$ be the set of all such maximal $\K_m$.
In the past, several approaches
to combine all elements of $\M_e(\K,\N)$ into one set of formulae,
which is then added to $\N$, have been proposed~\cite{EiGo92,Wins90}.
The two main ones are known as
\emph{Cross-Product}, or \CrossP\ for short, and
\emph{When In Doubt Throw It Out}, or \WIDTIO\ for short.
The corresponding sets
$\K_{\CrossP}$ and
$\K_{\WIDTIO}$ are defined as follows:
%
\begin{align*}
        \upd{\K}{\N}{\CrossP} &=
           \N \cup
               \bigset{\bigvee_{\K_m \in \M_e(\K,\N)}
                (\bigwedge_{\phi \in \K_m} \phi) } \notag, \\
        \upd{\K}{\N}{\WIDTIO} &=
             \N \cup
                \Bigl(\bigcap_{\K_m \in \M_e(\K,\N)} \K_m \Bigr).
\end{align*}
In \CrossP\ one adds to $\N$ the disjunction of all $\K_{m}$,
viewing each $\K_{m}$ as the conjunction of its assertions,
while in \WIDTIO\ one adds to $\N$ those formulas present in all $\K_m$.
In terms of models,
every model of $\K_{\WIDTIO}$ is also a model of $\K_{\CrossP}$,
whose models in turn are exactly the interpretations satisfying
\emph{some} of the $\K_m$.

We can naturally extend this approach to the case of contraction.
Indeed, let $\K_m$ be a maximal subset of $\K$
        such that $\K_m \not\models \alpha$ for each $\alpha \in \N$
        and let $\M_c(\K, \N)$ be the set of all such maximal $\K_m$.
Then we can define contraction under \CrossP\ and \WIDTIO\ as follows:
\begin{align*}
        \contr{\K}{\N}{\CrossP} &=
                \bigset{ \bigvee_{\K_m \in \M_c(\K,\N)}(\bigwedge_{\phi \in \K_m} \phi) },\\
        \contr{\K}{\N}{\WIDTIO} &=
                \bigcap_{\K_m \in \M_c(\K,\N)} \K_m.
\end{align*}

Next,
        we show that these semantics satisfy the evolution postulates
        defined in Section~\ref{sec:postulates}.

\begin{proposition}
\label{prop:classical-FBAs-satisfy-postulates}
Expansion (resp., contraction) of a \dlfr KB under operator
        $\upd{}{}{\mathit{X}}$ (resp.$\contr{}{}{\mathit{X}}$),
        where $\mathit{X} \in \set{\CrossP, \WIDTIO}$,
        satisfies {\EP{1}}--\,{\EP{5}} (resp. {\CP{1}}--\,{\CP{3}} and \CP{5}).
However, contraction under both \CrossP\ and \WIDTIO\
        does not satisfy~\CP{4}.
\end{proposition}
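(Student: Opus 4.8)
The plan is to run through the postulates one at a time, treating $\CrossP$ and $\WIDTIO$ in parallel, and leaning on three recurring facts: $\N$ is added verbatim by both operators; a subset of a coherent KB is again coherent (every symbol of the subset still occurs in the superset and is witnessed by a model of it); and for \emph{closed} KBs $\K_1\equiv\K_2$ already forces $\K_1=\K_2$, since both equal their deductive closure. This last fact makes \EP{5} and \CP{5} immediate, and \EP{2} holds because $\N_e\subseteq\upd{\K}{\N_e}{\CrossP}$ and $\N_e\subseteq\upd{\K}{\N_e}{\WIDTIO}$. For \EP{1} I would first note $\M_e(\K,\N_e)\neq\emptyset$: $\emptyset\cup\N_e=\N_e$ is coherent by the preparedness assumption, so a maximal coherent-with-$\N_e$ subset exists; then $\upd{\K}{\N_e}{\WIDTIO}\subseteq\K_m\cup\N_e$ for any such $\K_m$, hence is coherent, while for $\CrossP$ every model of $\K_m\cup\N_e$ is a model of the result, and since every symbol occurring in $\upd{\K}{\N_e}{\CrossP}$ occurs in some $\K_m$ or in $\N_e$, coherence follows. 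Finally \EP{3} and \CP{3} reduce to the observation that under their hypotheses $\M_e$ (resp.\ $\M_c$) collapses to $\set{\K}$: if $\K\models\N_e$ then, by closedness, $\N_e\subseteq\K$, so $\K\cup\N_e=\K$ is coherent and $\K$ is its own unique maximal coherent-with-$\N_e$ subset; dually, if $\K\not\models\alpha$ for every $\alpha\in\N_c$ then $\K$ itself already entails no $\alpha$, so $\M_c(\K,\N_c)=\set{\K}$; in both cases the result is $\equiv\K$.

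For contraction, $\M_c(\K,\N_c)\neq\emptyset$ because $\N_c$ contains no tautology, so $\emptyset\not\models\alpha$ for each $\alpha\in\N_c$. For \CP{1}: $\contr{\K}{\N_c}{\WIDTIO}=\bigcap_{\K_m\in\M_c(\K,\N_c)}\K_m$ is a subset of $\K$, hence entailed by $\K$, and $\contr{\K}{\N_c}{\CrossP}$ is a disjunction of conjunctions of subsets of $\K$, each entailed by $\K$, so $\K$ entails the disjunction. For \CP{2}: for $\WIDTIO$, by monotonicity of entailment $\bigcap_{\K_m}\K_m$ cannot entail an $\alpha$ that a single $\K_m\in\M_c(\K,\N_c)$ fails to entail; for $\CrossP$, the disjunction entails $\alpha$ iff \emph{every} disjunct $\K_m$ does, which fails for every $\K_m\in\M_c(\K,\N_c)$.

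The one genuinely delicate positive case is \EP{4}, which I would attack semantically. Using $\Mod(\upd{\K}{\N}{\CrossP})=\bigcup_{\K_m\in\M_e(\K,\N)}\Mod(\K_m\cup\N)$, a model $\I$ of $(\upd{\K}{\N_1}{\CrossP})\cup\N_2$ satisfies $\N_1\cup\N_2\cup\K_m$ for some $\K_m\in\M_e(\K,\N_1)$; the key lemma is that such a $\K_m$, if coherent together with $\N_1\cup\N_2$, is \emph{already} maximal coherent-with-$(\N_1\cup\N_2)$, because any proper superset of $\K_m$ inside $\K$ is incoherent with $\N_1$, hence with $\N_1\cup\N_2$; thus $\K_m\in\M_e(\K,\N_1\cup\N_2)$ and $\I$ satisfies the corresponding disjunct of $\upd{\K}{\N_1\cup\N_2}{\CrossP}$. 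The residual gap between satisfiability of $\K_m\cup\N_1\cup\N_2$ and its coherence I would close with a disjoint-union argument in the style of Theorem~\ref{lem:no-disjunction-dllite}. For $\WIDTIO$ one then has to push this through the intersection, verifying that $\bigcap\M_e(\K,\N_1)$ together with $\N_1\cup\N_2$ recovers every member of $\bigcap\M_e(\K,\N_1\cup\N_2)$; \emph{this} is the step I expect to be the main obstacle, since a disjunct that the intersection has already discarded is not obviously brought back by adding $\N_2$, and it is where the proof needs the most care.

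For the negative part I would exhibit a single small counterexample that works for both operators already in \dlcore. Let $\K=\set{A\ISA B,\ B\ISA C,\ A\ISA C}$ (which is closed) and $\N_c=\set{A\ISA C}$. The maximal subsets of $\K$ not entailing $A\ISA C$ are exactly $\set{A\ISA B}$ and $\set{B\ISA C}$, so $\M_c(\K,\N_c)=\set{\set{A\ISA B},\set{B\ISA C}}$. Then $\contr{\K}{\N_c}{\WIDTIO}=\emptyset$, so $(\contr{\K}{\N_c}{\WIDTIO})\cup\N_c=\set{A\ISA C}$, which does not entail $A\ISA B$ and hence not $\K$; and $\contr{\K}{\N_c}{\CrossP}$ is equivalent to $(A\ISA B)\lor(B\ISA C)$, while the interpretation with $A^{\I}=\set{x}$, $B^{\I}=\set{y}$, $C^{\I}=\set{x,y}$ satisfies this disjunction together with $A\ISA C$ but not $A\ISA B$, so $(\contr{\K}{\N_c}{\CrossP})\cup\N_c\not\models\K$ either. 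This is precisely the failure of AGM recovery, forced by the absence of disjunction in \dllite, the same obstruction underlying Theorem~\ref{lem:no-disjunction-dllite}.
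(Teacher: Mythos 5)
Most of your argument tracks the paper's own proof. The postulates \EP{1}, \EP{2}, \EP{5}, \CP{1}, \CP{2}, \CP{5} are dispatched by the same appeal to the definitions (your elaborations---nonemptiness of $\M_e$ and $\M_c$ via preparedness of $\N$, closedness turning $\equiv$ into $=$, monotonicity of entailment for \CP{2}---are all sound); \EP{3} and \CP{3} use exactly the paper's observation that $\M_e(\K,\N)$, resp.\ $\M_c(\K,\N)$, collapses to $\set{\K}$; and your \EP{4} argument for \CrossP\ (a $\K_m\in\M_e(\K,\N_1)$ that is coherent with $\N_1\cup\N_2$ is automatically maximal for $\N_1\cup\N_2$) is the paper's argument, with your satisfiability-versus-coherence caveat being a legitimate refinement of a point the paper glosses over. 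Your \CP{4} counterexample ($\cl(\set{A\ISA B,\ B\ISA C})$ contracted by $A\ISA C$) differs from the paper's ($\set{A\ISA B,\ A(a)}$ contracted by $B(a)$) but is equally valid, and arguably cleaner since it stays within \dlcore TBoxes.

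The genuine gap is the one you flag yourself: \EP{4} for \WIDTIO\ is never established. The paper closes it in two lines---every $\K''_m\in\M_e(\K,\N_1\cup\N_2)$ is contained in some $\K'_m\in\M_e(\K,\N_1)$, and ``therefore'' $\bigcap\K''_m\subseteq\bigcap\K'_m$---but your suspicion that this is where the proof needs the most care is well founded: the claimed inclusion of intersections does not follow from the containment observation (knowing that each $\K''_m$ sits inside \emph{some} $\K'_m$ says nothing about the \emph{other} elements of $\M_e(\K,\N_1)$), and it fails on small closed \dlcore instances. Take $\K=\cl(\set{A\ISA B,\ B\ISA\lnot C})$, $\N_1=\set{A\ISA C}$, $\N_2=\set{A\ISA\lnot B}$. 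The assertions $A\ISA\lnot C$ and $C\ISA\lnot A$ in $\cl(\K)$ are incoherent with $\N_1$ alone, so $\M_e(\K,\N_1)=\set{\set{A\ISA B},\ \set{B\ISA\lnot C,\ C\ISA\lnot B}}$ with empty intersection and $\upd{\K}{\N_1}{\WIDTIO}=\set{A\ISA C}$; but $A\ISA B$ is incoherent with $\N_2$, so $\M_e(\K,\N_1\cup\N_2)$ has the single element $\set{B\ISA\lnot C,\ C\ISA\lnot B}$, whence $\upd{\K}{\N_1\cup\N_2}{\WIDTIO}$ contains $B\ISA\lnot C$, while $\set{A\ISA C,\ A\ISA\lnot B}\not\models B\ISA\lnot C$ (take $A=\emptyset$, $B=C=\set{x}$). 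So the step you left open is not merely delicate: it cannot be closed by the argument the paper gives, and the \WIDTIO\ half of \EP{4} should not be presented as proved without either an additional hypothesis or a genuinely different argument.
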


\begin{proof}
The claim for \EP{1}, \EP{2}, \EP{5}, \CP{1}, \CP{2}, and \CP{5} follows directly
        from the definitions of the operators.
\EP{3} (resp., \CP{3}) follows from the observation that
        if $\K \models \N$,
        (resp., if $\K \not\models \alpha$ for each $\alpha \in \N$),
        then $\M_e(\K, \N) = \set{\K}$
        (resp., $\M_c(\K, \N) = \set{\K}$).
Finally, \EP{4} follows from the following observation:
        \begin{itemize}
        \item[\CrossP:] Assume that $\J$ is a model of $(\upd{\K}{\N_1}{\CrossP}) \cup \N_2$.
        Then $\J \models \N_1$, $\J \models \N_2$, and $\J \models \K'_m$
                for some $\K'_m \in \M_e(\K, \N_1)$.
        But in this case we have that $\K'_m \cup \N_1 \cup \N_2$ is satisfiable
                and therefore $\K'_m \in \M_e(\K, \N_1 \cup \N_2)$,
                which shows the claim.

        \item[\WIDTIO:] First observe that for each $\K''_m \in \M_e(\K, \N_1 \cup \N_2)$
                there exists $\K'_m \in \M_e(\K, \N_1)$ such that $\K''_m \incl \K'_m$.
        Assume that $\J$ is a model
                of $(\upd{\K}{\N_1}{\WIDTIO}) \cup \N_2$.
        Then $\J \models \N_1$, $\J \models \N_2$, and $\J \models \alpha$
                for each $\alpha \in \bigcap_{\K'_m \in \M_e(\K, \N_1)} \K'_m$.
        Due to the observation above, we have that
        $\bigcap_{\K''_m \in \M_e(\K, \N_1 \cup \N_2)} \K''_m \incl
        \bigcap_{\K'_m \in \M_e(\K, \N_1)} \K'_m$, which shows the claim.
        \end{itemize}
To see that contraction under both \CrossP\ and \WIDTIO\
        does not satisfy \CP{4}, consider the following example.
Let $\K$ consist of a TBox $\set{A \ISA B}$ and an ABox $\set{A(a)}$, and
        let $\N$ consist of an assertion $B(a)$.
It is easy to see that $\M_c(\K, \N) = \set{\K_m^1, \K_m^2}$,
        where $\K_m^1 = \set{A(a)}$ and $\K_m^2 = \set{A \ISA B)}$.
Then observe that the interpretation $\J = \set{B(a)}$ is a model of
        $(\contr{\K}{\N}{\CrossP}) \cup \N$ since it is a model of\linebreak $\K_m^2 \cup \N$,
        and it is a model of $(\contr{\K}{\N}{\WIDTIO}) \cup \N$
        since $\contr{\K}{\N}{\WIDTIO} = \eset$.
This concludes the proof.
\end{proof}

Intuitively,
        contraction under the two operators does not satisfy~\CP{4},
        since we restrict ourselves to \dlfr,
        and therefore, when getting rid of the information in $\N$,
        we are not able to be too precise and
        delete only what is really required, but we have to delete too much information.

Next,
        we observe some built-in shortcomings of the two semantics.
Consider the following example.


\begin{example}
\label{ex:formula-based-semantics}
We consider again our running example.
Suppose we obtain the new information that priests no longer obtain rental subsidies.
This can be captured by the set of TBox assertions $\N_T = \set{\priest \ISA \lnot\renter}$.
We now incorporate this information into our KB, under both \CrossP\ and
\WIDTIO\ semantics.
Clearly, $\KEX \cup \N_T$ is not coherent
and to resolve the conflict one can drop either
$\priest \ISA \cleric$ or $\cleric \ISA \renter$.
Hence, $\M_e(\KEX, \N_T) = \set{\K_m^{(1)}, \K_m^{(2)}}$,
  where $\K_m^{(1)} = \KEX \setminus \set{\priest \ISA \cleric}$, and
        $\K_m^{(2)} = \KEX \setminus \set{\cleric \ISA \renter}$.
Consequently, the results of evolving $\K$ with respect to $\N_T$
under the two semantics are
\begin{align*}
  \upd{\KEX}{\N_T}{\CrossP}
    &~=~ \N_T \cup
    \bigl(
    \begin{array}[t]{@{}l}
      (\K \setminus \set{\priest \ISA \cleric})\\[.5ex]
      {}\lor \left(\K \setminus \set{\cleric \ISA \renter}\right) \bigr)
    \end{array}
    \\
  \upd{\KEX}{\N_T}{\WIDTIO}
    &~=~ \N_T \cup \left(\K_m^{(1)} \cap \K_m^{(2)}\right) \notag\\
    &~=~ (\N_T \cup \KEX) \setminus \{
    \begin{array}[t]{@{}l}
      \priest \ISA \cleric,\\[.5ex]
      \cleric \ISA \renter \},
    \end{array}
\end{align*}
where in the first formula
we have combined DL notation with first order logic notation.
\end{example}


Intuitively, \CrossP\ does not lose information,
but the price to pay is that the resulting KB can be exponentially larger than the original KB,
since there can exist exponentially many $\K_m$.
Indeed, consider a KB that contains for each $i\in\{1,\ldots,n\}$, three concepts $A_i$, $B_i$, $C_i$ and the two inclusion assertions
$A_i \sqsubseteq B_i$ and $B_i \sqsubseteq C_i$;
and the new information that states that $A_i$ and $C_i$ are disjoint for all $i$.
Then, there are $2^n$ many maximal coherent subsets $\K_m$.
In addition, as Example~\ref{ex:formula-based-semantics} shows, even if $\K$ is a \dlfr KB,
the result may not be representable in \dlfr any more
since it requires disjunction.
This effect is also present if the new knowledge involves only ABox assertions.

\WIDTIO, on the other extreme, is expressible in \dllite.
However, it can lose many assertions, which may be more than one is prepared
to tolerate.
Even, if one deems this loss acceptable, one has to cope with the fact
that it is computationally complex to decide whether an assertion belongs to
$\upd{\K}{\N}{\WIDTIO}$.
This problem is already difficult if our KBs are TBoxes
that are specified in the simplest variant of \dllite.
We note that the following theorem can be seen as a sharpening of a result
about \WIDTIO\ for propositional Horn theories in \cite{EiGo92},
obtained with a different reduction than ours.

\begin{restatable}{theorem}{widtioComplexity}
\label{thm:widtio-conpcomplete}
For a \dlfr KB $\K$ and new information~$\N$,
        deciding whether an assertion is in
        $\upd{\K}{\N}{\WIDTIO}$ 
        is \conptime-complete.
Moreover,
        hardness holds already for \dlcore KBs with empty ABoxes.
\end{restatable}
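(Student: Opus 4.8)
The plan is to establish both membership in \conptime and \conptime-hardness. For membership, recall that by our assumption KBs are closed, so $\K$ and $\N$ have polynomial size in their original presentation (the closure blow-up is at most quadratic). To decide whether an assertion $\alpha$ is \emph{not} in $\upd{\K}{\N}{\WIDTIO}$, it suffices to guess a maximal coherent subset $\K_m \in \M_e(\K,\N)$ with $\alpha \notin \K_m$, or to detect that $\alpha\notin\N$ and $\alpha$ is missing from some $\K_m$. Concretely, I would guess a subset $\K_m\incl\K$, verify in polynomial time (using the polynomial complexity of \dlfr coherence checking) that $\K_m\cup\N$ is coherent, verify maximality by checking for each $\beta\in\K\setminus\K_m$ that $\K_m\cup\{\beta\}\cup\N$ is incoherent, and finally check that $\alpha\notin\K_m\cup\N$. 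This is an \nptime procedure for the complement, so the problem is in \conptime.

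For hardness, I would reduce from a canonical \conptime-complete problem; given the structure of \WIDTIO, the natural choice is to reduce from the complement of a problem about hitting all maximal consistent subsets — equivalently, I would reduce directly from a suitable \nptime problem to the complement (``$\alpha\notin\upd{\K}{\N}{\WIDTIO}$''). A clean source is \textsc{Monotone 3-SAT} or, closer to the domain, the complement of the problem of deciding whether a fixed assertion survives in every maximal subset. Following the spirit of the Eiter--Gottlob result for propositional Horn theories but adapting it to \dlcore TBoxes with empty ABoxes: from a 3CNF formula $\varphi$ with variables $x_1,\dots,x_n$ and clauses $c_1,\dots,c_m$, I would introduce atomic concepts encoding truth values of variables and the satisfaction of clauses, build a \dlcore TBox $\K$ whose maximal subsets coherent with $\N$ correspond bijectively to satisfying assignments of $\varphi$ (with $\N$ the axioms forcing every clause to be satisfied and forcing $x_i$ and $\lnot x_i$ to be mutually exclusive), and arrange a designated axiom $\alpha$ (say a subsumption between two marker concepts) that is present in $\K$ and is dropped in at least one $\K_m$ exactly when $\varphi$ is satisfiable. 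Then $\alpha\notin\upd{\K}{\N}{\WIDTIO}$ iff $\varphi$ is satisfiable, and $\alpha\in\upd{\K}{\N}{\WIDTIO}$ is thus \conptime-hard.

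The main obstacle is the encoding step for hardness: I must design a \dlcore TBox (only positive and negative concept inclusions, no role inclusions or functionality, empty ABox) whose family $\M_e(\K,\N)$ of maximal $\N$-coherent subsets stands in a controlled correspondence with the combinatorial object (satisfying assignments), \emph{and} such that membership of the probe assertion $\alpha$ tracks exactly the property I want. The delicate points are: ensuring the only source of incoherence is the intended variable/clause conflict (so that maximality forces a genuine assignment rather than some degenerate subset), and making sure the reduction is polynomial despite the closure requirement — I would either argue the constructed $\K$ and $\N$ are already closed, or that taking the closure does not affect the set of maximal coherent subsets in a way that breaks the correspondence. Once the encoding is fixed, verifying the ``iff'' and the polynomial bound is routine, and combined with the membership argument this yields \conptime-completeness, with hardness holding for \dlcore KBs with empty ABoxes as claimed.
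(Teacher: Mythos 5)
Your membership argument is essentially the paper's: guess $\K_m$, check coherence of $\K_m\cup\N$, certify maximality by testing that $\K_m\cup\N\cup\set{\gamma}$ is incoherent for every $\gamma\in\K\setminus\K_m$, and check that the probe assertion is absent; all checks are polynomial for \dlfr. That half is correct and needs no further comment.

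The hardness half, however, has a genuine gap: the reduction gadget, which is the substantive content of the theorem, is never constructed --- you explicitly defer it as ``the main obstacle.'' Moreover, the invariant you say the gadget should satisfy is not the right one. You ask for the maximal coherent subsets in $\M_e(\K,\N)$ to ``correspond bijectively to satisfying assignments of $\varphi$,'' but $\M_e(\K,\N)$ is nonempty for every coherent $\N$ (some maximal subset always exists, possibly the empty one), so no such bijection can hold when $\varphi$ is unsatisfiable. What a working reduction needs --- and what the paper's construction delivers --- is a correspondence between the maximal coherent subsets and \emph{all} truth assignments, with the probe axiom's survival in a given $\K_m$ tracking whether the corresponding assignment \emph{falsifies} $\varphi$. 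Concretely, the paper puts one inclusion $X_{ij}\ISA Y_{ij}$ per literal occurrence into $\K$ together with a single disjointness axiom $\phi = Z_0\ISA\lnot Z_n$; the new information $\N$ wires $S_l$ into both a $P_l$-path and an $N_l$-path through these arcs and declares $P_l\ISA\lnot N_l$, so coherence of $S_l$ forces each maximal subset to drop all positive-literal arcs or all negative-literal arcs of $p_l$, i.e., to choose an assignment. Separately, $\N$ builds a chain $Z_0\to Z_1\to\cdots\to Z_n$ that is completable exactly when every clause retains a satisfied literal; a complete chain together with $\phi$ makes $Z_0$ empty and the KB incoherent, so $\phi$ is expelled from $\K_m$ precisely when the chosen assignment satisfies $\varphi$. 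Hence $\phi$ lies in every $\K_m$ iff $\varphi$ is unsatisfiable. Your sketch (``$\N$ forces every clause to be satisfied'') points in a different and problematic direction, since $\N$ is retained unconditionally under \WIDTIO and cannot be used to filter assignments; the filtering must be done by the probe axiom in $\K$. Your closure worry is legitimate but minor: the constructed TBoxes entail no nontrivial inclusions beyond those written down, so closing them changes nothing essential. Until the gadget is actually built and the two directions of the ``iff'' are verified, the hardness claim is not established.
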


\begin{proof}
The membership in \conptime is straightforward:
to check that an assertion $\phi$ is not in $\upd{\K}{\N}{\WIDTIO}$,
guess a $\K_m$ from $\M_e(\K, \N)$
and verify that $\K_m\cup\N \not\models \phi$.
To see that this is in fact a non-deterministic polynomial time procedure,
note that a subset $\K'$ of $\K$ is an element of $\M_e(\K, \N)$
if for any formula $\gamma\in\K\setminus\K'$,
we have that $\K'\cup\N\cup\set{\gamma}$ is not coherent.
This can be verified in polynomial time for \dllite KBs.

\begin{figure*}[t!]
\centering
\includegraphics[width=1\textwidth]{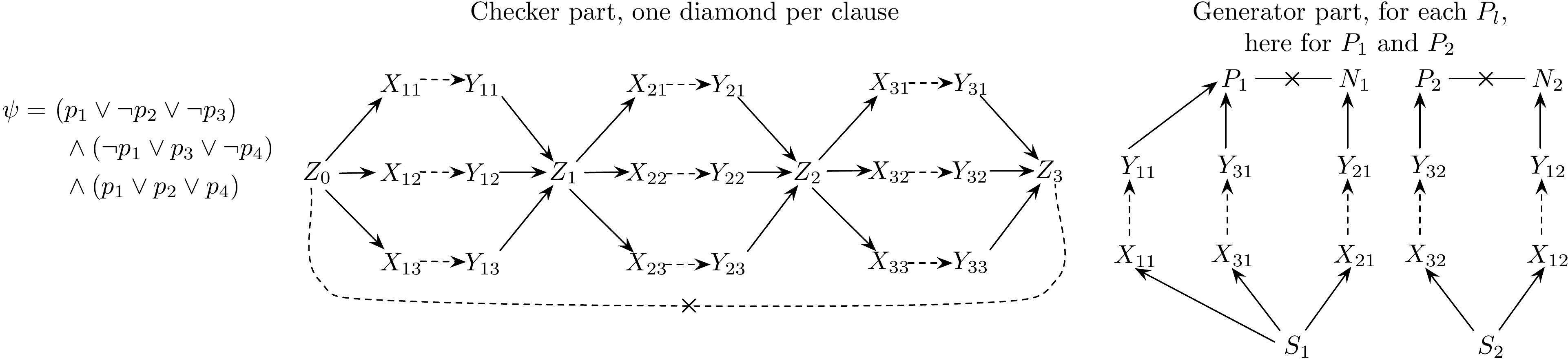}
\caption{Illustration of the $\mathsf{3SAT}$ reduction.}
\label{fig:reduction}
\end{figure*}

\medskip
That the expansion problem is \conptime-hard is shown by a reduction of  $\mathsf{3SAT}$, which is illustrated in Figure~\ref{fig:reduction}.
Let $\psi$ be a 3-CNF formula.
Our plan is to construct KBs $\K^\psi$ and $\N^\psi$, both consisting only of
inclusion assertions.
We single out one assertion $\phi$ of $\K^\psi$ such that $\psi$ is unsatisfiable if and only if
$\upd{\K}{\N}{\WIDTIO} \models \phi$.

Let $p_1,\ldots,p_r$ be the propositional variables occurring in $\psi$.
Without loss of generality, we can assume that each $p_l$ occurs both positively and negatively in $\psi$.
Suppose $\psi$ is a conjunction $\psi_1\land\cdots\land\psi_n$ of $n$ clauses.
Each clause $\psi_i$ is a disjunction of three literals
$L_{i1} \lor L_{i2} \lor L_{i3}$,
where either $L_{ij}=p_i$ for some variable~$p_i$, in which case we say that $L_{ij}$ is positive,
or $L_{ij}=\lnot p_i$, in which case we say that $L_{ij}$ is negative.

The KB $\K^\psi$ models the clauses and their literals by a set of concept inclusion assertions.
For each literal $L_{ij}$ we introduce two concepts $X_{ij}$ and $Y_{ij}$,
together with the assertion
\[
  X_{ij}\ISA Y_{ij},\qquad\text{for } i\in\{1,\ldots,n\},\ j\in\{1,2,3\}.
\]
The KB $\K^\psi$ contains these inclusions and the disjointness axiom
\[
   \phi \ = \ Z_0 \ISA \lnot Z_n.
\]

The new KB $\N^\psi$ consists of two parts, one that
models the possible truth values of the literals,
and a second that
models the logical connections of the literals.

To model the values
assigned to the literals by a truth value assignment,
we introduce for each propositional variable $p_l$ three concepts
$S_l$, $P_l$, and $N_l$.
We insert into $\N^\psi$ the inclusion
\[
   S_l \ISA X_{ij}\quad\mbox{whenever $p_l$ is the variable of $L_{ij}$.}
\]
We connect the corresponding concepts $Y_{ij}$ to either $P_l$ or $N_l$,
depending on whether $p_l$ occurs positively or negatively in $L_{ij}$.
More precisely, we add to $\N^\psi$ the inclusion
\begin{align*}
  Y_{ij} \ISA P_l, &\quad\text{if } L_{ij} = p_l,\\
  Y_{ij} \ISA N_l, &\quad\text{if } L_{ij} = \lnot p_l.
\end{align*}
Finally, we add to $\N^\psi$ the disjointness axiom $P_l\ISA\lnot N_l$.

The intuition behind the reduction becomes clear if we view each inclusion
axiom as an arc in a directed graph, whose nodes are the concepts.
By construction, since $p_l$ occurs both positively and negatively in $\psi$,
there are is a path in $\K^\psi\cup\N^\psi$ from $S_l$ to $P_l$ and another
one from $S_l$ to $N_l$.
Since in any model of $\K^\psi \cup \N^\psi$,
the concepts $P_l$ and $N_l$ are disjoint,
the concept $S_l$, which is contained in both,
is interpreted as the empty set, which makes the KB incoherent.
This can only be prevented by dropping either all paths from $S_l$ to $N_l$ or
all paths from $S_l$ to $P_l$.
Keeping in a maximal coherent subset $\K_m\incl\K^\psi$
all the paths from $S_l$ to~$P_l$, corresponds to
assigning to $p_l$ the value $\mathtt{true}$.
Keeping only the paths from $S_l$ to $P_l$, corresponds to
assigning to $p_l$ the value $\mathtt{false}$.

To model the logic of the clauses,
we introduce into $\N^\psi$ six inclusion axioms per clause.
To this end, we use, in addition to~$Z_0$ and $Z_n$, another $n-1$ concepts
$Z_1,\ldots,Z_{n-1}$.
Then, the six inclusions for the $i$-th clause are
\[
  \begin{array}{r@{~}c@{~}l}
    Z_{i-1} &\ISA& X_{ij} \\
    Y_{ij} &\ISA& Z_i
  \end{array}
  \qquad\text{for } j\in\{1,2,3\}.
\]

Under the graph view of the KB $\K^\psi\cup\N^\psi$,
one can walk from $Z_{i-1}$ to $Z_{i}$ only along three possible paths,
passing one of the arcs $X_{ij}\ISA  Y_{ij}$ corresponding to the literals $L_{ij}$, $j\in\{1,2,3\}$.
This models the disjunction of the three literals appearing in the $i$-th clause $\psi_i$.
To walk from $Z_0$ to $Z_n$, one has to take all the $n$ steps,
from $Z_{i-1}$ to $Z_i$, for $i\in\{1,\ldots,n\}$.
This models the conjunction of the $n$ clauses in $\psi$.
A path from $Z_0$ to $Z_n$ forces~$Z_0$ to be a subset of $Z_n$ in every model of the path.
Together with the disjointness axiom $\phi = (Z_0\ISA \lnot Z_n)$,
this implies that~$Z_0$ is empty, which is not possible, if we want our KB to be coherent.

We are now in a position to show that $\phi$ does not follow from $\upd{\K^\psi}{\N^\psi}{\WIDTIO}$
if and only if $\psi$ is satisfiable.
To this end, assume that $\psi$ is satisfiable and let $\alpha$ be a satisfying assignment.
Let $\K'\subseteq\K^\psi$ contain $X_{ij} \ISA Y_{ij}$
if and only if $\alpha(L_{ij}) = \mathtt{true}$.
Suppose, $\alpha$ satisfies the $j$-th literal of the $i$-th clause, $L_{ij}$.
Then $\K'\cup\N^\psi$ contains a path from $Z_{i-1}$ to $Z_i$, passing through $X_{ij}$ and $Y_{ij}$.
Since, by assumption, $\alpha$ satisfies every clause in $\psi$,
the KB $\K'\cup\N^\psi$ contains a path from $Z_{0}$ to $Z_n$.
As seen above, adding $\phi$ to $\K'\subseteq\N^\psi$ would lead to an incoherent KB.
Thus, with $\K'$ we have exhibited an element of $\M_e(\K^\psi,\N^\psi)$ that does not contain $\phi$,
so that $\phi$ is not in the intersection of the elements of $\M_e(\K^\psi,\N^\psi)$
and thereofore does not follow from $\upd{\K^\psi}{\N^\psi}{\WIDTIO}$.

Next, assume that $\psi$ is unsatisfiable, and
let $\K_m$ be a maximal subset of $\K^\psi$ such that $\N\cup\K_m$ is coherent.
Let $\alpha$ be the assignment such that $\alpha(p_l)=\mathtt{true}$ if
$(X_{ij}\ISA Y_{ij})\in\K_m$ for some positive literal $L_{ij}=p_l$,
and $\alpha(p_l)=\mathtt{false}$ otherwise.
This assignment, like all assignments, by assumption falsifies $\psi$
and in particular falsifies one clause, say the $i$-th one.
Then all literals of that clause are falsified by $\alpha$.

Consider a literal of that clause, say $L_{ij}$.
We make a case analysis as to wheter $L_{ij}$ is a positive or a negative literal.
Suppose $L_{ij}$ is positive, say $L_{ij} = p_l$.
Then $\alpha(p_l) = \mathtt{false}$,
which means that the condition for $\alpha(p_l)$ being $\mathtt{true}$ true does not hold.
Then $\K_m$ contains no inclusion corresponding to a positive $p_l$-literal.
In particular, the inclusion $X_{ij}\ISA Y_{ij}$ for $L_{ij}$ is not in $\K_m$.
Suppose $L_{ij}$ is positive, say $L_{ij} = \lnot p_l$.
Then $\alpha(p_l) = \mathtt{true}$.
By definition of $\alpha$, some inclusion corresponding to a positive $p_l$-literal is present in $\K_m$.
Hence, no arc for a negative $p_l$-literal is in $\K_m$, because otherwise $S_l$ would be incoherent.
Therefore, the inclusion $X_{ij}\ISA Y_{ij}$ corresponding to $L_{ij}$ is not in $\K_m$.

In summary, we have seen that there is no path from $Z_{i-1}$ to $Z_i$ in $\K_m\cup\N^\psi$.
Consequently, $Z_0\ISA Z_n$ does not follow from $\K_m\cup\N^\phi$,
so that $\phi = (Z_0\ISA\lnot Z_n)$ is in $\K_m$, due to the maximality of $\K_m$.
Since $\K_m$ was arbitrary, $\upd{\K^\psi}{\N^\psi}{\WIDTIO}\models\phi$.

This shows that $\upd{\K^\psi}{\N^\psi}{\WIDTIO}\models\phi$ if and only if $\psi$ is unsatisfiable,
which completes the proof.
\end{proof}

Thus, both \CrossP\ and \WIDTIO\ semantics are computationally problematic,
        even for languages such as \dlfr,
        where the closure of a KB is always finite.
Therefore,
        we conclude that neither \CrossP\ nor \WIDTIO\ is proper for practical solutions.
In the following, we introduce a semantics
        that can help to overcome the issue of intractability.


\subsection{Bold Semantics}
\label{sec:Bold-Semantics}
As we have seen above,
        the classical approaches \CrossP\ and \WIDTIO\
        may pose practical challenges in the case of \dlfr.
Indeed, the former one is inexpressible in \dlfr,
        since it requires disjunction,
        but even for more expressive languages where \CrossP\ is expressible,
        the resulting KB, after a series of evolutions,
        is going to be very complicated and overloaded.
The latter semantics is always expressible in \dlfr;
        computing the result under it, however, is computationally hard even for \dlcore.
Besides, the \WIDTIO\ semantics tends to delete too much information.

Recall that both \CrossP\ and \WIDTIO\ semantics were proposed to combine all elements
        of $\M_e(\K, \N)$ or $\M_c(\K, \N)$ into a single KB.
We propose another way to deal with the problem of multiple maximal KBs:
        instead of combining the different $\K_m$, we suggest to \emph{choose one of them}.
We call this semantics \emph{bold}.
More formally, we say that
        $\K'$ is a result of expansion (resp., contraction) of $\K$ w.r.t.\ $\N$
        if $\K' \equiv \K_m \cup \N$ for some $\K_m \in \M_e(\K, \N)$
        (resp., $\K' \equiv \K_m$ for some $\K_m \in \M_c(\K, \N)$).
An obvious drawback of this approach is that the choice of $\K_m$ is not deterministic.
Consider the following example.


\begin{example}
\label{ex:bold-semantics}
Consider the KB and the new information from
Example~\ref{ex:formula-based-semantics}.
As shown there, $\M(\KEX, \N_T) = \set{\K_m^{(1)}, \K_m^{(2)}}$.
According to bold semantics
  the result of expansion is a KB $\K' = \N \cup \K_m$
  for some $\K_m \in \M(\KEX, \N_T)$.
Thus, the result of expansion is either \
  $\N_T \cup \KEX \setminus \set{\priest \ISA \cleric}$ or
  $\N_T \cup \KEX \setminus \set{\cleric \ISA \renter}$.
\end{example}


\begin{algorithm2e}[t!]
\caption{$\BE(\K, \N)$}
\label{alg:bold-expansion}

\KwIn{closed KBs $\K$ and $\N$}
\KwOut{KB $\K'$}

\BlankLine

$\K' := \N$;\quad $\S := \K$;

\Repeat
{$\S = \eset$}
{choose some $\phi \in \S$;\\
 $\S := \S \setminus \set{\phi}$;\\
 \If
 {$\K' \cup \set{\phi}$ is coherent}
 {$\K' := \K' \cup \set{\phi}$}
}

\Return $\K'$;
\end{algorithm2e}
We continue now with a check of how bold semantics satisfies the evolution postulates.
But first observe that the postulates \EP{4}, \EP{5}, and \CP{5}
        do not make much sense in the context of bold semantics,
        due to its non-determinism.
Therefore,
        we first propose an alternative version of those postulates
        to take into consideration the non-determinism of bold semantics:
\begin{quote}
        \EP{4B}: For each $\K''_m \in \M_e(\K, \N_{1e} \cup \N_{2e})$,
        there exists a $\K'_m \in \M_e(\K, \N_{1e})$ such that $\K'_m \models \K''_m$.\\[2ex]
        \EP{5B}: Expansion should not depend on the syntactical representation of knowledge,
        that is,
        if $\K_1 \equiv \K_2$ and $\N_{1e} \equiv \N_{2e}$,
        then $\M_e(\K_1, \N_{1e}) \equiv \M_e(\K_2, \N_{2e})$.\\[2ex]
        \CP{5B}: Contraction should not depend on the syntactical representation of knowledge,
        that is,
        if $\K_1 \equiv \K_2$ and $\N_{1c} \equiv \N_{2c}$,
        then $\M_c(\K_1, \N_{1c}) \equiv \M_c(\K_2, \N_{2c})$.
\end{quote}

\begin{proposition}
\label{prop:bold-semantics-satisfies-postulates}
For the evolution of \dlfr KBs
under bold semantics the following holds:
\begin{compactitem}
\item Expansion satisfies {\EP{1}}--\,{\EP{3}}, \EP{4B}, and \EP{5B};
\item Contraction satisfies  {\CP{1}}--\,{\CP{3}} and \CP{5B}, but not \CP{4}.
\end{compactitem}
\end{proposition}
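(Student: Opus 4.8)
The plan is to verify each postulate essentially directly from the definition of \BS, reusing the arguments already developed for Proposition~\ref{prop:classical-FBAs-satisfy-postulates}, and to exhibit a small counterexample for \CP{4}. The one fact to record first is that both families are always non-empty: the empty subset of $\K$ satisfies $\emptyset\cup\N_e=\N_e$, which is coherent by the framework assumption, so it extends to some member of $\M_e(\K,\N_e)$; and the empty subset belongs to the poset underlying $\M_c(\K,\N_c)$ because $\N_c$ contains no tautology, so $\emptyset\not\models\alpha$ for every $\alpha\in\N_c$. Hence a \BS result always exists. Given this, \EP{1} and \EP{2} are immediate, since every expansion result is $\equiv\K_m\cup\N_e$ for some $\K_m\in\M_e(\K,\N_e)$, and $\K_m\cup\N_e$ is by definition coherent and entails $\N_e$; \CP{1} is immediate because every contraction result is $\equiv\K_m$ with $\K_m\incl\K$; and \CP{2} is immediate because, by definition of $\M_c$, every $\K_m\in\M_c(\K,\N_c)$ satisfies $\K_m\not\models\alpha$ for all $\alpha\in\N_c$.

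For \EP{3}, if $\K\models\N_e$ then $\K\cup\N_e$ is coherent (as $\K$ is), so $\K\in\M_e(\K,\N_e)$ and, being all of $\K$, is its unique element; hence every expansion result is $\equiv\K\cup\N_e\equiv\K$. Symmetrically, for \CP{3}: if $\K\not\models\alpha$ for every $\alpha\in\N_c$, then $\K$ itself belongs to the poset of subsets of $\K$ that entail no $\alpha\in\N_c$, so it is the unique maximal such subset, $\M_c(\K,\N_c)=\set{\K}$, and every contraction result is $\equiv\K$. For \EP{5B} and \CP{5B} I would appeal to the convention of Section~\ref{sec:Preliminaries} that all KBs are closed: using the disjoint-union construction from the proof of Theorem~\ref{lem:no-disjunction-dllite}, equivalent closed \dlfr KBs coincide syntactically, so the hypotheses $\K_1\equiv\K_2$ and $\N_{1e}\equiv\N_{2e}$ (resp.\ $\N_{1c}\equiv\N_{2c}$) make the inputs identical and the families $\M_e$ (resp.\ $\M_c$) literally equal.

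The only step carrying a bit of content is \EP{4B}. Take an arbitrary $\K''_m\in\M_e(\K,\N_{1e}\cup\N_{2e})$. Then $\K''_m\cup\N_{1e}$ is coherent, being a sub-KB of the coherent $\K''_m\cup\N_{1e}\cup\N_{2e}$; hence $\K''_m$ extends --- the extension exists because $\K$ is finite --- to a maximal subset $\K'_m$ of $\K$ with $\K'_m\cup\N_{1e}$ coherent, i.e.\ $\K'_m\in\M_e(\K,\N_{1e})$. From $\K''_m\incl\K'_m$ we obtain $\K'_m\models\K''_m$, which is exactly \EP{4B}. (This mirrors the containment observation used for \WIDTIO\ in the proof of Proposition~\ref{prop:classical-FBAs-satisfy-postulates}.)

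Finally, to see that \BS contraction violates \CP{4}, I would reuse the counterexample from Proposition~\ref{prop:classical-FBAs-satisfy-postulates}: let $\K$ be the closed KB with TBox $\set{A\ISA B}$ and ABox $\set{A(a),B(a)}$, and let $\N_c=\set{B(a)}$, which is not a tautology. Any subset of $\K$ containing $B(a)$ entails $B(a)$, and any subset containing both $A\ISA B$ and $A(a)$ entails $B(a)$; hence $\M_c(\K,\N_c)$ consists precisely of the two maximal subsets $\set{A\ISA B}$ and $\set{A(a)}$, and every \BS contraction result $\K'_c$ is equivalent to one of them. In either case $\K'_c\cup\N_c$ fails to entail $\K$, since $\set{A\ISA B,\,B(a)}\not\models A(a)$ and $\set{A(a),\,B(a)}\not\models A\ISA B$; so \CP{4} is violated for every possible choice. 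I do not expect a genuine obstacle here: the only things needing care are threading the non-emptiness and closedness conventions through the otherwise routine cases, and, in \EP{4B}, taking the maximal extension inside $\K$ so that it indeed lands in $\M_e(\K,\N_{1e})$.
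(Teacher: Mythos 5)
Your proof is correct and follows essentially the same route as the paper, which simply defers all cases to the argument for Proposition~\ref{prop:classical-FBAs-satisfy-postulates}; you have merely made explicit the details that proof leaves implicit (non-emptiness of $\M_e$ and $\M_c$, the maximal-extension step for \EP{4B}, and a \CP{4} counterexample adjusted to respect the closedness convention). The only cosmetic oddity is the appeal to the disjoint-union construction for \EP{5B}/\CP{5B} --- that equivalent closed KBs coincide syntactically follows directly from the definition of deductive closure and needs no such machinery.
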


\begin{proof}
The claim for \EP{1}, \EP{2}, \EP{5B}, \CP{1}, \CP{2}, and \CP{5B} follows directly
        from the definitions of the operators.
The claim for \EP{3}, \EP{4B}, and \CP{3}
        and the fact that contraction does not satisfy \CP{4}
        can be proved similarly
        to the corresponding claims in Proposition~\ref{prop:classical-FBAs-satisfy-postulates}.
\end{proof}

Which of the two possible results in Example~\ref{ex:bold-semantics} should one choose?
We claim that the choice is domain-dependent
        and, consequently, it should be made by a user/domain expert.
In our particular example,
        the right choice seems to pick the second KB
        since it is possible that clerics do not receive rent subsidies,
        while the first option where priests stop being clerics does not make
        sense.

\subsubsection{Bold Semantics without User Preferences}
Consider the case when the user does not have any preferences
        and any of the possible results of evolution would be satisfactory.
In this case,
        choosing an arbitrary $\K_m$ has the advantage
        that the result of evolution can be computed in polynomial time.
Algorithms~\ref{alg:bold-expansion} and~\ref{alg:bold-contraction}
        can be used to compute the result of expansion or contraction, respectively,
        in a non-deterministic manner.


\begin{algorithm2e}[t!]
\caption{$\BC(\K, \N)$}
\label{alg:bold-contraction}

\KwIn{closed KBs $\K$ and $\N$}
\KwOut{KB $\K'$}

\BlankLine

$\K' := \eset$;\quad $\S := \K$;

\Repeat
{$\S = \eset$}
{choose some $\phi \in \S$;\\
 $\S := \S \setminus \set{\phi}$;\\
 \If
 {$\K' \cup \set{\phi} \not\models \alpha$ for each $\alpha \in \N$}
 {$\K' := \K' \cup \set{\phi}$}
}

\Return $\K'$;
\end{algorithm2e}

\begin{theorem}
  \label{th:semantics-correctness}
For \dlfr KBs $\K$ and $\N$,
        the algorithms $\BE$ and $\BC$
        run in time polynomial in $|\K \cup \N|$
        and compute a bold expansion and a bold contraction of $\K$ by $\N$, respectively.
\end{theorem}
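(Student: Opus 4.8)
The plan is to analyse the two greedy algorithms in parallel, since both make a single pass over $\K$: the working set $\S$ starts as $\K$ and loses exactly one axiom per iteration, so each run performs exactly $|\K|$ iterations. Hence it suffices to (i) bound the cost of one iteration, and (ii) let $\K_m \subseteq \K$ be the set of axioms of $\K$ that the run leaves in $\K'$, and show that $\K_m \in \M_e(\K,\N)$ for $\BE$ (resp.\ $\K_m \in \M_c(\K,\N)$ for $\BC$), so that the output $\N \cup \K_m$ (resp.\ $\K_m$) is a bold expansion (resp.\ contraction) of $\K$ by $\N$ in the sense of Section~\ref{sec:Bold-Semantics}. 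The one genuinely delicate point will be the maximality half of (ii).

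For the running time: in $\BE$ each iteration tests coherence of the \dlfr KB $\K' \cup \set{\phi}$, whose size is $O(|\K \cup \N|)$; coherence reduces to a linear number of \dlfr satisfiability checks (one per atomic concept or role occurring in the KB, obtained by asserting membership of fresh constants), each polynomial by the complexity results recalled in Section~\ref{sec:Preliminaries}, so one iteration is polynomial. In $\BC$ each iteration tests, for each of the $|\N|$ assertions $\alpha \in \N$, whether $\K' \cup \set{\phi} \models \alpha$, i.e.\ performs polynomially many instance/subsumption checks in \dlfr, again polynomial. Multiplying by the $|\K|$ iterations gives the claimed polynomial bound in $|\K \cup \N|$ for both algorithms.

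For correctness of $\BE$, let $\K_m$ collect the axioms of $\K$ added during the run, so the output is exactly $\N \cup \K_m$. By induction on iterations, $\K'$ is coherent after every step: the base case holds because $\N$ is coherent by the preparedness assumption for expansion, and an axiom is added only when doing so preserves coherence; hence $\N \cup \K_m$ is coherent. For maximality, suppose some $\phi \in \K \setminus \K_m$ had $\N \cup \K_m \cup \set{\phi}$ coherent. When $\phi$ was examined, $\K'$ equalled $\N \cup \K_m'$ for some $\K_m' \subseteq \K_m$, and $\phi$ was rejected, so $\N \cup \K_m' \cup \set{\phi}$ was incoherent. But incoherence is monotone under enlarging the axiom set: every atomic symbol of a subset KB occurs in a superset KB, and every model of the superset is a model of the subset, so a subset of a coherent KB is coherent; consequently $\N \cup \K_m \cup \set{\phi} \supseteq \N \cup \K_m' \cup \set{\phi}$ is incoherent too, a contradiction. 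Thus $\K_m \in \M_e(\K,\N)$ and $\N \cup \K_m$ is a bold expansion.

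For correctness of $\BC$ the argument is dual. Let $\K_m$ be the output. The loop invariant is now ``$\K' \not\models \alpha$ for every $\alpha \in \N$'': it holds initially since $\K' = \eset$ and $\N$ contains no tautology (the preparedness assumption for contraction), and it is preserved by the acceptance test. Maximality of $\K_m$ in $\M_c(\K,\N)$ uses the monotonicity of entailment: if $\phi \in \K \setminus \K_m$ could be added while $\K_m \cup \set{\phi}$ still entailed no $\alpha \in \N$, then, writing $\K_m' \subseteq \K_m$ for the value of $\K'$ when $\phi$ was rejected, we had $\K_m' \cup \set{\phi} \models \alpha$ for some $\alpha \in \N$, hence $\K_m \cup \set{\phi} \models \alpha$, contradicting the assumption. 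So $\K_m \in \M_c(\K,\N)$ and it is a bold contraction. In both cases the only subtlety, as anticipated, is the maximality argument, which rests entirely on the fact that incoherence (resp.\ entailment of a fixed assertion) is preserved when the axiom set grows, together with the preparedness hypotheses that seed the invariants; the rest is bookkeeping on top of the already-established polynomiality of reasoning in \dlfr.
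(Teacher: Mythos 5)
Your proof is correct and takes essentially the same route as the paper's, which simply declares the correctness of $\BE$ and $\BC$ ``obvious'' and observes that the loop runs $|\K|$ times with a polynomial-time coherence (resp.\ entailment) check per iteration. You usefully make explicit the one point the paper leaves implicit, namely that the maximality of the computed $\K_m$ rests on monotonicity --- every subset of a coherent \dlfr KB is coherent, and entailment is preserved under enlarging the axiom set --- together with the preparedness assumptions that seed the loop invariants.
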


\begin{proof}
The fact that the algorithms compute the results of expansion and contraction, respectively,
        is obvious.
To prove polynomiality,
        observe that the algorithms loop as many times as there are assertions in $\K$.
The crucial steps are the coherence steps for \BE
        and the entailment checks for \BC.
It is well known, however, that in \dlfr these checks can be done in polynomial time.
\end{proof}


\subsubsection{Bold Semantics with User Preferences}
We have seen that computing an arbitrary $\K_m$
        has the great advantage that
        evolution can be computed in polynomial time.
However, its non-determinism is a disadvantage.
Clearly, we can \emph{avoid} nondeterminism if we impose a linear order
on the assertions over the signature of $\K$, and let \BE and \BC choose them in this order.
The question how to define such an order is again application-dependent
        and is out of the scope of our work.



A natural question
that requires further investigation
        is whether there exist preferences as to
        which $\K_m$ to use for constructing the result of evolution
        such that they are generic enough and
        can be implemented without breaking tractability.

One may also wonder whether it is possible
        to efficiently compute a $\K_m$ with maximal \emph{cardinality}.
        Recall that our algorithm is only guaranteed to compute a $\K_m$ that is
        maximal w.r.t.\ set inclusion.
Unfortunately, it turns out
that under this requirement computation is hard,
even if $\K$ is either a TBox or an ABox and $\N$ is a TBox.

\begin{theorem}
        \label{th:max-card-np-complete}
Given \dlfr KBs $\K$ and $\N$ and a subset $\K_0 \incl \K$
        such that $\K_0 \cup \N$ is coherent,
        deciding whether $\K_0$ has maximal cardinality among the elements
        of $\M_e(\K, \N)$ 
        is \nptime-complete.
Moreover, 
        \nptime-hardness already holds for \dlcore if
\begin{inparaenum}[\itshape (1)]
        \item both $\K$ and $\N$ are TBoxes, or
        \item $\K$ is an ABox and $\N$ is a TBox.
\end{inparaenum}
\end{theorem}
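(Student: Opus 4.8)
My plan is to prove membership and hardness separately, and to derive both restricted hardness cases from a single construction.

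\emph{Membership.} By the standing assumption that $\K_0 \cup \N$ is coherent, $\K_0$ extends to some element of $\M_e(\K,\N)$ of cardinality at least $|\K_0|$, so $\K_0$ \emph{fails} to have maximal cardinality in $\M_e(\K,\N)$ exactly when $\K$ contains a subset $\K'$ with $|\K'| > |\K_0|$ and $\K' \cup \N$ coherent. Such a $\K'$ is a polynomial-size object, and checking that $\K' \cup \N$ is coherent is polynomial for \dllite (Section~\ref{sec:Preliminaries}); together with the polynomial check that $\K'$ is $\subseteq$-maximal, this is a routine guess-and-verify argument and settles the upper bound.

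\emph{Hardness.} I would reduce from $\mathsf{3SAT}$ (or its complement), reusing the conflict-graph encoding from the proof of Theorem~\ref{thm:widtio-conpcomplete}. Given a 3-CNF formula $\psi = \psi_1 \wedge \cdots \wedge \psi_n$, I build \dlcore KBs $\K^\psi$ and $\N^\psi$ over fresh concept names so that: (i) the $\subseteq$-maximal subsets $\K_m \incl \K^\psi$ with $\K_m \cup \N^\psi$ coherent correspond to the truth assignments of $\psi$, via the mechanism ``for each variable, cut either all arcs for its positive literal occurrences or all arcs for its negative ones'', and, after duplicating literal occurrences so that every variable occurs equally often positively and negatively, the number of literal arcs retained is the \emph{same} constant $B$ for every assignment; and (ii) $\K^\psi$ carries additional ``bonus'' axioms, wired per clause with extra disjointness axioms, so that the number of bonus axioms retainable in $\K_m$ grows with the number of clauses satisfied by the assignment encoded by $\K_m$. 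Then the maximum of $|\K_m|$ over $\M_e(\K^\psi,\N^\psi)$ takes different values according to whether $\psi$ is satisfiable, and choosing $\K_0^\psi$ to be a coherent subset of the appropriate (polynomially constructible) size makes ``$\K_0^\psi$ has maximal cardinality'' equivalent to the (un)satisfiability of $\psi$. For case~(1) every axiom above is a concept inclusion, so $\K^\psi$ and $\N^\psi$ are \dlcore TBoxes. For case~(2) I push the choosable content into an ABox: keep the fixed conflict structure — the inclusions, the disjointness and, where needed, functionality axioms — in the TBox $\N$, and represent each choosable arc by a membership assertion over the same fresh predicates in the ABox $\K$, so that selecting or deselecting an assertion plays the role of keeping or cutting an arc; the combinatorics carry over unchanged.

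\emph{Main obstacle.} The delicate part is the calibration in~(ii): arranging the count of retainable bonus axioms to track the intended combinatorial target while keeping the base contribution $B$ assignment-independent — this forces the polarity-balancing trick and a careful accounting of every arc — and doing it with only \dlcore constructors, in particular with no disjunction available by Theorem~\ref{lem:no-disjunction-dllite}. A secondary point, which matters especially for the ABox formulation of case~(2), is to verify that every atomic predicate occurring in $\K \cup \N$ stays individually satisfiable in the intended maximal coherent subsets, since it is \emph{coherence}, not plain satisfiability, that drives the cutting; this dictates where the auxiliary disjointness axioms may be placed.
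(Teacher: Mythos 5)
Your membership argument is fine and matches the paper's (the reformulation as ``does there exist a coherent subset of size $|\K_0|+1$'' plus a guess-and-verify with a polynomial coherence check; the extra $\subseteq$-maximality check you mention is unnecessary, since any larger coherent subset extends to an element of $\M_e(\K,\N)$ of at least that size). The hardness part, however, has a genuine gap: the reduction is not actually constructed. The entire burden of a 3SAT-based reduction lies in the ``calibration'' you defer to the \emph{main obstacle} paragraph --- designing per-clause ``bonus'' axioms whose retainability in a maximal coherent subset counts satisfied clauses, while the base contribution stays assignment-independent. That gadget is the proof, and it is nontrivial: in the conflict-graph encoding of Theorem~\ref{thm:widtio-conpcomplete} the only axiom whose membership in $\K_m$ depends on the clause structure is the single disjointness $Z_0\ISA\lnot Z_n$, and it is retainable precisely when the assignment \emph{falsifies} a clause, i.e., it pushes the count in the wrong direction; so the machinery you need is not a reuse of that proof but a new construction. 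Moreover, the claim that for case~(2) the combinatorics ``carry over unchanged'' when each choosable arc $X_{ij}\ISA Y_{ij}$ is replaced by a membership assertion is false as stated: an assertion $X_{ij}(c)$ does not make $X_{ij}$ a subconcept of $Y_{ij}$, so the path/chain structure that encodes clause disjunction and conjunction disappears, and the only conflicts an ABox can exhibit against a \dlcore TBox are pairwise unsatisfiable assertion pairs.

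The paper avoids all of this by reducing from a problem that already \emph{is} ``maximum conflict-free subset'', namely Independent Set. Given $G=(V,E)$, for case~(1) take $\T=\set{S\ISA A_i\mid v_i\in V}$ and $\N=\set{A_i\ISA\lnot A_j\mid (v_i,v_j)\in E}$; a subset of $\T$ is coherent with $\N$ iff the corresponding vertex set is independent (two conflicting inclusions force $S$ to be empty, violating coherence). For case~(2) take $\A=\set{A_i(b)\mid v_i\in V}$ with the same $\N$; here the conflicts are pairwise unsatisfiability of assertions about the single individual $b$. Since all conflicts are pairwise, maximal-cardinality coherent subsets are exactly maximum independent sets, and no calibration, polarity balancing, or clause counting is needed. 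I would recommend abandoning the 3SAT route in favour of this, or of any other problem whose objective is already a maximum over conflict-free subsets.
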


\begin{proof}
This problem is equivalent to the problem of deciding
        whether there exists a subset $\K_1$ of $\K$
        such that $\K_1 \cup \N$ is coherent and $|\K_1| \geq |\K_0| + 1$.
We prove now that this latter problem is \nptime-complete.
Indeed, the membership in \nptime is obvious:
        guess a subset $\K_1$ of $\K$ of size greater than $|\K_0|$
        and check whether $\K_1 \cup \N$ is coherent,
        which can be done in polynomial time.
We show hardness by a reduction of the Independent Set Problem
        for graphs
        to the problem of evolution of a \dlfr KB under bold semantics.
Given a graph $G = (V, E)$, a subset $V'$ of $V$ is called independent,
        if for any pair $u$ and $v$ in $V'$ the edge $(u,v)$ is not in $E$.
Deciding whether for a given integer $m \leq |V|$
        an independent set of size $m$ or more exists
        is known to be \nptime-complete.

To prove the statement for Case~1, 
        we use the following reduction.
The TBox $\T$ consists of the assertions $S \ISA A_i$
        for each $v_i \in \V$,
        and the new information $\N$ consists of the assertions $A_i \ISA \lnot A_j$
        for each $(v_i, v_j) \in E$.
Clearly, a subset $\T_1 = \set{S \ISA A_k \mid k\in\{i_1, \ldots, i_m\}}$
        of $\T$ has the property that $\T_1 \cup \N$ is coherent
        if and only if $\set{v_{i_1}, \ldots, v_{i_m}}$	is an independent set.

To prove the statement for Case~2,
        we use the following reduction.
The ABox $\A$ consists of the membership assertions $A_i(b)$ for each $v_i \in V$,
        and the new information $\N$ is as in the previous case.
Clearly, a subset $\A_1 = \set{A_{i_1}(b), \ldots, A_{i_m}(b)}$
        of $\A$ is such that $\A_1 \cup \N$ is coherent
        if and only if $\set{v_{i_1}, \ldots, v_{i_m}}$	is an independent set.
\end{proof}

In the next section we will see that nondeterminism is not present in
ABox evolution, where the TBox is protected, and that there is always a single maximal
compatible ABox.


\section{Formula-based Approaches to ABox Evolution}
\label{sec:AndABoxEvolution}

In this section we study ABox evolution under formula-based approaches.
First,
        observe that the classical approaches, \CrossP\ and \WIDTIO,
        can be easily adapted to ABox evolution
        by requiring additionally that $\T$ is a part of $\K_m$.
Note that this additional requirement does not contradict the general definition of $\K_m$.
Indeed,
\begin{itemize}
\item In the case of expansion, since in the case of ABox evolution
        we assume that $\T \cup \N$ is coherent,
        the requirement that $\T \incl \K_m$ does not contradict that $\K_m \cup \N$
        is coherent.

\item In the case of contraction, since a \dlfr TBox does not entail any ABox assertion,
        the requirement that $\T \incl \K_m$ does not contradict
        that $\K_m \not\models \alpha$, for each $\alpha \in \N$.
\end{itemize}
This requirement, however, brings a surprising result:
        it makes a maximal subset $\K_m$ unique.

\begin{restatable}{proposition}{entailmentsInDLLite}
        \label{lem:entailments-in-dl-lite}
Let $\K = \T \cup \A$ be a \dlfr KB.
Then
        \begin{compactitem}
        \item If $\K \models \alpha_1$, where $\alpha_1$ is a \dlfr membership assertion,
        then there exists $\alpha_2 \in \A$
        such that $\T \cup \set{\alpha_2} \models \alpha_1$.
        \item If $\K$ is unsatisfiable,
        then there exist $\alpha_1, \alpha_2 \in \A$
        such that $\T \cup \set{\alpha_1, \alpha_2}$ is unsatisfiable.
        \end{compactitem}
\end{restatable}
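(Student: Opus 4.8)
The plan is to prove both items as \emph{locality of entailment} facts about \dlfr: every membership assertion entailed by $\K=\T\cup\A$ is already entailed by $\T$ together with a single assertion of $\A$, and every inconsistency already arises from $\T$ together with (at most) two assertions of $\A$. The technical engine is the canonical model $\I_\K$ of Calvanese et al.~\cite{CDLLR07}, obtained by chasing $\A$ with the positive inclusions of $\T$, together with the observation that in \dlfr positive inclusions are \emph{chain-like}: a positive concept inclusion ($A\ISA A'$, $\exists R\ISA A'$, $A\ISA\exists R'$, or $\exists R\ISA\exists R'$) and a role inclusion $R_1\ISA R_2$ each fire on a \emph{single} triggering atom, membership in $\exists R$ is witnessed by one role atom, and existential steps introduce \emph{fresh} anonymous successors. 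Hence the derivation of any atom in the chase, followed back through its triggers, terminates at one assertion of~$\A$.

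For the first item, we may assume $\K$ is satisfiable (this is the case in all intended applications of the result). Since $\alpha_1$ is a membership assertion, it is preserved along homomorphisms, so $\K\models\alpha_1$ iff $\alpha_1$ holds in $\I_\K$; and $\alpha_1$ has the form $B(a)$ or $P(a,b)$, i.e.\ it speaks about named individuals. I would prove, by induction on the order in which atoms enter $\I_\K$, that for every concept atom $B(a)$ and every role atom $P(a,b)$ of $\I_\K$ (with $a,b$ constants) there is some $\alpha_2\in\A$ with $\T\cup\set{\alpha_2}\models B(a)$ (resp.\ $\models P(a,b)$): each chase step that adds such an atom has a unique triggering atom, which is either already an element of $\A$ or an atom added earlier, so following triggers back to the ABox yields the desired $\alpha_2$. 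Applied to $\alpha_1$, this is exactly the first item.

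For the second item, I would use that a \dlfr KB is unsatisfiable iff its chase violates either a negative concept inclusion $B_1\ISA\lnot B_2\in\T$ or a functionality assertion $\FUNCT{R}\in\T$. If a negative inclusion is violated by a \emph{named} individual $a$, then the analysis of the first item gives $\alpha_1,\alpha_2\in\A$ with $\T\cup\set{\alpha_1}\models B_1(a)$ and $\T\cup\set{\alpha_2}\models B_2(a)$, whence $\T\cup\set{\alpha_1,\alpha_2}$ is unsatisfiable. If it is violated by an \emph{anonymous} element, that element owes all its memberships to the single existential step that created it, which traces back to one $\alpha_1\in\A$, so $\T\cup\set{\alpha_1}$ is already unsatisfiable and we take $\alpha_2=\alpha_1$. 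If instead $\FUNCT{R}\in\T$ is violated, some individual has two distinct named $R$-successors, i.e.\ the chase contains $R(a,b_1)$ and $R(a,b_2)$ with $b_1\neq b_2$; here the \dlfr syntactic restriction (no role inclusion has $R$ or $R^-$ on its right-hand side whenever $\FUNCT{R}$ or $\FUNCT{R^-}$ occurs) ensures such role atoms cannot be produced by the chase, so both lie in $\A$, and together with $\FUNCT{R}$ they make $\T\cup\set{R(a,b_1),R(a,b_2)}$ unsatisfiable. (A functional role can only clash among named individuals: an anonymous successor is merely reused rather than duplicated, and the standard-names assumption keeps distinct constants distinct.)

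The main obstacle is to make the \quotes{locality of chase derivations} rigorous in the presence of functionality --- i.e.\ to verify that the \dlfr canonical model remains tree-shaped over the ABox individuals and that the identifications forced by $\FUNCT{\cdot}$ never cause one chase atom to depend on two independent ABox assertions, the only exception being the forced identification of two distinct constants, which is precisely the clash recorded by the two assertions of the second item. A cleaner alternative, sidestepping the chase induction, is to invoke the FOL-reducibility of \dlfr~\cite{CDLLR07}: the certain answers to an atomic query and the consistency test are computed by a union of conjunctive queries, each disjunct of which, when matched in $\A$, uses a single ABox atom (for an atomic query) or two ABox atoms (for the inconsistency query); reading off these atoms gives the two items directly.
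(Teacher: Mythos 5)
Your proof is correct and takes essentially the same route as the paper, whose entire proof is the one-line citation ``directly follows from the results in [CDLLR07]'' --- the canonical-model/chase locality and the FOL-reducibility of instance checking and satisfiability that you spell out are precisely the results being invoked there. Your explicit caveat that the first item presupposes a satisfiable $\K$ (otherwise $\K\models\alpha_1$ for \emph{every} membership assertion $\alpha_1$, and no single $\alpha_2\in\A$ need suffice) is a point the paper glosses over entirely, and your handling of the functionality clashes via the \dlfr syntactic restriction is the right way to close the one genuine gap in the chase-locality argument.
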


\begin{proof}
The proposition directly follows from the results in~\cite{CDLLR07}.
\end{proof}

Proposition~\ref{lem:entailments-in-dl-lite}
        immediately gives us the following lemma.

\begin{lemma}
  \label{lem:ABox-evolution-unique-Km}
Let $\K$ be a \dlfr KB with TBox $\T$ and $\N$ a \dlfr ABox.
Then there exists exactly one element $\K_m$
        in $\M_e(\K,\N)$ (resp., in $\M_c(\K,\N)$)
        such that $\T \incl \K_m$.
\end{lemma}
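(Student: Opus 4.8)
We are in the ABox-evolution setting, so $\K=\T\cup\A$ with $\A$ the (closed, hence finite) ABox; for expansion $\T\cup\N$ is coherent, for contraction $\N$ is an ABox and no \dlfr TBox entails an ABox assertion; and, as is standard here, $\K$ is assumed coherent, so that in particular $\A$ contains no $\T$-inconsistent pair of assertions. My plan has two steps.

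First I would reduce the statement to a claim about ABoxes. Any $\K_m\in\M_e(\K,\N)$ (resp.\ $\M_c(\K,\N)$) that contains $\T$ must have the form $\T\cup\A_m$ with $\A_m\incl\A$, so it suffices to show that the family
\[
  \F \;=\; \{\A_m\incl\A \mid \T\cup\A_m\cup\N \text{ coherent}\}
\]
for expansion (resp.\ $\F=\{\A_m\incl\A \mid \T\cup\A_m\not\models\alpha \text{ for all }\alpha\in\N\}$ for contraction) has a greatest element $\A^*$ under inclusion. Note $\eset\in\F$ in both cases — this is exactly where coherence of $\T\cup\N$, resp.\ the fact that a TBox entails no ABox assertion, enters — so $\F\neq\eset$. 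Given such an $\A^*$, a routine argument then identifies $\T\cup\A^*$ as the unique maximal element of $\M_e(\K,\N)$ (resp.\ $\M_c(\K,\N)$) containing $\T$: every good subset of $\K$ containing $\T$ is of the form $\T\cup\A'$ with $\A'\in\F$, hence $\A'\incl\A^*$, so $\T\cup\A^*$ is the largest good subset containing $\T$ and, being itself good, is maximal in the respective set; uniqueness follows since any other maximal good $\K''$ with $\T\incl\K''$ satisfies $\K''\incl\T\cup\A^*$ and both are maximal.

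The substance is in the second step: showing $\F$ is closed under union, so that $\A^*=\bigcup\F\in\F$ (a finite union, as $\A$ is finite). Here I would invoke Proposition~\ref{lem:entailments-in-dl-lite}. For contraction: if $\T\cup(\A_1\cup\A_2)\models\alpha$ for some $\alpha\in\N$, then, $\alpha$ being a membership assertion, its first bullet yields a single $\beta\in\A_1\cup\A_2$ with $\T\cup\{\beta\}\models\alpha$; since $\beta\in\A_1$ or $\beta\in\A_2$, this contradicts $\A_1\in\F$ or $\A_2\in\F$. For expansion I would first derive satisfiability of $\T\cup\A_1\cup\A_2\cup\N$: otherwise its second bullet, applied with ABox $\A_1\cup\A_2\cup\N$, gives $\beta_1,\beta_2$ with $\T\cup\{\beta_1,\beta_2\}$ unsatisfiable; if both lie in some $\A_i\cup\N$ we contradict $\A_i\in\F$, if both lie in $\A$ we contradict coherence of $\K$, and the mixed case $\beta_1\in\A_1$, $\beta_2\in\N$ is impossible because then already $\T\cup\{\beta_1\}\cup\N$ would be unsatisfiable, contradicting $\A_1\in\F$. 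Then I would upgrade satisfiability to coherence: every atomic symbol occurring in $\A_1\cup\A_2$ occurs positively there and so is nonempty in every model, while for a symbol $S$ occurring only in $\T\cup\N$ I would take the disjoint union of a model of $\T\cup\A_1\cup\A_2\cup\N$ with an isomorphic copy, supported away from the constants of $\K\cup\N$, of a model of $\T\cup\N$ witnessing $S\neq\eset$ — the very disjoint-union device used in the proof of Theorem~\ref{lem:no-disjunction-dllite}.

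The main obstacles I anticipate are ruling out the mixed case in Step~2 and the coherence-versus-satisfiability upgrade; everything else is bookkeeping resting on Proposition~\ref{lem:entailments-in-dl-lite} and on the fact that $\A$, being $\T$-closed and $\T$-consistent, harbours no contradiction on its own.
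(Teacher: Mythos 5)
Your proof is correct and follows essentially the same route as the paper's: both rest on the locality of conflicts and entailments guaranteed by Proposition~\ref{lem:entailments-in-dl-lite}, so that the unique maximal $\K_m$ is obtained by dropping exactly those ABox assertions that individually clash with (resp.\ individually entail) an element of $\N$. The paper states this in a single sentence; your union-closure formulation and the explicit satisfiability-to-coherence upgrade for the expansion case merely fill in details the paper leaves implicit.
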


\begin{proof}
Suppose $\K = \T\cup\A$.
Then $\K_m$ is obtained by dropping from $\K$,
for each $\beta\in\N$, all ABox assertions $\alpha\in\A$ such that
$\T\cup \set{\beta,\, \alpha}$ is unsatisfiable.
\end{proof}

%

The straightforward consequence of this property
        is that the classical formula-based approaches, \CrossP\ and \WIDTIO,
        and the proposed bold semantics coincide.
Also observe that ABox evolution under bold semantics
        becomes deterministic,
        so we will use the binary operators $\upop_b$ and $\controp_b$
        to designate ABox expansion and contraction, respectively,
        under bold semantics.

\begin{corollary}
\label{cor:three-mbas-coincide}
Let $\K = \T \cup \N$ be a \dlfr KB
and $\N$ a \dlfr ABox.
Then, assuming that $\T \cup \N$ is coherent, ABox expansion
        (resp., ABox contraction) under \CrossP, \WIDTIO, and bold semantics
        coincide.
\end{corollary}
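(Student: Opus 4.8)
The plan is to reduce the whole statement to Lemma~\ref{lem:ABox-evolution-unique-Km}: once we know that in the ABox-evolution setting the family of maximal subsets collapses to a single element, the three semantics must coincide, since their only difference lies in how they aggregate a \emph{plurality} of maximal subsets. Write $\K = \T\cup\A$. First I would record that, in the ABox-evolution reading, the extra requirement $\T\incl\K_m$ (legitimate by the two bullet observations preceding Proposition~\ref{lem:entailments-in-dl-lite}) restricts $\M_e(\K,\N)$ — and likewise $\M_c(\K,\N)$ — to exactly the maximal subsets that contain $\T$. By Lemma~\ref{lem:ABox-evolution-unique-Km} this restricted family is a singleton; call its unique element $\K_m^\ast$. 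Concretely, as in the proof of that lemma, $\K_m^\ast = \T\cup\A^\ast$, where $\A^\ast$ drops from $\A$, in the expansion case, every $\alpha\in\A$ such that $\T\cup\set{\beta,\alpha}$ is unsatisfiable for some $\beta\in\N$, and, in the contraction case, a suitable minimal set of assertions so that $\T\cup\A^\ast\not\models\alpha$ for each $\alpha\in\N$.

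Next I would instantiate the three definitions on the singleton $\M_e(\K,\N)=\set{\K_m^\ast}$. For expansion: the \CrossP\ result adds to $\N$ the disjunction $\bigvee_{\K_m\in\M_e(\K,\N)}(\bigwedge_{\phi\in\K_m}\phi)$, which for a one-element index set is logically equivalent to $\bigwedge_{\phi\in\K_m^\ast}\phi$, so $\upd{\K}{\N}{\CrossP}\equiv\N\cup\K_m^\ast$; the \WIDTIO\ result is $\N\cup\bigcap_{\K_m\in\M_e(\K,\N)}\K_m=\N\cup\K_m^\ast$; and the bold result is $\K'\equiv\K_m\cup\N$ for some $\K_m\in\M_e(\K,\N)$, which leaves no choice but $\K'\equiv\K_m^\ast\cup\N$. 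Hence all three agree up to $\equiv$. The contraction argument is identical with $\M_c(\K,\N)=\set{\K_m^\ast}$: \CrossP\ gives $\bigvee(\bigwedge_{\phi\in\K_m^\ast}\phi)\equiv\K_m^\ast$, \WIDTIO\ gives $\bigcap\set{\K_m^\ast}=\K_m^\ast$, and bold gives $\K'\equiv\K_m^\ast$.

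There is essentially no hard step here — the entire content resides in Lemma~\ref{lem:ABox-evolution-unique-Km} (itself a consequence of Proposition~\ref{lem:entailments-in-dl-lite}), and what remains is bookkeeping. The two points I would be careful about are: (i) the hypothesis that $\T\cup\N$ is coherent is precisely what guarantees $\K_m^\ast\cup\N$ is coherent in the expansion case, so that the bold and \CrossP\ outputs are bona fide \dlfr KBs; and (ii) the coincidence must be stated modulo logical equivalence rather than as syntactic identity, since \CrossP\ phrases its output with an explicit (here degenerate) disjunction that is not literally a \dlfr axiom set but is equivalent to the \dlfr KB $\N\cup\K_m^\ast$ — which is exactly what "coincide" should mean in this setting.
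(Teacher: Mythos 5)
Your proposal is correct and follows essentially the same route as the paper, which presents the corollary as an immediate consequence of Lemma~\ref{lem:ABox-evolution-unique-Km}: once the family of maximal subsets containing $\T$ collapses to a singleton, the three aggregation schemes trivially agree (up to logical equivalence, as you rightly note for the degenerate \CrossP\ disjunction). Your two cautionary points, on the role of the coherence hypothesis and on reading ``coincide'' modulo equivalence, are consistent with the paper's intent.
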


Next we study whether bold semantics
        satisfies the evolution postulates
        in the case of ABox evolution.

\begin{proposition}
\label{prop:bold-semantics-for-TBox-evolution-satisfies-postulates}
For ABox evolution of \dlfr KBs under bold semantics the following holds:
\begin{compactitem}
\item ABox expansion satisfies {\EP{1}}--\,{\EP{5}};
\item ABox contraction satisfies  {\CP{1}}--\,{\CP{3}} and \CP{5},
      but not \CP{4}.
\end{compactitem}
\end{proposition}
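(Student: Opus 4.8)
The plan is to reduce everything to results already established. By Lemma~\ref{lem:ABox-evolution-unique-Km}, in the ABox-evolution setting the family $\M_e(\K,\N)$ (resp.\ $\M_c(\K,\N)$) of maximal subsets of $\K$ that additionally contain $\T$ has exactly one element $\K_m$, so the bold operators become \emph{deterministic}: $\bupd{\K}{\N}\equiv\K_m\cup\N$ and $\contr{\K}{\N}{b}\equiv\K_m$. By Corollary~\ref{cor:three-mbas-coincide} these coincide with the ABox-adapted \CrossP{} and \WIDTIO{} operators, and one checks that the proof of Proposition~\ref{prop:classical-FBAs-satisfy-postulates} uses only that each $\K_m$ is a subset of $\K$ satisfying the relevant coherence (resp.\ non-entailment) condition together with maximality, none of which is disturbed by the extra demand $\T\incl\K_m$; so \EP{1}--\EP{5} for expansion and \CP{1}--\CP{3}, \CP{5} for contraction, and the failure of \CP{4}, carry over. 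Note also that for a deterministic operator \EP{4B}, \EP{5B}, \CP{5B} collapse to \EP{4}, \EP{5}, \CP{5}, so one could equally well start from Proposition~\ref{prop:bold-semantics-satisfies-postulates}.

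The route I would actually write out is a direct verification from the explicit description of $\K_m$ given in the proof of Lemma~\ref{lem:ABox-evolution-unique-Km}: $\K_m$ is obtained from $\K=\T\cup\A$ by retaining $\T$ and discarding exactly those $\alpha\in\A$ that are ``incompatible'' with $\N$, namely those for which $\T\cup\set{\beta,\alpha}$ is unsatisfiable for some $\beta\in\N$ in the expansion case, and those for which $\T\cup\set{\alpha}\models\beta$ for some $\beta\in\N$ in the contraction case. From this, \EP{1} and \EP{2} are immediate, since $\K_m\cup\N$ is coherent by construction and contains $\N$; likewise \CP{1} and \CP{2}, using $\K_m\incl\K$ and Proposition~\ref{lem:entailments-in-dl-lite} together with the fact that a \dlfr TBox entails no membership assertion, so that $\K_m\not\models\beta$ for every $\beta\in\N$. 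For \EP{3} (resp.\ \CP{3}): if $\K\models\N$ (resp.\ $\K\not\models\alpha$ for each $\alpha\in\N$), then no assertion of $\A$ is incompatible with $\N$, so $\K_m=\K$ and the result is $\equiv\K$. Finally \EP{5} and \CP{5} are trivial, since in the paper KBs are assumed closed and hence equivalent KBs are equal.

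The one postulate needing a genuine argument, and hence the main obstacle, is \EP{4}. The point is that the set of ABox assertions discarded in forming $\K_m$ is monotone in $\N$: if $\N_1\incl\N_1\cup\N_2$, then every $\alpha$ incompatible with some $\beta\in\N_1$ is incompatible with some $\beta\in\N_1\cup\N_2$, so the unique $\K_m^{(12)}$ for $\N_1\cup\N_2$ is a subset of the unique $\K_m^{(1)}$ for $\N_1$. Hence $\N_2\cup(\bupd{\K}{\N_1})\equiv\N_1\cup\N_2\cup\K_m^{(1)}$ contains, and therefore entails, $\N_1\cup\N_2\cup\K_m^{(12)}\equiv\bupd{\K}{\N_1\cup\N_2}$ (and if the left-hand side is incoherent the entailment is trivial), which is \EP{4}. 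The failure of \CP{4} is witnessed by essentially the example from Proposition~\ref{prop:classical-FBAs-satisfy-postulates}, now read as an ABox instance: take $\T=\set{A\ISA B}$, $\A=\set{A(a),B(a)}$ and $\N=\set{B(a)}$; then the unique $\K_m$ is $\T$, so $\contr{\K}{\N}{b}\equiv\set{A\ISA B}$, and $\set{A\ISA B}\cup\set{B(a)}\not\models A(a)$, whence $(\contr{\K}{\N}{b})\cup\N\not\models\K$. I expect the remaining care to go into the edge cases in \EP{3}/\CP{3} where $\K$ is assumed coherent, which is harmless, rather than into any substantive difficulty.
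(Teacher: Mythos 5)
Your first paragraph is exactly the paper's argument: the paper proves this proposition by citing Proposition~\ref{prop:bold-semantics-satisfies-postulates} together with the observation that when $\M_e(\K,\N)$ (resp.\ $\M_c(\K,\N)$) is a singleton the postulates \EP{4B}, \EP{5B}, \CP{5B} collapse to \EP{4}, \EP{5}, \CP{5}, and reuses the earlier \CP{4} counterexample. Your subsequent direct verification from the explicit form of the unique $\K_m$ (including the monotonicity argument for \EP{4} and the adapted \CP{4} counterexample) is correct but is just an unfolding of the same reduction, so the proposal matches the paper's proof.
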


\begin{proof}
The claim follows from
        Proposition~\ref{prop:bold-semantics-satisfies-postulates}
        and the observation that in the case
        when $\M_e(\K, \N)$ (resp., $\M_c(\K, \N)$)	is a singleton,
        \EP{\textit{i}B} implies \EP{\textit{i}} (resp., \CP{5B} implies \CP{5}).
The fact that contraction does not satisfy \CP{4}
        can be shown as in
        Proposition~\ref{prop:bold-semantics-satisfies-postulates}.
\end{proof}

In principle, \BE and \BC can be used to compute ABox evolution
        under bold semantics (and also \CrossP\ and \WIDTIO)
        with the only change in Line~1
        that we set $\K' := \T \cup \N$ in \BE
        and $\K' := \T$ in \BC.
Regardless of the order in which the algorithms select the assertions,
        they will always return the same result.
A drawback of the algorithms is that
        they respectively perform a coherence and entailment check during each
        loop iteration.
We exhibit now new algorithms \FE and \FC
        that do not perform those checks;
        instead, they perform checks at the syntax level.

\begin{algorithm2e}[t!]
\caption{$\FC(\K, \N)$}
\label{alg:fast-contraction}
\SetKwFor{ForEach}{for each}{do}{end}

\KwIn{
 \begin{tabular}[t]{@{}l}
   closed KB $\T \cup \A$;\\
   ABox $\N$
 \end{tabular}}
\KwOut{Abox $\A'$}

\BlankLine

$\A' := \A$\;
\ForEach
{$B_1(c) \in \N$}
{$\A' := \A' \setminus \set{B_1(c)}$\;
 \ForEach
 {$B_2 \ISA B_1 \in \T$}
 {$\A' := \A' \setminus \set{B_2(c)}$\;
  \If {$B_2(c) = \exists R(c)$}
  {\ForEach
   {$R(c,d) \in \A'$}
   {$\N := \N \cup \set{R(c,d)}$}
  }
 }
}

\ForEach
{$R_1(a,b) \in \N$}
{$\A' := \A' \setminus \set{R_1(a,b)}$\;
 \lForEach
 {$R_2 \ISA R_1 \in \T$}
 {$\A' := \A' \setminus \set{R_2(a,b)}$}
}

\Return $\A'$;
\end{algorithm2e}



We start with the algorithm \FC.
The algorithm (see Algorithm~\ref{alg:fast-contraction})
        works as follows:
        it takes as input a closed \dlfr KB $\T \cup \A$
        and a set of \dlfr ABox assertions $\N$,
        and returns as output an ABox $\A'$
        such that
        \begin{inparaenum}[\it (i)]
        \item $\A' \incl \A$ and
        \item $\T \cup \A' \not\models \alpha$ for each $\alpha \in \N$.
        \end{inparaenum}
Now we show the correctness of the algorithm.

\begin{theorem}
\label{thm:fast-contraction-correctness}
The algorithm \FC computes an ABox contraction under bold semantics,
        that is,
        $\contr{(T \cup \A)}{\N}{b} = \T \cup \FC(T \cup \A, \N)$,
        and runs in polynomial time.
\end{theorem}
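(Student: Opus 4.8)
The plan is to reduce the equality to a purely syntactic statement about single-assertion entailment in \dlfr and then check that \FC manipulates assertions exactly according to that statement. By Lemma~\ref{lem:ABox-evolution-unique-Km}, $\M_c(\T\cup\A,\N)$ has a unique element containing $\T$, so $\contr{(\T\cup\A)}{\N}{b}=\T\cup\A^*$, where $\A^*$ is the unique maximal subset $\A'\incl\A$ with $\T\cup\A'\not\models\beta$ for every $\beta\in\N$. Since $\T\cup\A$ is satisfiable, so is $\T\cup\A'$ for every $\A'\incl\A$, and Proposition~\ref{lem:entailments-in-dl-lite} then gives, for every such $\A'$ and every membership assertion $\beta$, that $\T\cup\A'\models\beta$ iff $\T\cup\set{\alpha}\models\beta$ for some $\alpha\in\A'$; hence a subset $\A'$ has the required property iff $\A'\incl\A^*$ with
\[
  \A^*=\set{\alpha\in\A \;\mid\; \T\cup\set{\alpha}\not\models\beta \text{ for all } \beta\in\N}.
\]
So it suffices to prove $\FC(\T\cup\A,\N)=\A^*$.

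Next I would make explicit which single membership assertions entail which, using the canonical model of~\cite{CDLLR07} and the hypothesis that $\T$ is closed. Chasing a concept assertion $B(c)$ creates role edges only to fresh elements, so $B(c)$ never entails w.r.t.\ $\T$ a role assertion between named constants; furthermore $\T\cup\set{B_2(c)}\models B_1(c)$ iff $B_2\ISA B_1\in\T$; $\T\cup\set{R_2(a,b)}\models R_1(a,b)$ iff $R_2\ISA R_1\in\T$; and $\T\cup\set{R(c,d)}\models B_1(c)$ iff $\exists R\ISA B_1\in\T$ (symmetrically for $\exists R^-$ and the constant $d$), since the concepts forced on $c$ by $R(c,d)$ are exactly those reachable from $\exists R$ along positive inclusions in $\cl(\T)=\T$. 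Closedness also puts $\exists R'\ISA\exists R\in\T$ whenever $R'\ISA R\in\T$, so role-inclusion effects are already visible at the concept level.

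With this I would prove the two inclusions. For $\FC(\T\cup\A,\N)\supseteq\A^*$ it is enough that every assertion \FC deletes entails some original member of $\N$ w.r.t.\ $\T$: a deleted $B_1(c)$ lies in $\N$; a deleted $B_2(c)$ has $B_2\ISA B_1\in\T$ for some $B_1(c)\in\N$; and a deleted role assertion $R_2(a,b)$ is removed because $R_1(a,b)\in\N$ with $R_2\ISA R_1\in\T$, where every assertion ever inserted into $\N$ is an $R(c,d)$ added while processing some $B_1(c)\in\N$ with $\exists R\ISA B_1\in\T$, so chaining the facts above shows $R_2(a,b)$ entails $B_1(c)\in\N$. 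For $\FC(\T\cup\A,\N)\incl\A^*$, i.e.\ every $\alpha$ with $\T\cup\set{\alpha}\models\beta$ for some $\beta\in\N$ is deleted, I would case-split on the forms of $\alpha$ and $\beta$ using the facts above: the concept/concept and role/role cases are handled directly by the two \texttt{for each} loops of Algorithm~\ref{alg:fast-contraction}, there is no concept/role case, and the remaining case $\alpha=R(c,d)$, $\beta=B_1(c)$ with $\exists R\ISA B_1\in\T$ is exactly why the branch ``\textbf{if} $B_2(c)=\exists R(c)$'' pushes every such $R(c,d)\in\A'$ into $\N$, after which the second loop deletes it; here one uses that during the first loop $\A'$ loses only concept assertions, so no relevant role atom $R(c,d)$ has disappeared prematurely. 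I expect the delicate point to be making this case analysis airtight, in particular the bookkeeping around inverse roles, chains of role inclusions, and the interaction between $\exists R(c)$ and $R(c,d)$; the rest is routine.

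Finally, for the running time: $\T$ is supplied already closed, so no closure need be computed; the first loop ranges over $\N$ together with at most $|\A|$ inserted role atoms, its middle loop over at most $|\T|$ inclusions, its innermost loop over at most $|\A|$ role atoms, and the second loop is analogous; every set update and every syntactic membership test is polynomial, so \FC runs in time polynomial in $|\T\cup\A\cup\N|$.
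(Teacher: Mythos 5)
Your proposal is correct and follows essentially the same route as the paper's proof: both arguments rest on Proposition~\ref{lem:entailments-in-dl-lite} (single-assertion witnesses for entailment) together with Lemma~\ref{lem:ABox-evolution-unique-Km}, and both reduce correctness to a case analysis matching each entailment pattern $\T\cup\set{\beta}\models\alpha$ to the line of \FC that deletes $\beta$, plus the converse maximality check that every deleted assertion was forced. Your upfront characterization of the result as $\A^*=\set{\alpha\in\A\mid \T\cup\set{\alpha}\not\models\beta\text{ for all }\beta\in\N}$ is just a cleaner packaging of the paper's two halves (non-entailment and maximality), and the delicate points you flag (inverse roles, the $\exists R(c)$ versus $R(c,d)$ interaction) are glossed over at the same level of detail in the paper's own proof.
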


\begin{proof}
The proof 
        is based on the proof of Lemma~\ref{lem:ABox-evolution-unique-Km}.
Let $\A' = \FC(T \cup \A, \N)$ and $\K' = \T \cup \A'$.
We show that $\K' \in \M_c(\K, \N)$.
First,
        we show that $\K' \not\models \alpha$ for each $\alpha \in \N$.
Indeed,
        assume that this is not the case and there is an $\alpha \in \N$
        such that $\K' \models \alpha$.
We know that
        there exists an inclusion assertion $\phi \in \T$
        and a membership assertion $\beta \in \A'$
        such that $\set{\phi, \beta} \models \alpha$.
We have five possible cases:
\begin{itemize}
\item $\alpha = \beta$.
In this case we have that $\beta$ was removed from $\A'$ at Line~3
        during the corresponding loop iteration.

\item $\alpha$ is of the form $B_1(c)$,
        $\beta$ is of the form $B_2(c)$, and
        $\phi$ is of the form $B_2 \ISA B_1$.
But then $\beta$ was removed from $\A'$ at Line~5.

\item $\alpha$ is of the form $\exists R(c)$,
        $\beta$ is of the form $R(c,d)$, and
        $\phi$ does not matter.
In this case we have that $\beta$ was added to $\N$ at Line~8
        and removed from $\A'$ at Line~14.

\item $\alpha$ is of the form $\exists R_1(c)$,
        $\beta$ is of the form $R_2(c,d)$, and
        $\phi$ is of the form $R_2 \ISA R_1$.
In this case we have that $R_1(c,d) \in \A$
        and it was added to $\N$ at line~8,
        and then $\beta$ was removed from $\A'$ at Line~15.

\item $\alpha$ is of the form $R_1(a,b)$,
        $\beta$ is of the form $R_2(a,b)$, and
        $\phi$ is of the form $R_2 \ISA R_1$.
But then $\beta$ was removed from $\A'$ at Line~15.
\end{itemize}
In any case we have a contradiction.

The maximality of $\K'$ follows straightforwardly from the following
observation:
        if a membership assertion $\beta$ is from $\K \setminus \K'$,
        then it was removed from $\A'$ at Line~3, 5, 14, or~15.
Then clearly, $\K' \cup \set{\beta} \models \alpha$
        for some $\alpha \in \N$,
        which shows the maximality of $\K'$ and concludes the proof.
\end{proof}

Now we turn to \FE (see Algorithm~\ref{alg:fast-expansion}).
First,
        the algorithm detects the assertions in $\A$
        that conflict with the new information $\N$
        and stores them in $\confa$.
Then it resolves these conflicts using \FC as a subroutine.
Finally, the algorithm returns the conflict-free part of $\A$ together with $\N$.

\begin{algorithm2e}[t!]
\caption{$\FE(\K, \N)$}
\label{alg:fast-expansion}
\SetKwFor{ForEach}{for each}{do}{end}

\KwIn{\begin{tabular}[t]{@{}l}
   closed KB $\T \cup \A$;\\
   ABox $\N$ s.t.\ $\T \cup \N$ is coherent
 \end{tabular}}
\KwOut{Abox $\A'$}

\BlankLine

$\N := \tcl(\N)$;\quad $\A_0 := \A \cup \N$;\quad $\confa := \eset$;

\ForEach
{$B_1 \ISA \lnot B_2 \in \T$}
{\If
 {$\set{B_1(c),\ B_2(c)} \incl \A_0$}
 {\lIf
  {$B_1(c) \notin \N$}
  {$\confa := \confa \cup \set{B_1(c)}$}
  \lElse
  {$\confa := \confa \cup \set{B_2(c)}$}
 }
}

\ForEach
{$\FUNCT{R} \in \T$}
{\If
 {$\set{R(a,b),\ R(a,c)} \incl \A_0$}
 {\lIf
  {$R(a,b) \notin \N$}
  {$\confa := \confa \cup \set{R(a,b)}$}
  \lElse
  {$\confa := \confa \cup \set{R(a,c)}$}
 }
}

\Return $\FC(\T \cup \A, \confa) \cup \N$;
\end{algorithm2e}



\begin{theorem}
\label{thm:fast-expansion-correctness}
The algorithm \FE computes an ABox expansion under bold semantics,
        that is, $\upd{(\T \cup \A)}{\N}{b} = \T \cup \FE(\T \cup \A, \N)$,
        and runs in polynomial time.
\end{theorem}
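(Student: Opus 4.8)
The plan is to reduce the claim to the structural facts about ABox evolution already established, and then check the running time. Write $\K=\T\cup\A$. By the definition of bold semantics together with Lemma~\ref{lem:ABox-evolution-unique-Km} (and Corollary~\ref{cor:three-mbas-coincide}), the bold ABox expansion $\upd{\K}{\N}{b}$ produces the KB $\K_m\cup\N$, where $\K_m$ is the unique element of $\M_e(\K,\N)$ with $\T\incl\K_m$. Reading off the proof of Lemma~\ref{lem:ABox-evolution-unique-Km}, and using that $\A$ and $\N$ are closed and that $\T\cup\N$ is coherent, one obtains the explicit description $\K_m=\T\cup(\A\setminus D)$ with
\[
  D=\{\,\alpha\in\A \mid \T\cup\{\alpha,\beta\}\text{ is unsatisfiable for some }\beta\in\N\,\}.
\]
On the algorithmic side, $\FE(\K,\N)$ returns $\FC(\T\cup\A,\confa)\cup\N$ (after $\N$ has been closed under $\T$-entailment), and by Theorem~\ref{thm:fast-contraction-correctness} and Lemma~\ref{lem:ABox-evolution-unique-Km} we have $\FC(\T\cup\A,\confa)=\A\setminus D'$ with
\[
  D'=\{\,\alpha\in\A \mid \T\cup\{\alpha\}\models\gamma\text{ for some }\gamma\in\confa\,\}.
\]
Hence it suffices to prove $(\A\setminus D)\cup\N=(\A\setminus D')\cup\N$.

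First I would normalise the situation. Since $\A$ and $\N$ are closed and, by Proposition~\ref{lem:entailments-in-dl-lite}, entailments and clashes in \dlfr are ``local'', the set $\cl_\T(\A\cup\N)$ used by $\FE$ coincides with $\A\cup\N$; write $\A_0$ for it. Moreover $\confa$ never contains an assertion of $\N$: in the ``else'' branch of a loop iteration both conflicting assertions would lie in $\N$, and by the shape of the triggering axiom (a negative inclusion or a functionality assertion of $\T$) this would contradict the coherence of $\T\cup\N$; thus $\confa\incl\A\setminus\N$.

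Next I would establish the two inclusions $D\incl D'$ and $D'\setminus\N\incl D$. For $D\incl D'$: if $\T\cup\{\alpha,\beta\}$ is unsatisfiable with $\beta\in\N$, then — since neither $\T\cup\{\alpha\}$ nor $\T\cup\{\beta\}$ alone is unsatisfiable (the former would contradict coherence of $\K$, the latter coherence of $\T\cup\N$) — the standard fact that an unsatisfiable \dlfr KB violates some negative inclusion $B_1\ISA\lnot B_2$ or functionality assertion $\FUNCT R$ of its TBox, combined with Proposition~\ref{lem:entailments-in-dl-lite}, shows that the clash is witnessed by one atom that is $\T$-entailed by $\alpha$ (hence lies in $\A\incl\A_0$) and one that is $\T$-entailed by $\beta$ (hence lies in $\N\incl\A_0$). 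The corresponding iteration of the $\FE$ loop — using that $\cl(\T)=\T$ also contains the symmetric inclusion $B_2\ISA\lnot B_1$ — then inserts the $\alpha$-side atom $\gamma$ into $\confa$, so $\T\cup\{\alpha\}\models\gamma$, i.e. $\alpha\in D'$. For $D'\setminus\N\incl D$: take $\alpha\in D'$ with $\alpha\notin\N$, so $\T\cup\{\alpha\}\models\gamma$ for some $\gamma\in\confa$; the atom $\gamma$ was put into $\confa$ because of a negative inclusion or functionality assertion of $\T$ together with a ``partner'' atom $\delta\in\A_0=\A\cup\N$. If $\delta\in\N$, the same axiom together with $\T\cup\{\alpha\}\models\gamma$ makes $\T\cup\{\alpha,\delta\}$ unsatisfiable, so $\alpha\in D$; the remaining possibility $\delta\in\A\setminus\N$ would put $\gamma$ and $\delta$, both in the closed ABox $\A$, in conflict through an axiom of $\T$, i.e. make $\K$ incoherent, which cannot happen for a coherent initial KB. From $D\incl D'$ and $D'\setminus\N\incl D$ we get $D\setminus\N=D'\setminus\N$, whence $x\in(\A\setminus D)\cup\N$ iff $x\in(\A\setminus D')\cup\N$ for every $x\in\A\cup\N$; this gives the desired identity $\T\cup\FE(\K,\N)=\K_m\cup\N$.

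Polynomiality is then routine: $\cl_\T(\cdot)$ is computable in quadratic time (Section~\ref{sec:Preliminaries}), the two loops of $\FE$ range over axioms of $\T$ and pairs of assertions of $\A_0$ and hence run in polynomial time, and the single call to $\FC$ runs in polynomial time by Theorem~\ref{thm:fast-contraction-correctness}. I expect the main obstacle to be exactly the two inclusions $D\incl D'$ and $D'\setminus\N\incl D$: they require translating the purely syntactic conflict detection of $\FE$ into the semantic notion of unsatisfiability, which rests on the precise characterisation of \dlfr clashes and on the locality Proposition~\ref{lem:entailments-in-dl-lite}; the delicate point is the bookkeeping that decides which of the two conflicting atoms, the ``$\alpha$-side'' or the ``$\beta$-side'' one, actually lands in $\confa$, for which one crucially uses that $\T$ is deductively closed (so both orientations of every negative inclusion are present) and that $\T\cup\N$ is coherent.
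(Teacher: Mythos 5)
Your proposal is correct and follows essentially the same route as the paper's proof: both reduce correctness to the pairwise-clash characterisation of \dlfr{} unsatisfiability, use the coherence of $\T\cup\N$ to argue that the algorithm's if/else always places the non-$\N$ member of each clashing pair into $\confa$, and then invoke the correctness of \FC{} together with the uniqueness of the maximal subset from Lemma~\ref{lem:ABox-evolution-unique-Km}. Your two inclusions $D\incl D'$ and $D'\setminus\N\incl D$ are just an explicit repackaging of the paper's two steps (consistency of $\K'\cup\N$ and maximality of $\K'$), so the argument is the same in substance, merely organised around set identities rather than a direct verification that the output lies in $\M_e(\K,\N)$.
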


\begin{proof}
Let $\K = \T \cup \A$, $\A' = \FC(\K, \confa)$,
        where $\confa$ is as built by the algorithm from Lines~1--13, and
        $\K' = \T \cup \A'$.
We show that $\K' \in \M_e(\K, \N)$.
First,
        we show that $\K' \cup \N$ is consistent.
Indeed,
        assume that this is not the case.
We know~\cite{CDLLR07} that there exists
        a TBox assertion $\phi$ of the form
        $B_1 \ISA \lnot B_2$ or $\FUNCT{R}$
        and a pair of membership assertions $\alpha \in \N$ and $\beta \in \A'$
        such that $\set{\phi, \alpha, \beta}$ is inconsistent.
We have two possible cases:
\begin{itemize}
\item $\phi$ is of the form $B_1 \ISA \lnot B_2$,
        $\alpha$ is of the form $B_1(c)$ (resp., $B_2(c)$), and
        $\beta$ is of the form $B_2(c)$ (resp., $B_1(c)$).
But then we have that $\beta$ was added to $\confa$
        at Line~5 (resp., Line~4) and removed from $\A$ by \FC.

\item $\phi$ is of the form $\FUNCT{R}$,
        $\alpha$ is of the form $R(a,b)$, and
        $\beta$ is of the form $R(a,c)$.
But then we have that $\beta$ was added to $\confa$
        at Line~10 or~11 and removed from $\A$ by \FC.
\end{itemize}
In any case,
        we have a contradiction.

The maximality of $\K'$ follows straightforwardly from the following
observation:
        if a membership assertion $\beta$ is from $\K \setminus \K'$,
        then it was removed from $\A$ by \FC
        and thus, added to $\confa$ at Line~4, 5, 10, or~11.
Then clearly, $\K' \cup \set{\beta} \cup \N$ is inconsistent,
        which shows the maximality of $\K'$ and concludes the proof.
\end{proof}

Note that both Algorithm~\ref{alg:fast-contraction} and
Algorithm~\ref{alg:fast-expansion}
        expect a closed KB $\K$ as input.
The algorithms can be optimised so as to deal with non-closed KBs.
However, this kind of optimisation is outside the scope of our work.


\section{Related Work}
\label{sec:related-work}

We provide an overview of related work, concentrating mostly on propositional
logic and on Description Logics.

\subsection{Evolution in Propositional Logic KBs}

One of the first systematic studies of knowledge evolution
        that set the foundations of the area has been conducted
        by Alchourr\'{o}n, G\"{a}rdenfors, \andauthor Makinson~\cite{AGM85}.
This work is commonly accepted as the most influential in the field
        of knowledge evolution and belief revision.
The reason is that
        it proposed, on philosophical grounds, a set of \emph{rationality postulates}
        that the operations of \emph{revision} (adding information) and \emph{contraction} (deleting information) must satisfy.
Note that it used the term revision instead of expansion,
        which is used in this paper,
        and,
        in fact, that term is more commonly found in the literature.
The postulates were well accepted by the research community and
        nowadays they are known as \emph{AGM postulates}, named after the three authors who proposed them.

Dalal~\cite{Dala88}
        introduced the \emph{principle of irrelevance of syntax},
        which states that the KB resulting from evolution should not depend on
        the syntax
        (or representation) of the old KB and the new information.
A~number of evolution approaches that meet the AGM postulates
        as well as Dalal's principle
        were proposed in the literature;
        the most well-known are by
        Fagin, Ullman, \andauthor Vardi~\cite{FaginUV:1983pods},
        Borgida~\cite{Borg85},
        Weber~\cite{Weber:1986eds},
        Ginsberg~\cite{Ginsberg:1986ai},
        Dalal~\cite{Dala88},
        Winslett~\cite{Winslett:1988aaai},
        Satoh~\cite{Sato88}, and
        Forbus~\cite{Forb89}.

Winslett~\cite{Winslett:1988pods,Wins90} proposed the classification of
evolution semantics into \emph{model-based} semantics and
\emph{formula-based} semantics, which is the distinction that we have
adopted in this paper.
%
%
The operators from~\cite{FaginUV:1983pods,Ginsberg:1986ai}
        fall into the latter category,
        while the rest of the works cited above fall into the former category.

Katsuno \andauthor Mendelzon~\cite{KatsunoM:1989ijcai}
        gave a model-theoretic characterisation
        of model-based revision semantics that satisfied the AGM postulates.
Keller \andauthor Winslett~\cite{KellerW:1985tse} introduced a taxonomy of
knowledge evolution
        that is orthogonal to the one in~\cite{Wins90}.
They distinguished two types of adding information
        in the context of extended relational databases:
        \emph{change-recording updates} and \emph{knowledge-adding updates}.
Later on Katsuno \andauthor Mendelzon~\cite{KaMe91}
        extended this work to the evolution of KBs,
        referring to change-recording updates as \emph{updates}
        and to knowledge-adding updates as \emph{revision}.
Intuitively, an update brings the KB up to date
        when the real world changes.
The statement ``John got divorced and now he is a priest'' is an example of an
update.
Instead, revision is used when one obtains some new information about a static
world.
For example, we may try to diagnose a disease and we want to incorporate
        into the KB the result of successive tests.
Incorporation of these tests is revision of the old knowledge.
Both update and revision have applications where one is more suitable
than the other.
Moreover, Katsuno \andauthor Mendelzon showed that
        the AGM postulates and the model-theoretic characterisation
        of~\cite{KatsunoM:1989ijcai} are applicable to revision only.
To fill the gap,
        they provided postulates and a model-theoretic characterisation for
        updates~\cite{KaMe91}.
Their model-theoretic characterisation became prevalent
        in the KB evolution and belief revision literature.

%
%

\subsection{Evolution of Description Logic KBs}

Much less is known about the evolution of Description Logic knowledge bases
than about the evolution of propositional logic, and the study of the topic is
rather fragmentary.

Kang \andauthor Lau~\cite{KangL:2004kes} discussed the feasibility of using the concept of belief revision
        as a basis for DL ontology revision.
Flouris, Plexousakis, \andauthor Antoniou~\cite{FlourisPA:2004nmr,FlourisPA:2005iswc}
        generalised the AGM postulates
        in order to apply the rationalities behind the AGM postulates
        to a wider class of logics,
        and determined the necessary and sufficient conditions for a logic
        to support the AGM postulates.
However, none of~\cite{KangL:2004kes,FlourisPA:2004nmr,FlourisPA:2005iswc}
        considered the explicit construction of a revision operator.
Qi, Liu, \andauthor Bell~\cite{QiLB:2006jelia} reformulated the AGM postulates for revision
        and adapted them to deal with disjunctive KBs expressed in the
        well-known DL $\mathcal{ALC}$ .

Later, Qi, et al.~\cite{QiHHJPV08}
        proposed a general revision operator to deal with incoherence.
However, this operator is not fine-grained, in the sense that it removes from a KB
        a whole TBox axiom by an incision function
        as soon as it affects the KB's coherency.

Haase \andauthor Stojanovic~\cite{HaSt05} proposed a formula-based approach for
ontologies in OWL-Lite (which is a DL that is much more expressive than
\dllite), where the removal of inconsistencies between the old and the new
knowledge is strongly syntax-dependent.  Notice instead that our formula-based
semantics are syntax independent.

Liu et al.~\cite{DBLP:journals/ai/LiuLMW11} considered several standard DLs of
the $\mathcal{ALC}$ family~\cite{BCMNP03}, and studied the problem of ABox
updates with empty TBoxes, in the case where the new information consists of
atomic (possibly negated) ABox statements.  They showed that these DLs are not
closed even under simple updates.  However, when the DLs are extended with
nominals and the ``@'' constructor of hybrid logic~\cite{ArecesR:2000aiml}, or,
equivalently, admit nominal and Boolean ABoxes, then updates can be expressed.
They also provided algorithms to compute updated ABoxes for several expressive
DLs and studied the size of the resulting ABoxes. They showed that in general
such ABoxes are exponential in the size of the update and the role-nesting
depth of the original ABox, but that the exponential blowup can be avoided by
considering so-called \emph{projective updates}.  They also consider
conditional updates and how they can be applied to the problem of reasoning
about actions.

The latter problem is also the motivation for Ahmetaj et al.~\cite{ACOS17}, who
study the evolution of extensional data under integrity constraints formulated
in very expressive DLs of the $\mathcal{ALC}$ family, and in \dllite.  The
updates are finite sequences of conditional insertions and deletions, where
complex DL formulas are used to select the (pairs of) nodes for which (node or
arc) labels are added or deleted.  The updates are finite sequences of
conditional insertions and deletions, in which complex DL formulas are used to
select the (pairs of) individuals to insert or remove from atomic
concepts/roles.  The paper studies the complexity of verifying when a sequence
of update operations preserves the integrity constraints, by using a form of
regression that reduces the problem to satisfiability checks over the initial
KB.  \cite{CaOS16} extends the results on verification to the case where the DL
may contain constructs for path-like navigation over the data.

Qi \andauthor Du~\cite{QiDu09} considered a model-based revision operator for
DL terminologies
        (i.e., KBs with empty ABoxes) by adapting Dalal's operator.
They showed that subsumption checking in \dlcore under their revision operator
        is $\mathtt{P}^{\mathtt{NP}[O(\log n)]}$-complete
        and provided a polynomial time algorithm to compute
        the result of revision for a specific class of input KBs.
Observe that with the same argument as the one we used in the proof of
Theorem~\ref{thm:tbox-mbs-contraction-inexpressible-dllite},
one can show that the expansion operator $\mdalop$ of~\cite{QiDu09}
(and its stratified extension $\sdalop$),
is not expressible in \dlfr.
This operator is a variant of $\glob^\sbs_\cardin$,
where in Equation~\eqref{eq:global-evolution}
one considers only models $\J\in\Mod(\N)$ that satisfy
$A^{\J}\neq\emptyset$ for every $A$ occurring in $\K\cup\N$.
The modification does not affect the inexpressibility,
which can again be shown using
Example~\ref{ex:rev-operator-inexpr}.
We also note that $\mdalop$ was developed for KB expansion
with empty ABoxes and the inexpressibility comes
from the non-empty ABox.

De Giacomo et al.~\cite{DLPR09} considered ABox-update and erasure for the DL \dlf.
They considered Winslett's approach
        (originally proposed for relational theories~\cite{Wins90})
        and showed that \dlf is not closed under ABox-level update and erasure.
The results in Section~\ref{sec:mbas} extend these results in the following
directions:
\begin{inparaenum}[\itshape (i)]
        \item we showed new inexpressibility results for many other operators,
including the operator from~\cite{DLPR09}, and
        \item we considered both expansion and contraction at both KB and ABox level.
\end{inparaenum}

Wang, Wang, \andauthor Topor~\cite{WangWT:2010aaai}
        introduced a new semantics for DL KBs and
        adapted to it the MBA.
In contrast to classical model-based semantics,
where evolution is based on manipulation with first-order interpretations,
their approach is based on manipulation of so-called \emph{features},
which are similar to models.
In contrast to models,
        features are always of finite size
        and any DL KB has only finitely many features.
They applied feature-based semantics to $\dllite^\N_{\textit{bool}}$ \cite{ACKZ09},
        and it turned out that the approach suffers
        from the same issues as classical model-based semantics.
        For example,
        DLs are not closed under these semantics
        even for simple evolution settings.
Due to these problems, they addressed approximation of evolution semantics,
        but it turned out to be intractable.
We conjecture that their semantics fits into our framework or Section~\ref{sec:mbas}
        after a suitable extension,
        but our work does not extend their results.
However, observe that the inexpressibility results of~\cite{WangWT:2010aaai}
reaffirm our arguments in Section~\ref{sec:mbas}, where we argued that
model-based approaches suffer from intrinsic expressibility problems.

Lenzerini and Savo~\cite{LenzeriniS:2011dl} considered the
        ``when in doubt throw it out'' (\WIDTIO) approach
        for the case of \textit{DL-Lite}$_{A, \textit{id}}$
        and presented a polynomial time algorithm for computing the evolution of KBs at the instance-level.
Qi et al.~\cite{Qi:2015aaai}
        considered the problem of computing a maximal sound approximation
        of \dlfr KB expansions for two model-based operators.
De Giacomo et al.~\cite{DBLP:conf/semweb/GiacomoORS16} took a different
approach to instance-level formula-based update of \dllite KBs:
given an update specification, they rewrite it into a set of addition and deletion instructions over the ABox, which can be characterized as the result of a first-order query.
This was proved by showing that every update can be reformulated into a Datalog program that generates the set of insertion and deletion instructions to change the ABox while preserving its consistency w.r.t.\ the TBox.
De Giacomo et al.~\cite{DBLP:conf/semweb/GiacomoLOST17} looked at practical
aspects of ontology update management in the context of ontology-based data
access, where ontologies are `connected' to relational data sources via
declarative mappings \cite{XCKL*18}.  In this scenario they study changes or
evolution that affect ontologies and the source data and show how changes can
be computed via non-recursive Datalog.

\subsection{Consistent Query Answering Over Inconsistent KBs}
Knowledge evolution is closely related to consistent
query answering over inconsistent KBs, see e.g.~\cite{DBLP:conf/aaai/Bienvenu12,DBLP:journals/ws/LemboLRRS15,DBLP:conf/ijcai/BienvenuR13,DBLP:conf/aaai/Bienvenu12,DBLP:conf/ijcai/BienvenuBG16},
where the goal is, given a query $Q$ and an inconsistent KB $\K$,
to retrieve `meaningful' answers for $Q$ over $\K$.%
\footnote{Note that since \K is inconsistent it holds that $\K \models Q(\vec
 c)$ for every tuple $\vec c$ of constants with the $\textit{arity}(Q)$ and
 thus every tuple of constants of the appropriate arity is an answer to $Q$
 over $\K$.}
This problem has originally been introduced in the context of databases~\cite{DBLP:conf/pods/ArenasBC99}
and then adapted to KBs.
Meaningful answers are typically defined
using the notion of \emph{repairs}: a KB $\K'$ is a repair of $\K$
if it is consistent and can be obtained by `modifying' $\K$,
e.g., by taking a (set-inclusion maximal) consistent subset of \K (or its deductive closure).
Then, semantics of $Q$ over $\K$ is defined as the intersection of $\textit{ans}(Q,\K')$ over all repairs $\K'$ of $\K$ that are optimal w.r.t.\ some criterion.
Thus, query answering over inconsistent KBs is related to formula-based approaches to evolution, and in particular to \WIDTIO, while to the best of our knowledge no work considers MBAs to KB repair.
Observe that results analogous to our \conptime-completeness of \WIDTIO\
(see Theorem~\ref{thm:widtio-conpcomplete}), which we first reported
in~\cite{CalvaneseKNZ:2010iswc}, have been shown in the context of consistent
query answering after our work has been published, e.g.,
in~\cite{DBLP:conf/ijcai/BienvenuBG16,DBLP:conf/aaai/Bienvenu12,DBLP:journals/ws/LemboLRRS15}.

\subsection{Justification and Pinpointing}
Approaches to knowledge evolution that are often used in
practice, in particular for TBox evolution,
are essentially syntactic~\cite{HaSt05,DBLP:conf/esws/KalyanpurPSG06,DBLP:journals/dke/Jimenez-RuizGHL11}.
Many of them are based on \emph{justification} or \emph{pinpointing}: a
minimal subset of the ontology that entails a given consequence~\cite{DBLP:conf/kr/PenalozaS10,DBLP:journals/ws/KalyanpurPSH05,DBLP:conf/ijcai/SchlobachC03,DBLP:journals/jar/SchlobachHCH07,DBLP:conf/semweb/KalyanpurPHS07}.
For example, to contract $\K$ with an assertion $\phi$ entailed by $\K$,
it suffices to compute all justifications for $\phi$ in $\K$, find a minimal subset $\K_1$ of $\K$ with at least one assertion from each justification,
and take $\K' = \K \setminus \K_1$ as the result of evolution.
This complies with a `syntactical' notion of minimal
change: retracting $\phi$ requires to delete a minimal
set of assertions from \K and hence the structure of $\K$ is maximally
preserved. Moreover, such $\K'$ always exists even for expressive
DLs, and practical algorithms to compute it have been implemented
in ontology development platforms~\cite{DBLP:conf/semweb/KalyanpurPHS07,DBLP:conf/aswc/SuntisrivarapornQJH08}. By removing $\K_1$ from $\K$, however,
we may inadvertently retract consequences of $\K$
other than $\phi$, which are `intended'.
Identifying and recovering such intended consequences is an important issue.
Evolution approaches considered in our work are logic-based rather than
syntactic.
Cuenca Grau et al.~\cite{DBLP:conf/kr/GrauJKZ12} present
a framework to bridge the gap between logic-based and syntactic evolution
approaches.
In particular they propose a new principle of minimal change that has two
dimensions: a structural one ($\K'$ should not change much the structure of
$\K$) and a deductive one (that corresponds to the one we have for formula
based evolution).
Their work is focused on the DLs of the $\mathcal{EL}$ family and
does not consider model based evolution, which is crucial in our study.
Moreover their evaluation algorithm for what they call \emph{finite
 preservation languages} (\dllite is included in this case) corresponds to a
combination of our \BE and \BC algorithms.

\subsection{Diagnosis and Debugging}
In diagnosis and debugging~\cite{DBLP:conf/sum/HorridgePS09,DBLP:journals/ws/KalyanpurPSH05,DBLP:conf/ijcai/SchlobachC03} the goal is to find the KB assertions that cause inconsistency.
This is relevant since, e.g., a formula-based expansion of a KB with new knowledge can lead to its inconsistency and thus debugging techniques can help in finding what causes this inconsistency.
There are attempts to relate these areas and KB evolution, e.g., Ribeiro and
Wassermann~\cite{DBLP:journals/logcom/RibeiroW09}
show how debugging services can be linked to belief revision.  However, further
investigation is required to gain a deeper understanding of the relation.

\section{Conclusions and Future Work}
\label{sec:conclusion}

In this paper we have studied evolution of \dllite KBs, taking into account
both expansion and contraction.  We have considered two main families of
approaches: model-based ones and formula-based ones.
We have singled out and investigated a three-dimensional space of model-based
approaches,
  and have proven that most of them are not appropriate for \dllite,
  due to their counterintuitive behavior
  and the inexpressibility of evolution results.
Thus, we have examined formula-based approaches,
have shown that the classical ones are again inappropriate for \dllite,
and have proposed a novel semantics called \emph{bold}.
We have shown that this semantics can be computed in polynomial time,
  but the result is, in general, non-deterministic.
Then, we have studied ABox evolution under \BS and
  have shown that in this case the result is unique.
We have developed polynomial time algorithms for \dllite KB expansion and
contraction under this semantics, and
alternative optimized variants of the algorithms for ABox evolution.

The first important conclusion from our work is
that model-based approaches are intrinsically problematic for KB evolution,
even in the case of such a lightweight DL as \dllite.
Indeed, recall that \dllite is not closed under evolution for \emph{any} of
the model-based semantics and thus these semantics are impractical.
As a consequence, one has either
to search for conceptually different semantics
that rely on other principles of `composing'
the output set of models constituting the evolution result,
or one has to develop natural restrictions on how model-based approaches can
`compose' this set.
An alternative approach would be to develop approximation techniques
that allow one to efficiently capture evolution results.

A second important conclusion is that classical formula-based approaches are
too heavyweight from the computational point of view and thus their
practicality is questionable.
On the other hand, the most conceptually simple model-based semantics such as
\BS
can potentially lead to practical evolution algorithms.  However, their
practicality requires further empirical evaluation.
Finally, we have discussed that the classical evolution postulates that were originally developed for propositional theories
are not directly applicable to the case of first-order knowledge
since they are blind to some fundamental properties of such knowledge, such as
coherency.
We have shown how to adapt such postulates to the richer setting considered
here, and have analyzed whether the various model-based and formula-based
semantics satisfy the revised postulates.

We believe that our work opens new avenues for
        research in the area of knowledge evolution,
        which is an important part of knowledge engineering,
        since it shows how to lift approaches to knowledge evolution
        from the propositional to the first-order case.
Moreover, we have presented techniques that allow one to
        prove inexpressibility of model-based evolution,
        and  \conptime-hardness of formula-based evolution.
We believe that these techniques can be relevant
        to knowledge management tasks beyond evolution.

We see several important directions for future work.
First, the problem of expressibility in \dllite is still open for various
model-based evolution semantics (see Table 1).  These settings are all for ABox
expansion and contraction under global model-based semantics.
An important research direction is to apply in practice the ideas we developed and, in particular, to implement an ontology evolution system.
The system can be based on formula-based approaches and implement
Algorithms~1--4 that we proposed.
Such system could also be based on approximations of model-based
semantics, which are out of the scope of this paper, see,
e.g.,~\cite{DBLP:journals/jcss/KharlamovZC13,DLPR09}.
Then, it would be interesting to conduct an empirical evaluation for various
semantics, in order to establish
which semantics give more intuitive results from the users' point of
view, and which ABox evolution approaches are more scalable.
A further direction to investigate is to identify the minimum extensions of
\dllite that would allow it to capture the results of model-based evolution for
\dllite KBs.
For this, one can draw inspiration from the work
in~\cite{DBLP:journals/ai/LiuLMW11}, already discussed in
Section~\ref{sec:related-work}.
Also, it is still unknown what are minimal DLs
that are closed under local model-based evolution, and in general that are well
tailored towards model-based approaches.
Finally, we believe that it is important to develop knowledge evolution
techniques where the user has a much better control over the evolution process.
For this, one can draw inspiration from previous work,
e.g.,
from~\cite{DBLP:conf/kr/GrauJKZ12}, where the authors proposed techniques to  control what syntactic structures of a given KB cannot be changed by the evolution process,
or from~\cite{DBLP:conf/aaai/ZheleznyakovKH17}, where the authors proposed to combine knowledge evolution with models of trust, i.e., the new knowledge in their approach is only partially trusted (note that this scenario inherits the inexpressibility issues of MBAs).

\paragraph{Acknowledgements}

This work was partially funded by the EPSRC projects MaSI$^3$, DBOnto, ED$^3$,
by the UNIBZ projects PARCIS and TaDaQua, and by the ``European Region
Tyrol-South Tyrol-Trentino'' (EGTC) under the first call for basic research
projects within the Euregio Interregional Project Network IPN12
``Knowledge-Aware Operational Support'' (KAOS).

\biboptions{sort} 
\bibliographystyle{elsarticle-num}
\bibliography{refs}


\end{document}